\documentclass{article}

\usepackage{microtype}
\usepackage{graphicx}
\usepackage{subcaption}
\usepackage{booktabs} %

\usepackage[utf8]{inputenc} %
\usepackage[T1]{fontenc}    %
\usepackage{graphicx} %
\usepackage[colorlinks=True,citecolor=blue,urlcolor=blue,pagebackref=true,backref=true,linkcolor=blue,filecolor=blue]{hyperref}
\renewcommand*{\backrefalt}[4]{%
    \ifcase #1 \footnotesize{(Not cited.)}%
    \or        \footnotesize{(Cited on page~#2.)}%
    \else      \footnotesize{(Cited on pages~#2.)}%
    \fi}

\usepackage{url}            %
\usepackage{booktabs}       %
\usepackage{amsfonts}       %
\usepackage{nicefrac}       %
\usepackage{microtype}      %
\usepackage{xcolor} 
\usepackage{colortbl}
\usepackage{natbib}
\usepackage[table]{xcolor}
\let\oldaddcontentsline\addcontentsline 

\usepackage[accepted]{Styles/icml2026}

\let\addcontentsline\oldaddcontentsline

\usepackage{amsmath}
\usepackage{amssymb}
\usepackage{mathtools}
\usepackage{amsthm}

\usepackage[capitalize,noabbrev]{cleveref}

\usepackage[textsize=tiny]{todonotes}

\newcommand{\mytitle}{%
\textsc{WildCat}: Near-Linear Attention in Theory and Practice
}
\icmltitlerunning{\mytitle}

\usepackage{amssymb,amsthm, amsmath}
\usepackage{thmtools}
\usepackage{url,nicefrac}
\usepackage[ruled,algo2e]{algorithm2e}%
\usepackage{etoolbox}
\usepackage{wrapfig}
\makeatletter
\patchcmd{\@algocf@start}%
  {-1.5em}%
  {0pt}%
  {}{}%
\makeatother
\usepackage{siunitx}
\usepackage{subcaption}   %
\usepackage{float}        %
\usepackage{nameref}
\usepackage{enumitem}
\usepackage[usestackEOL]{stackengine}
\usepackage{booktabs} %
\usepackage{mathtools}
\usepackage{etoc} %
\usepackage{dsfont}
\usepackage{enumitem}  
\usepackage{wrapfig}
\usepackage{graphicx}
\usepackage{multirow}

\usepackage{autonum}

\newtheorem{theorem}{Theorem}
\newtheorem{proposition}{Proposition}
\newtheorem{lemma}{Lemma}
\newtheorem{corollary}{Corollary}
\newtheorem{definition}{Definition}

\newtheorem{remark}{Remark}

\usepackage[capitalize]{cleveref}
\crefname{equation}{}{}
\crefname{lemma}{Lem.}{Lems.}
\crefname{theorem}{Thm.}{Thms.}
\crefname{corollary}{Cor.}{Cors.}
\crefname{algorithm}{Alg.}{Algs.}
\crefalias{algocf}{algorithm}

\crefname{appendix}{App.}{Apps.}
\crefname{example}{Ex.}{Exs.}
\crefname{section}{Sec.}{Secs.}
\crefname{table}{Tab.}{Tabs.}
\crefname{remark}{Rem.}{Rems.}
\crefname{definition}{Def.}{Defs.}
\crefname{proposition}{Prop.}{Props.}
\crefname{figure}{Fig.}{Figs.}

\newcommand\samplempwid{0.16}
\newcommand\samplehinterval{0.05cm}

\newcommand{\eps}{\varepsilon}

\def\balign#1\ealign{\begin{align}#1\end{align}}
\def\baligns#1\ealigns{\begin{align*}#1\end{align*}}
\def\balignat#1\ealign{\begin{alignat}#1\end{alignat}}
\def\balignats#1\ealigns{\begin{alignat*}#1\end{alignat*}}
\def\bitemize#1\eitemize{\begin{itemize}#1\end{itemize}}
\def\benumerate#1\eenumerate{\begin{enumerate}#1\end{enumerate}}

\newenvironment{talign*}
 {\let\displaystyle\textstyle\csname align*\endcsname}
 {\endalign}
\newenvironment{talign}
 {\let\displaystyle\textstyle\csname align\endcsname}
 {\endalign}

\def\balignst#1\ealignst{\begin{talign*}#1\end{talign*}}
\def\balignt#1\ealignt{\begin{talign}#1\end{talign}}
\newcommand{\qtext}[1]{\quad\text{#1}\quad} 
 
\newcommand{\stext}[1]{\ \text{#1}\ } 
\newcommand{\sstext}[1]{\ \ \text{#1}\ \ }

\let\originalleft\left
\let\originalright\right
\renewcommand{\left}{\mathopen{}\mathclose\bgroup\originalleft}
\renewcommand{\right}{\aftergroup\egroup\originalright}

\def\Holder{H\"older\xspace}

\def\Nystrom{Nystr\"om\xspace}

\def\tinycitep*#1{{\tiny\citep*{#1}}}
\def\tinycitealt*#1{{\tiny\citealt*{#1}}}
\def\tinycite*#1{{\tiny\cite*{#1}}}
\def\smallcitep*#1{{\scriptsize\citep*{#1}}}
\def\smallcitealt*#1{{\scriptsize\citealt*{#1}}}
\def\smallcite*#1{{\scriptsize\cite*{#1}}}

\def\mbf#1{\mathbf{#1}}
\def\mbb#1{\mathbb{#1}}
\def\mc#1{\mathcal{#1}}

\def\tbf#1{\textbf{#1}}

\def\reals{\mathbb{R}} %
\def\naturals{\mathbb{N}} %

\def\<{\left\langle} %
\def\>{\right\rangle}

\def\iff{\Leftrightarrow}

\def\defeq{\triangleq} %
\newcommand{\boldone}{\mbf{1}} %
\newcommand{\boldzero}{\mbf{0}} 
\def\norm#1{\|{#1}\|} %
\newcommand{\twonorm}[1]{\norm{#1}_2} %
\newcommand{\opnorm}[1]{\norm{#1}_{\mathrm{op}}} %
\newcommand{\sopnorm}[1]{\staticnorm{#1}_{\mathrm{op}}} %
\newcommand{\fronorm}[1]{\norm{#1}_{F}} %
\def\staticnorm#1{\|{#1}\|} %
\newcommand{\inner}[2]{\langle{#1},{#2}\rangle} %
\def\E{\mbb{E}} %
\def\Earg#1{\E\left[{#1}\right]}

\newcommand{\dd}[1]{\frac{\mathrm d}{\mathrm d #1}}
\newcommand{\pdd}[1]{\frac{\mathrm \partial}{\mathrm \partial #1}}

\providecommand{\diag}{\mathop\mathrm{diag}}

\def\rank#1{\mathrm{rank}({#1})}

\ifdefined\nonewproofenvironments\else
\ifdefined\ispres\else
\fi

\newcommand{\ncref}[1]{\cref{#1}: \nameref*{#1}} %

\newcommand{\pcref}[1]{Proof of \ncref{#1}} %

\newcommand{\tablelineskip}{2.5mm}
\newcommand{\tabletopskip}{-4mm}

\def \wtdattn{\hyperref[alg:weighted-coreset-attention]{\textsc{WtdAttn}}\xspace}

\newcommand{\rpnys}{\hyperref[alg:rpnys]{\textsc{RPNys}}\xspace}
\newcommand{\rpnysnolink}{\textsc{RPNys}\xspace}

\newcommand{\cattnolink}{\textsc{WildCat}\xspace}
\newcommand{\catt}{\hyperref[alg:cmpd-attn]{\textsc{WildCat}}\xspace}
\newcommand{\compresskv}{\hyperref[alg:compresskv]{\textsc{CompressKV}}\xspace}
\newcommand{\compresskvnolink}{\textsc{CompressKV}\xspace}

\def \proj{\mathbf{P}}

\def\paren#1{\left({#1}\right)}

\def\nuc#1{\mathrm{tr}\left({#1}\right)}
\def\norm#1{\left\|{#1}\right\|} %

\newcommand{\maxnorm}[1]{\norm{#1}_{\max}} %
\newcommand{\smaxnorm}[1]{\staticnorm{#1}_{\max}}
\newcommand{\specnorm}[1]{\opnorm{#1}} %
\newcommand{\sspecnorm}[1]{\sopnorm{#1}} %
\newcommand{\rownorm}[1]{\staticnorm{#1}_{2,\infty}} %

\def \hatt{h}
\def \hnys{h_{\mathrm{nys}}}
\def \hres{h_{\mathrm{res}}}

\def \Hinv{\mathbf M}
\def \Hcore{\mathbf R}

\def \queries{\mathbf{q}}
\def \keys{\mathbf{k}}
\def \values{\mathbf{v}}
\def \vmax{\values_{\textrm{max}}}
\def \vmin{\values_{\textrm{min}}}

\def \nout{n_{\mathrm{out}}}

\DeclarePairedDelimiterXPP\LambW[1]{W_0}{(}{)}{}{#1}
\DeclarePairedDelimiterXPP\LambWm[1]{W_{-1}}{(}{)}{}{#1}

\def \CAtt{\widehat{\mathbf O}}

\DeclarePairedDelimiterX{\set}[1]{\{}{\}}{\setargs{#1}}
\NewDocumentCommand{\setargs}{>{\SplitArgument{1}{;}}m}
{\setargsaux#1}
\NewDocumentCommand{\setargsaux}{mm}
{\IfNoValueTF{#2}{#1} {#1\nonscript\:\delimsize\vert\allowbreak\nonscript\:\mathopen{}#2}}%

\def\bigO{O}

\newcommand{\A}{\mbf A}

\newcommand{\B}{\mbf B}
\newcommand{\K}{\mbf K}
\newcommand{\Q}{\mbf Q}
\newcommand{\V}{\mbf V}
\newcommand{\U}{\mbf U}

\newcommand{\Hc}{\mbf H}
\newcommand{\Hhat}[1][r]{\widehat{\mbf H}^{#1}}

\newcommand{\W}{\mbf W}

\newcommand{\T}{\mbf T}

\newcommand{\D}{\mbf{D}}

\newcommand{\hatD}{\mbf{\widehat{D}}}
\newcommand{\hatA}{\widehat{\A}}

\newcommand{\Dinv}{\D^{-1}}

\newcommand{\hatDinv}{\hatD^{-1}}

\newcommand{\coreset}{\K_{\mathcal S}}
\newcommand{\dataset}{\K}

\newcommand{\error}[1][]{\smaxnorm{\mathbf O - \widehat {\mathbf O}_{#1}}}

\def \x{\mathbf{x}}
\def \y{\mathbf{y}}

\def \e{\mathbf{e}}
\def \a{\mathbf{a}}
\def \b{\mathbf{b}}
\def \g{\mathbf{g}}

\def \p{\mathbf{p}}

\def \w{\mathbf{w}}
\def \u{\mathbf{u}}

\def \diag{\mathrm{diag}}

\def \res{\mathrm{res}}

\def \Hres{\mbf{H}_\res}

\def \Knorm{\rownorm{\K}}

\def \Vnorm{\maxnorm{\V}}

\def \RK{R_\K}
\def \RQ{R_\Q}

\DeclarePairedDelimiter{\floor}{\lfloor}{\rfloor}

\newcommand{\dblue}[1]{\textcolor{blue!50!black}{{\textsc{#1}}}}
\newcommand{\bacronym}{\dblue{W}eighted \dblue{I}terative \dblue{L}ow-rank \dblue{D}ecomposition for \dblue{C}oreset \dblue{At}tention\xspace}

\renewcommand{\O}{\mathbf O}
\newcommand{\Ohat}{\widehat {\mathbf O}}
\newcommand{\hatO}{\Ohat}
\newcommand{\Amin}{\A_{\mathrm{min}}}
\newcommand{\clip}{\mathrm{clip}}

\newcommand{\kbar}{\bar{\keys}}
\renewcommand{\o}{\mathbf o}

\begin{document}

\twocolumn[
  \icmltitle{\mytitle}

  \icmlsetsymbol{equal}{*}

  \begin{icmlauthorlist}
    \icmlauthor{Tobias Schr\"oder}{icl}
    \icmlauthor{Lester Mackey}{comp}
  \end{icmlauthorlist}
  \icmlaffiliation{icl}{Imperial College London}
  \icmlaffiliation{comp}{Microsoft Research New
England}
  \icmlcorrespondingauthor{Tobias Schr\"oder}{t.schroeder21@imperial.ac.uk}
  \icmlcorrespondingauthor{Lester Mackey}{lmackey@microsoft.com}
  \icmlkeywords{attention mechanism, KV cache compression, near-linear time, kernel methods, distribution compression, Nyström, weighted coreset}
  \vskip 0.3in
]

\printAffiliationsAndNotice{}  %

\etoctocstyle{1}{Table of contents}
\etocdepthtag.toc{mtchapter}
\etocsettagdepth{mtchapter}{section}
\etocsettocdepth{section} %

\begin{abstract}
We introduce \textsc{WildCat}, a high-accuracy, low-cost approach to compressing the attention mechanism in neural networks. While attention is a staple of modern network architectures, it is also notoriously expensive to deploy due to resource requirements that scale quadratically with the input sequence length $n$. \textsc{WildCat} avoids these quadratic costs by only attending over a small weighted coreset. Crucially, we select the coreset using a fast but spectrally-accurate subsampling algorithm -- randomly pivoted Cholesky -- and weight the elements optimally to minimise reconstruction error. Remarkably, given bounded inputs, \textsc{WildCat} approximates exact attention with super-polynomial $O(n^{-\sqrt{\log(\log(n))}})$ error decay while running in near-linear $O(n^{1+o(1)})$ time. In contrast, prior practical approximations either lack error guarantees or require quadratic runtime to guarantee such high fidelity. We couple this advance with a GPU-optimised PyTorch implementation and a suite of benchmark experiments demonstrating the benefits of \textsc{WildCat} for image generation, image classification, and language model KV cache compression.
\end{abstract}

\section{Introduction}
A central component of transformer-based models \citep{vaswani2017attention} is the attention mechanism, which enables the modelling of long-range dependencies in sequences. The importance of the attention mechanism in today's machine learning landscape cannot be overstated. Practically all large-scale models use this operation, whether in natural language processing (e.g., BERT \citep{devlin2019bert} and GPT \citep{radfordimproving}), image synthesis \citep{esser2021taming}, or protein structure prediction \citep{jumper2021highly}. However, attention is also notoriously expensive to deploy as its resource requirements grow quadratically in the input sequence length $n$. 

This quadratic cost has motivated the development of fast approximate attention methods, both in theory and in practice. In practice, Reformer \citep{kitaevreformer}, for example, reduces runtime by evaluating a sparse subset of the attention weights while Performer \citep{choromanskirethinking} approximates an assumed low-rank structure of the attention matrix and Scatterbrain \citep{chen2021scatterbrain} combines the two approaches. 
Meanwhile, in theory, \citet{alman2023fast} showed that, for suitably-bounded inputs, one can approximate the attention output with fast, polynomial ($O(n^{-t})$ for any $t>0$) error decay in near-linear $O(n^{1+o(1)})$ time. 

However, a substantial gap remains between the theory and practice. To date, only a few works have developed practical attention approximations with correctness guarantees 
\citep{zandieh2023kdeformer, han2024hyperattention, carrell2025low, han2025streaming}, and the best of these (a) require quadratic time for fast, polynomial error decay and (b) only ensure slow, near-constant $n^{-o(1)}$ error decay in near-linear time. 

To bridge the theory-practice gap, we 
introduce \catt (\bacronym),  
a weighted coreset 
approach to approximate attention that is 
simultaneously 
(1) computationally efficient, with a runtime that grows near-linearly in $n$; %
(2) spectrally-accurate, allowing for super-polynomial error decay; %
and
(3) practical, with an efficient GPU-optimised implementation. 
Specifically, as our core contributions, we establish
the following desirable properties for \catt: 
\begin{enumerate}[leftmargin=15pt]
    \item \catt avoids the quadratic cost of exact attention by attending only over a small weighted coreset of $r$ input keys. The keys are selected in $O(nr^2)$ time using a parallelised randomly pivoted Cholesky algorithm \citep{chen2022randomly}, and reweighted optimally to minimise attention reconstruction error in $O(nrd)$ time. Hence, \catt runs in near-linear time whenever $r\in n^{o(1)}$.
    \item Thanks to its selection rule and optimal reweighting, \catt approximates the attention output %
    with near-optimal low-rank-approximation error. 
    As a result, when attention inputs are bounded, a near-constant $r\in n^{o(1)}$ coreset size suffices for \emph{super}-polynomial $O(n^{-\sqrt{\log(\log(n))}})$ error decay. 
    \item More generally, pushing beyond the limits of prior work on the computational hardness of attention \citep{alman2023fast,keles2023computational}, we show that \catt can deliver super-polynomial error decay in near-linear time even when the input entries or dimensions grow super-logarithmically in $n$. 
    \item Our benchmark experiments with image generation and image classification show that \catt can generate higher-quality outputs more quickly than five leading attention approximations.
    \item Our benchmark experiments with $13$ long-context language understanding tasks shows that \catt can also reduce the memory requirements of long-context language models more effectively than five leading KV cache compression methods.
\end{enumerate}

\paragraph{Notation.}
For each $n\in \mathbb N$ we define $[n] \defeq \set{1, 2, \dots, n}$. For a set $\mathcal S$ we write $\lvert \mathcal S\rvert$ for the number of elements in the set. We often treat a matrix $\A \in \mathbb R^{n\times d}$ as a tuple of row vectors $\A = (\a_i)_{i\in[n]}$ with $\a_i \in \mathbb R^d$, and denote sub-selections as $\A_{\mathcal S} = (\a_i)_{i\in \mathcal S}$. For two such ordered sets $\A = (\a_i)_{i\in[n]}$, $\B = (\b_l)_{l\in [m]} \subseteq \mathbb R^d$ and a real-valued function $h:\mathbb R^d\times \mathbb R^d\to \mathbb R$ we write $h(\A, \B) = (h(\a_i, \b_l))_{i\in[n], l\in [m]}$. $\inner{\cdot}{\cdot}:\mathbb R^d\times \mathbb R^d\to \mathbb R$ denotes the Euclidean inner product.
For a symmetric matrix $\Hc \in \mathbb R^{n\times n}$ we denote the pseudo-inverse by $\Hc^+$ and the $r$-th largest eigenvalue by $\lambda_r(\Hc)$. Further, we define for $\A\in \mathbb R^{n\times d}$ the matrix norms $\specnorm{\A} \defeq \sqrt{\lambda_1(\A^\top\A)}$, $\maxnorm{\A} \defeq \max_{i \in [n], j \in [d]} \lvert \A_{ij}\rvert$, and $\rownorm{\A} \defeq \max_{i \in [n]} \lVert \A_{i, :}\rVert_2$.  %
\section{Weighted Coreset Attention}
The softmax attention mechanism takes as input a sequence of queries $\Q \defeq (\mathbf q_i)_{i\in [m]} \in \mathbb R^{m \times d}$, keys $\K \defeq (\mathbf k_l)_{l\in [n]} \in \mathbb R^{n \times d}$, and values $\V \defeq (\mathbf v_l)_{l\in [n]} \in \mathbb R^{n\times d}$ and outputs the \emph{softmax matrix}
\begin{align}\label{eq:attn-output}
    \mathbf O \defeq  \left(\frac{\sum_{l=1}^{n}\exp(\beta\inner{\queries_i}{\keys_l}) \mathbf v_l}{\sum_{l=1}^{n}\exp(\beta\inner{\queries_i}{\keys_l})}\right)_{i\in[m]} 
    =
    \Dinv \A \V %
\end{align}
with attention matrix $\A_{il}\defeq \exp(\beta{\inner{\queries_i}{\keys_l}})$, scaling matrix $\D \defeq  \diag(\A\boldone_n)$, and scale factor $\beta$, often chosen as $\beta = 1/\sqrt{d}$.
The chief bottleneck in attention is the $m\times n$ attention matrix $\A$. When $m$ and $n$ are comparably large---a common occurrence in vision and language modelling---exact computation of $\O$ requires quadratic $\Theta(n^2d)$ runtime simply to evaluate and multiply by $\A$ (using standard matrix multiplication). 

\subsection{Low-rank attention approximation}
Our high-level strategy to reduce this cost is to approximate the softmax matrix $\O$ using a low-rank approximation of $\A$. Notably, for any $\hatA = \U \W$ with $\U\in \mathbb R^{m\times r}$ and $\W \in \mathbb R^{r\times n}$, the plug-in approximation $\hatDinv \hatA \V \defeq \diag(\U \W \boldone_n)^{-1} \U \W \V$
can be computed with $\bigO(mrd+nrd)$ operations and $\bigO((m + n)(r+d))$ memory by multiplying the {weights} $\W$ with $(\V, \boldone_n)$ before applying $\U$. 
This observation, combined with our next result shows that any low-rank rowwise-accurate estimate of the attention matrix $\A$ can be efficiently transformed into an entrywise-accurate estimate of the softmax matrix $\mathbf O$. 
\begin{lemma}[Approximate attention guarantee]\label{prop:transformer-bound-via-A-approximation}
    Let $\hatA$ be an approximation to $\A$,  $\hatD = \mathrm{diag}(\hatA \boldone_n)$, and $\CAtt \defeq \mathrm{clip}(\hatDinv\hatA \V, \values_\mathrm{min}, \values_\mathrm{max})$ for ${\values_{\mathrm{min}}}_j = \min_{l\in [n]} \values_{lj}$ and  ${\values_{\mathrm{max}}}_j = \max_{l\in [n]} \values_{lj}$. Then
\begin{align}\label{eq:clipped-catt-bound-proposition}
\!\!\!\!\!\!\!\!\!\error
    \leq
\Vnorm\min\paren{\frac{\frac{3}{\sqrt{n}}\rownorm{\A-\hatA}}{\displaystyle\min_{i\in[m],j\in[n]}\A_{ij}}, 2}\!.\!\!\!\!\!\!\!
\end{align}
\end{lemma}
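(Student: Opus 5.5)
The bound is a minimum of two quantities, so I will prove each separately and then take the minimum. Both arguments work row by row and coordinate by coordinate: fix $i\in[m]$ and $j\in[d]$, and write $\o_i$ for the $i$-th row of $\O$ and $\hat\o_i$ for the $i$-th row of $\hatO$.

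\emph{The constant bound $2\Vnorm$.} The entry $\O_{ij}$ is a convex combination of $\{\V_{lj}\}_{l}$, so it lies in $[{\vmin}_j,{\vmax}_j]$. Clipping puts $\hatO_{ij}$ in the same interval. Hence $|\O_{ij}-\hatO_{ij}|\le {\vmax}_j-{\vmin}_j\le 2\Vnorm$.

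\emph{The main bound.} Write $\a_i,\hat\a_i$ for the $i$-th rows of $\A,\hatA$, and set $s_i=\a_i^\top\boldone_n$, $\hat s_i=\hat\a_i^\top\boldone_n$, $e_i=\a_i-\hat\a_i$.
\begin{enumerate}
\item Clipping onto an interval containing $\O_{ij}$ is a projection onto a convex set and is nonexpansive. So it suffices to bound $|\O_{ij}-(\hatDinv\hatA\V)_{ij}|$ for the unclipped estimate.
\item Decompose
\begin{align}
\frac{\a_i^\top\V_{:,j}}{s_i}-\frac{\hat\a_i^\top\V_{:,j}}{\hat s_i}
=\frac{e_i^\top\V_{:,j}}{\hat s_i}+\a_i^\top\V_{:,j}\Big(\frac{1}{s_i}-\frac{1}{\hat s_i}\Big).
\end{align}
The second term equals $\O_{ij}(\hat s_i-s_i)/\hat s_i = -\O_{ij}\,e_i^\top\boldone_n/\hat s_i$.
\item By Cauchy--Schwarz, $|e_i^\top\V_{:,j}|\le \sqrt n\,\Vnorm\|e_i\|_2$ and $|e_i^\top\boldone_n|\le\sqrt n\|e_i\|_2$. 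Together with $|\O_{ij}|\le\Vnorm$, the error is at most $2\sqrt n\,\Vnorm\|e_i\|_2/|\hat s_i|$.
\item Bound the denominator. We have $s_i\ge n\Amin$, where $\Amin\defeq\min_{i,j}\A_{ij}$, and $|\hat s_i-s_i|\le\sqrt n\|e_i\|_2$.
\end{enumerate}

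The main obstacle is that the denominator $\hat s_i$ may be small or even negative. I handle this with a case split, using the threshold $\|e_i\|_2\le \tfrac{2}{3}\sqrt n\,\Amin$. This is exactly the regime where $\tfrac{3}{\sqrt n}\|e_i\|_2/\Amin\le 2$.
\begin{itemize}
\item If the threshold holds, then $\hat s_i\ge n\Amin/3$. The error is then at most $2\sqrt n\Vnorm\|e_i\|_2\cdot 3/(n\Amin)=6\Vnorm\|e_i\|_2/(\sqrt n\Amin)$. This is a factor of $2$ worse than the claimed constant $3$, so the estimate must be tightened.
\item If the threshold fails, the claimed right-hand side already exceeds $2\Vnorm$, and the constant bound above suffices.
\end{itemize}

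To recover the factor of $2$, I replace $\V_{:,j}$ by the centred vector $\V_{:,j}-c\boldone$ with $c=({\vmax}_j+{\vmin}_j)/2$. Softmax outputs are shift-equivariant, so $\O_{ij}-c$ and $(\hatDinv\hatA\V)_{ij}-c$ use the same weights with shifted values. This effectively replaces $\Vnorm$ by $({\vmax}_j-{\vmin}_j)/2\le\Vnorm$ in both terms of step 3, which removes the factor $2$ there.

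Redoing the case split with the threshold $\|e_i\|_2\le \tfrac23\sqrt n\Amin$ then gives $\hat s_i\ge n\Amin/3$ and the bound $\tfrac{3}{\sqrt n}\Vnorm\|e_i\|_2/\Amin$. Outside the threshold the constant bound applies as before. Taking the maximum over $i$ turns $\|e_i\|_2$ into $\rownorm{\A-\hatA}$, and taking the maximum over $j$ completes the proof.
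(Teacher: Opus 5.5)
There is a genuine gap in your last step, and the centring does not fix it. After centring, each of the two terms in step 3 is bounded by $\sqrt n\,\frac{{\vmax}_j-{\vmin}_j}{2}\|e_i\|_2$. Their sum is therefore $\sqrt n\,({\vmax}_j-{\vmin}_j)\|e_i\|_2$, and ${\vmax}_j-{\vmin}_j$ can equal $2\Vnorm$. For example, if a column has ${\vmin}_j=-\Vnorm$ and ${\vmax}_j=\Vnorm$, then $c=0$ and centring changes nothing.

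More fundamentally, your two terms combine to exactly $e_i^\top(\V_{:,j}-\O_{ij}\boldone_n)/\hat s_i$. This expression is invariant under the shift by $c$, and the only entrywise bound available is $|\V_{lj}-\O_{ij}|\le {\vmax}_j-{\vmin}_j\le 2\Vnorm$. So the factor $2$ survives. Set $x\defeq\|e_i\|_2/(\sqrt n\,\Amin)$ and use $\hat s_i\ge n\Amin(1-x)$. The route through $1/\hat s_i$ then gives at best $\frac{2x}{1-x}\Vnorm$. This is $\le 3x\Vnorm$ only for $x\le 1/3$, while the constant bound $2\Vnorm$ only covers $x\ge 2/3$. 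No choice of threshold closes the window $x\in(1/3,2/3)$, so your argument proves the lemma only with a larger constant (e.g., $6$).

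The missing idea, which is how the paper proceeds, is never to divide by $\hat s_i$. Instead write $\O_{ij}-\hatO_{ij}=(\a_i^\top\V_{:,j}-s_i\hatO_{ij})/s_i$ using the \emph{clipped} $\hatO$. Then split the numerator as $(\a_i^\top\V_{:,j}-\hat s_i\hatO_{ij})+(\hat s_i-s_i)\hatO_{ij}$.
\begin{itemize}
\item The second piece is at most $\sqrt n\|e_i\|_2\Vnorm$, because clipping gives $|\hatO_{ij}|\le\Vnorm$.
\item For the first piece, when $\hat s_i>0$ you have $\hat s_i\hatO_{ij}=\clip(\hat\a_i^\top\V_{:,j},\hat s_i{\vmin}_j,\hat s_i{\vmax}_j)$ and $\a_i^\top\V_{:,j}=\clip(\a_i^\top\V_{:,j},s_i{\vmin}_j,s_i{\vmax}_j)$. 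Clipping is $1$-Lipschitz both in its argument and in its bounds, so this piece is at most $|s_i-\hat s_i|\Vnorm+|e_i^\top\V_{:,j}|\le 2\sqrt n\|e_i\|_2\Vnorm$.
\end{itemize}
Dividing by the true normaliser $s_i\ge n\Amin$ gives the constant $3$ with no $1/(1-x)$ loss. The only requirement is $\hat s_i>0$, which holds whenever $x<1$. For $x\ge1$ the ratio in the bound is at least $3>2$, so the constant bound applies.

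Note that this argument uses the clipping of $\hatO$ essentially. Your step 1 discards it through nonexpansiveness, which is why you are forced to control $1/\hat s_i$.
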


In this statement, proved in \cref{subsection:proof:cat-error-decomposition}, we additionally constrain each estimate $\Ohat_{ij}$ to lie in the value range $[\values_{\min j}, \values_{\max j}]$ as the target entry $\O_{ij}$ also satisfies this property. 
Our next step is to identify a high-quality low-rank approximation $\hatA$ to seed our  output estimate $\Ohat$.

\subsection{Optimal weighting via \Nystrom approximation}
A widely used tool for constructing low-rank approximations for symmetric positive definite (s.p.d.) matrices is the Nystr\"om method \citep{williams2000using}. To understand how we can exploit approximations of s.p.d.\ matrices, note that the attention matrix $\A = \exp(\beta\Q\K^\top)$ can be written in terms of the  \emph{exponential kernel function}\footnote{See \cref{sec:kernel-preliminaries} for relevant background on reproducing kernels.} $\hatt(\queries, \keys) \defeq \exp(\beta \inner{\queries}{\keys})$. 
 
 The kernel perspective suggests the following construction of a low-rank approximation of $\A = h(\Q,\K)$. The kernel features $\set{h(\cdot, \keys_l) ; l\in [n]}$ span an (at most) $n$-dimensional vector space $\mathcal H$ with inner product $\langle h(\cdot, \x), h(\cdot, \y)\rangle_{\mathcal H} = h(\x, \y)$. Accordingly, $\set{h(\cdot, \keys_l) ; l\in \mathcal S}$ for a subset $\mathcal S\subseteq [n]$ with $\lvert \mathcal S\rvert = r$ defines an at most $r$-dimensional subspace $\mathcal H_{\mathcal S} \subset \mathcal H$. The orthogonal projection of the kernel features $h(\cdot, \keys_l)$ onto $\mathcal H_{\mathcal S}$ is called a Nystr\"om approximation and takes the form 
\begin{equation}
    \hnys(\cdot, \keys_l) \defeq  \hatt(\cdot, \coreset) \hatt(\coreset, \coreset)^{+} \hatt(\coreset, \keys_l) \in \mathcal H_{\mathcal S}\,.
\end{equation}
The \textit{Nystr\"om weights} $\w_l \defeq h(\coreset, \coreset)^{+}h(\coreset, \keys_l)$ constitute the \textit{optimal} weighting of $h(\cdot, \coreset)$ to minimise the difference between $h(\cdot, \keys_l)$ and $h(\cdot, \coreset)\w$. 
If we adopt the low-rank approximation $\hatA = \hnys(\Q,\K)$, then the approximation error is governed by the residual kernel function $\hres = h-\hnys$ as $\A - \hatA = \hres(\Q,\K)$.
Our next result, proven in \cref{section:proof:nystrom},  provides a precise guarantee for the rowwise error of this \Nystrom-based approximation. %
\begin{lemma}[\Nystrom guarantee]\label{prop:nystrom}
    Let $\mathcal S \subseteq [n]$ be a subset with $\lvert \mathcal S\rvert = r$ and $\K_{\mathcal S} = (\keys_l)_{l\in \mathcal S}$ the associated rows of $\K$. Then, $\hatA \defeq \hatt(\Q, \coreset) \hatt(\coreset, \coreset)^{+} \hatt(\coreset, \K)$ has rank $\leq r$ and satisfies the following guarantee for $\RQ\defeq\rownorm{\Q}$:
    \begin{align}
    \rownorm{\A - \hatA}^2  
    \leq 
    \exp(\beta\RQ^2)\,  %
    \specnorm{\hres(\K, \K)}\,.
    \end{align}
\end{lemma}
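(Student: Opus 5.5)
The plan is to work in the reproducing kernel Hilbert space $\mathcal H$ of the exponential kernel $\hatt$, where the Nystr\"om approximation is an orthogonal projection. Let $\proj$ denote the orthogonal projection of $\mathcal H$ onto $\mathcal H_{\mathcal S}=\mathrm{span}\{\hatt(\cdot,\keys_l); l\in\mathcal S\}$. The first step is the rank claim. Since $\hatA=\hatt(\Q,\coreset)\,\hatt(\coreset,\coreset)^{+}\,\hatt(\coreset,\K)$ factors through the $r\times r$ matrix $\hatt(\coreset,\coreset)^{+}$, it has rank at most $r$.

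Next we identify entries with inner products. By the reproducing property and the formula for $\hnys$ as a projection,
\[
\hatA_{il}=\inner{\hatt(\cdot,\queries_i)}{\proj\,\hatt(\cdot,\keys_l)}_{\mathcal H}.
\]
Hence
\[
(\A-\hatA)_{il}=\inner{\hatt(\cdot,\queries_i)}{(I-\proj)\hatt(\cdot,\keys_l)}_{\mathcal H}=\inner{(I-\proj)\hatt(\cdot,\queries_i)}{(I-\proj)\hatt(\cdot,\keys_l)}_{\mathcal H}.
\]
Here $\hatt(\cdot,\queries_i)$ may lie outside the span of the keys, so we work in the full RKHS and $\proj$ is a projection there. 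We write $\mathbf f_l\defeq(I-\proj)\hatt(\cdot,\keys_l)$. By the same computation, $\inner{\mathbf f_l}{\mathbf f_{l'}}_{\mathcal H}=\hres(\keys_l,\keys_{l'})$.

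Now fix a row $i$ and set $g\defeq(I-\proj)\hatt(\cdot,\queries_i)$. The squared row norm is
\[
\sum_l \inner{g}{\mathbf f_l}^2=\norm{\mathbf F^* g}_2^2,
\]
where $\mathbf F:\mathbb R^n\to\mathcal H$ is the map $\mathbf c\mapsto\sum_l c_l\mathbf f_l$. Then
\[
\norm{\mathbf F^*g}_2^2\le \specnorm{\mathbf F}^2\norm{g}_{\mathcal H}^2=\specnorm{\mathbf F^*\mathbf F}\,\norm{g}_{\mathcal H}^2,
\]
and $\mathbf F^*\mathbf F=\hres(\K,\K)$ is exactly the Gram matrix of the $\mathbf f_l$. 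Finally,
\[
\norm{g}_{\mathcal H}^2\le\norm{\hatt(\cdot,\queries_i)}_{\mathcal H}^2=\hatt(\queries_i,\queries_i)=\exp(\beta\norm{\queries_i}^2)\le\exp(\beta\RQ^2).
\]
Taking the maximum over $i$ gives the stated bound.

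The main obstacle is being careful that this Hilbert-space argument is legitimate. The exponential kernel must be positive definite, which holds because $\exp(\beta\inner{\cdot}{\cdot})$ is a power series in a p.d.\ kernel with nonnegative coefficients. The Nystr\"om formula must also genuinely equal the projection when $\hatt(\coreset,\coreset)$ is singular. For the latter I would write $\proj f=\hatt(\cdot,\coreset)\,\hatt(\coreset,\coreset)^{+}\,\big(f(\keys_l)\big)_{l\in\mathcal S}$. To check it, I would verify that this map is idempotent and self-adjoint on $\mathcal H$ using the pseudo-inverse identity $\mathbf G\mathbf G^{+}\mathbf G=\mathbf G$ for $\mathbf G=\hatt(\coreset,\coreset)$, together with the fact that $f\perp\mathcal H_{\mathcal S}$ implies $f(\keys_l)=0$ for $l\in\mathcal S$.
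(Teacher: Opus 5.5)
Your proof is correct and is essentially the paper's argument. The paper bounds $\mathbf{e}_i^\top\hres(\Q,\K)\mathbf{u}$ by Cauchy--Schwarz for the positive semi-definite joint residual matrix $\hres([\Q;\K],[\Q;\K])$, which is your inequality $\langle g,\mathbf F\mathbf{u}\rangle_{\mathcal H}\le\|g\|_{\mathcal H}\|\mathbf F\mathbf{u}\|_{\mathcal H}$ written in matrix form, followed by the same bounds $\hres(\queries_i,\queries_i)\le\hatt(\queries_i,\queries_i)$ and $\mathbf{u}^\top\hres(\K,\K)\mathbf{u}\le\specnorm{\hres(\K,\K)}$. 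Your explicit check that the pseudo-inverse formula gives an orthogonal projection even when $\hatt(\coreset,\coreset)$ is singular is a useful addition, since the paper takes this, and hence the positive semi-definiteness of $\hres$, from its kernel background section.
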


\cref{prop:nystrom} shows that, 
to obtain an entrywise-accurate \Nystrom estimate of $\A$, it suffices to accurately approximate the s.p.d.\ key kernel matrix $\Hc \defeq h(\K, \K)$. Since the quality of a \Nystrom approximation is determined by the quality of its coreset $\coreset$, we now turn our attention to coreset selection.

\subsection{Coreset construction with random pivoting}
To select a coreset $\coreset$ algorithmically from $\dataset$ we adapt the randomly pivoted Cholesky (RPC) algorithm of \citet{chen2022randomly}.
RPC builds a partial Cholesky decomposition of the kernel matrix $\Hc$. Central to its guarantees is the pivoting rule, which samples each coreset point from the diagonal of the current residual kernel. We adopt the same pivoting rule but construct the Nystr\"om weights $\W \defeq h(\coreset, \coreset)^{+}h(\coreset, \dataset)$ instead.

Starting with an empty coreset $\mathcal S \gets \emptyset$ and the diagonal of the residual kernel $\hres^{0}(\keys_l, \keys_l) = h(\keys_l, \keys_l)$ for each key $\keys_l$, we sample in each round a pivot index $s \sim \p^r$, where
\begin{equation}\label{eq:rpc-pivoting-rule}
    \p^r_l \defeq \frac{\hres^{r}(\keys_l, \keys_l)}{\sum_{l\in [n]} \hres^{r}(\keys_l, \keys_l)}\ \quad \text{for}\quad l\in [n]\,.
\end{equation}
As long as $\Hc$ is not fully approximated, $\hres^{r}(\keys_s, \keys_s)>0$ by construction, and the kernel matrix associated with the new coreset $\mathcal S' \gets \mathcal S\cup\set{s}$ remains invertible. We maintain $h(\K_{\mathcal S'}, \K_{\mathcal S'})^{-1}$ via rank-one updates: Upon adding a pivot $s$, we use for $\g^\top \defeq \frac{\paren{h(\keys_{s}, \coreset) h(\coreset, \coreset)^{-1}, -1}}{\sqrt{\hres^r(\keys_{s},\keys_{s})}}$ the following recursive relations for the inverse of the kernel matrix
\begin{align} 
    h(\K_{\mathcal S'}, \K_{\mathcal S'})^{-1} = 
    \begin{pmatrix}
        h(\coreset, \coreset)^{-1} & 0 \\
        0 & 0
    \end{pmatrix} + \g \g^\top\,
\end{align}
and the diagonal of the residual kernel for $l\in [n]$
\begin{align}
    \hres^{r+1}(\keys_l, \keys_l) = \hres^{r}(\keys_l, \keys_l) - (\g^\top h(\coreset, \keys_l))^2\,.
\end{align}
We provide a justification for the recursive update rule in \cref{app:proof:prop:kernel-core-inverse-update}. 
The complete algorithm, \rpnys, is summarised in \cref{alg:rpnys}. Note that \rpnys only accesses $\bigO(nr)$ entries of $h(\K, \K)$ and runs in $O(nr^2 + nrd)$ time.

\begin{algorithm2e}[h]
    \caption{Randomly pivoted Nystr\"om (\rpnysnolink)}%
    \label{alg:rpnys}
    \small
    \begin{algorithmic}
        \STATE {\bf Input:} dataset $\K =(\keys_l)_{l\in[n]}$,  kernel $h$,  rank $r$
        \STATE $\p \gets (h(\keys_l, \keys_l))_{l\in [n]}$ \COMMENT{Compute kernel diagonal}
        \STATE $\Hinv \gets \boldzero_{r\times r} \,;\quad \Hcore\gets \boldzero_{r\times n};\quad \g \gets \boldzero_r$
        \STATE $\mathcal S \gets \emptyset$ \COMMENT{Initialize empty coreset}
        \FOR{ $i = 1, \dots, r$ }
            \STATE $\mathcal S \gets \mathcal S \cup \{s\} \qtext{for} s \sim \frac{\p}{\sum_{l=1}^{n} \p_l}$
            \COMMENT{Sample pivot}
            \STATE // Update kernel inverse
            \IF{$i>0$}
            \STATE $\g_{[i-1]} \gets \Hinv_{[i-1], [i-1]}\Hcore_{[i-1], s}$ \COMMENT{$O(i^2)$ operations}
            \ENDIF
            \STATE $\g_{i} \gets -1$;\quad $\g_{[i]} \gets \g_{[i]}/\sqrt{\p_s}$
            \STATE $\Hinv \gets \Hinv + \g \g^\top$
            \STATE // Update pivot distribution
            \STATE $\Hcore_{i, [n]}\gets h(\keys_s, \K\,)$ \COMMENT{$O(nd)$ operations}
            \STATE $\delta \gets \g_{[i]}^\top \Hcore_{[i], [n]}$ \COMMENT{$O(ni)$ operations}
            \STATE $\p \gets \p - \delta^2$ \COMMENT{Entrywise power function}
            \STATE $\p_s \gets 0$ \COMMENT{For numerical stability}
        \ENDFOR
        \STATE {\bf Return:} Coreset $\coreset$; \Nystrom weights $\W\! =\! \Hinv\Hcore$ 
    \end{algorithmic}
\end{algorithm2e}

\begin{figure*}
    \centering
    \includegraphics[width=\textwidth]{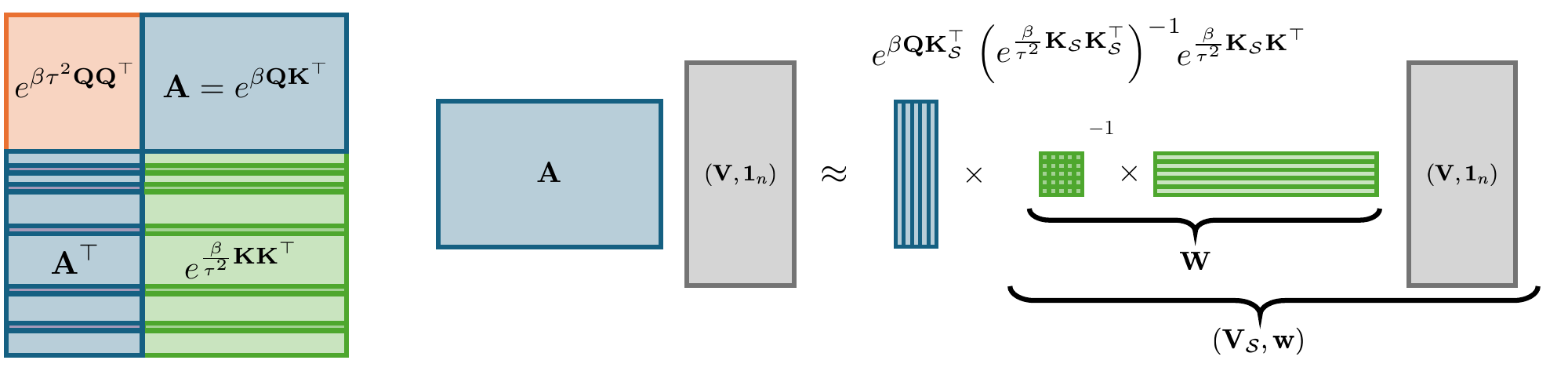}
    \caption[Visualisation of compressing KDEs via a Nystr\"om approximation.]
    {\textbf{Visualisation of the \catt methodology.} Our goal is the approximation of the off-diagonal block $\A$ through a Nystr\"om approximation $\hatA_\tau \defeq h(\Q, \coreset)h\paren{\frac{1}{\tau}\coreset, \frac{1}{\tau}\coreset}^{-1}h\paren{\frac{1}{\tau}\coreset, \frac{1}{\tau}\K}$. With the right order of operations, the computation cost for $\A\V$ decreases from $\bigO (mnd)$ to $\bigO (rnd + mrd + nr^2)$. For the exponential kernel, the off-diagonal block is invariant under $\Q \to \tau\Q$, $\K \to \frac{1}{\tau}\K$. Since we only select coreset points from $\K$, we can optimise for low-rank approximability.}
    \label{fig:compresskv-visualisation}
    \vspace{-.5\baselineskip}
\end{figure*}
In \cref{subsection:proof:rp-cholesky}, we mildly adapt the arguments of \citet[Thm.~7]{epperly2023kernel} to provide the following operator norm guarantee for \rpnys.
\begin{theorem}[\rpnysnolink guarantee]\label{thm:rp-cholesky}
Fix any $\varepsilon > 0$ and consider the coreset $\coreset$ and \Nystrom weights $\W$ outputted by \rpnys (\cref{alg:rpnys}) with kernel $h$, dataset $\K$, and rank parameter $r$.  If $\Hc\defeq h(\K,\K)$ and $\widehat \Hc^{r} \defeq h(\K, \coreset)\W$, then $\mathbb E\sspecnorm{\Hc - \widehat \Hc^{r}} \leq \varepsilon$ whenever, for some $\T\preceq \Hc$, 
\begin{talign}
 r \geq \mathrm{rank}(\mathbf T)\log\left(\frac{\sspecnorm{\Hc}}{\varepsilon}\right) + 
\nuc{\Hc - \mathbf T}\left(\frac{1}{\varepsilon}- \frac{1}{\sspecnorm{\Hc}}\right)\,.
 \end{talign}
\end{theorem}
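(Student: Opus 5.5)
Since \rpnys picks its pivots exactly as randomly pivoted Cholesky does and its Nystr\"om weights reproduce the same approximation, I plan to follow the matrix-recursion proof of \citet[Thm.~7]{epperly2023kernel}. The first step is to confirm that $\widehat\Hc^{r} = h(\K,\coreset)h(\coreset,\coreset)^{+}h(\coreset,\K)$ agrees with the RPC approximation $\Hc\langle\mathcal S\rangle$. Writing the residual $\Hc^{(i)} \defeq \Hc - \widehat\Hc^{i}$, the updates in \cref{alg:rpnys} (justified in \cref{app:proof:prop:kernel-core-inverse-update}) then give
\begin{align}
\Hc^{(i+1)} = \Hc^{(i)} - \frac{\Hc^{(i)}\e_s\e_s^\top\Hc^{(i)}}{\Hc^{(i)}_{ss}}, \qquad s\sim \diag(\Hc^{(i)})/\nuc{\Hc^{(i)}}.
\end{align}
Averaging over the pivot yields $\mathbb E[\Hc^{(i+1)}\mid\Hc^{(i)}] = \Phi(\Hc^{(i)})$ with $\Phi(\A)\defeq \A - \A^2/\nuc{\A}$.

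Next I would use two properties of $\Phi$ shown by Epperly et al.: it is monotone on psd matrices (if $\A\preceq\B$ then $\Phi(\A)\preceq\Phi(\B)$) and it is concave. Together with Jensen's inequality, these give $\mathbb E\,\Hc^{(r)} \preceq \Phi^{\circ r}(\Hc)$. Because $\Hc^{(r)}\succeq 0$, we get $\mathbb E\sspecnorm{\Hc^{(r)}} \le \mathbb E\,\nuc{\Hc^{(r)}}$, but that trace bound is too weak to give an operator-norm result. I would instead work with the spectrum: $\Phi^{\circ r}(\Hc)$ shares eigenvectors with $\Hc$, and $\mathbb E\sspecnorm{\cdot}$ is handled by reading the trace-type argument on the top eigenvalue. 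The statement as written only needs the bound $\sspecnorm{\Phi^{\circ r}(\Hc)}\le\varepsilon$ together with $\mathbb E\sspecnorm{\Hc^{(r)}}$. If passing from $\sspecnorm{\mathbb E\,\cdot}$ to $\mathbb E\sspecnorm{\cdot}$ turns out to be delicate, I will follow whatever device Epperly et al.\ use there; I expect this step to be the main obstacle.

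The core of the argument is the scalar analysis of $\Phi^{\circ r}(\Hc)$. Write its eigenvalues as $\lambda^{(i)}_j$ and its trace as $t_i$; then $\lambda^{(i+1)}_j = \lambda^{(i)}_j(1-\lambda^{(i)}_j/t_i)$. I split the spectrum using $\T\preceq\Hc$, which is most convenient to take as the top-$k$ eigenpart with $k=\mathrm{rank}(\T)$; if $\T$ is a general comparator, reduce to that case using $\nuc{\Hc-\T}\ge\sum_{j>k}\lambda_j(\Hc)$. The tail mass satisfies $\nuc{\Hc-\T}\ge$ the mass outside the top-$k$ eigenspace, and this tail never increases. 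Following the paper's argument, I then bound the number of steps needed to drive the top eigenvalue from $\sspecnorm{\Hc}$ down to $\varepsilon$ by two contributions. Each step either reduces the top-$k$ eigenvalues geometrically, which costs about $k\log(\sspecnorm{\Hc}/\varepsilon)$ steps in total, or has the trace dominated by the tail. In the tail-dominated regime, the largest eigenvalue $\lambda$ decreases by at least $\lambda^2/\nuc{\Hc-\T}$ per step, so $1/\lambda$ grows by at least $1/\nuc{\Hc-\T}$. Starting from $1/\sspecnorm{\Hc}$ and stopping at $1/\varepsilon$ therefore takes at most $\nuc{\Hc-\T}(1/\varepsilon-1/\sspecnorm{\Hc})$ steps. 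Adding the two counts gives the stated threshold on $r$, and because the eigenvalue sequence is monotone nonincreasing in $i$, any larger $r$ also works.
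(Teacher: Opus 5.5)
Up to $\E\,\Hc^{(r)}\preceq\Phi^{\circ r}(\Hc)$ your route is the paper's. The step you flagged and deferred, however, is a genuine gap, and no device can close it. The inequality $\E\sspecnorm{\Hc^{(r)}}\le\lambda_1(\Phi^{\circ r}(\Hc))$ is false, and so is the theorem as stated with $\E\sspecnorm{\cdot}$. Since $\sspecnorm{\cdot}$ is convex, Jensen only gives $\sspecnorm{\E\,\Hc^{(r)}}\le\E\sspecnorm{\Hc^{(r)}}$, which is the wrong direction, and the gap can be large. Take a kernel whose features at the $n\ge 2$ keys are orthonormal, so that $\Hc=\mathbf I_n$. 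The exponential kernel with keys $\pm R\,\e_j$ and large $\beta R^2$ gives an approximate version of the same example. The pivots are then uniform over the unpicked indices, and $\Hc-\widehat\Hc^{r}=\mathbf I_n-\sum_{s\in\mathcal S}\e_s\e_s^\top$ has operator norm exactly $1$ for every $r<n$. Meanwhile $\E[\Hc-\widehat\Hc^{r}]=(1-\frac{r}{n})\mathbf I_n\preceq(1-\frac{1}{n})^r\mathbf I_n=\Phi^{\circ r}(\Hc)$, as your matrix inequality predicts. Yet with $\T=\boldzero$ the threshold reads $r\ge n(\frac{1}{\varepsilon}-1)$. This is met by $r=1$ with $\varepsilon=1-\frac{1}{2n}$, and even by $r=n-1$ with $\varepsilon=\frac{n}{2n-1}\approx\frac{1}{2}$.

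The paper's own proof makes exactly the leap you were worried about. Its iterated-residual lemma bounds $\E[\x^\top(\Hc-\widehat\Hc^{r})\x]$ for each fixed unit vector $\x$. That controls $\sup_{\x}\E[\x^\top(\Hc-\widehat\Hc^{r})\x]=\sspecnorm{\E[\Hc-\widehat\Hc^{r}]}$, not $\E\sup_{\x}\x^\top(\Hc-\widehat\Hc^{r})\x=\E\sspecnorm{\Hc-\widehat\Hc^{r}}$. What your argument and the paper's actually establish under the stated threshold is $\sspecnorm{\E[\Hc-\widehat\Hc^{r}]}\le\varepsilon$. Controlling $\E\sspecnorm{\Hc-\widehat\Hc^{r}}$ needs a different argument, for example through $\E\,\nuc{\Hc-\widehat\Hc^{r}}$, which dominates it, and a different threshold.

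Separately, your scalar count does not produce the stated constants. Write $k=\mathrm{rank}(\T)$, $T=\nuc{\Hc-\T}$ and $\lambda=\lambda_1(\Phi^{\circ i}(\Hc))$. Ordering preservation plus the Eckart--Young step give only $\nuc{\Phi^{\circ i}(\Hc)}\le k\lambda+T$, hence $\lambda'\le\lambda-\lambda^2/(k\lambda+T)$. The decrement $\lambda^2/T$ you assert in the tail-dominated regime fails whenever the top block carries trace, for instance with $k$ equal top eigenvalues. Splitting on $k\lambda\gtrless T$ yields either a contraction by $1-\frac{1}{2k}$ per step or $1/\lambda$-increments of $\frac{1}{2T}$, which is roughly twice the stated threshold.

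The paper instead compares with the ODE $\eta'=-\eta^2/(k\eta+T)$ and separates variables. An exact discrete version uses the potential $F(\lambda)\defeq k\log(1/\lambda)+T/\lambda$. Write $\lambda-\lambda^2/(k\lambda+T)=\lambda\frac{(k-1)\lambda+T}{k\lambda+T}$ and use $-\log(1-x)\ge x$. Then each step raises $F$ by at least $\frac{k\lambda}{k\lambda+T}+\frac{T}{(k-1)\lambda+T}\ge 1$. Since $F$ is decreasing, $\lambda_1(\Phi^{\circ r}(\Hc))\le\varepsilon$ as soon as $r\ge F(\varepsilon)-F(\sspecnorm{\Hc})$, which is precisely the threshold in the statement. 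By the previous paragraph, though, this bounds only the expected residual's norm, not the expected norm.
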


\cref{thm:rp-cholesky} links the approximation error of the randomly pivoted Nystr\"om method to the approximability of $\Hc$ by \emph{any} low-rank operator $\T\preceq \Hc$. In \cref{prop:polynomial-approximation} we show that an order $s$ Taylor approximation of the exponential function yields an operator $\T^s$ with rank $\leq \binom{s+d}{d}$ and 
\begin{talign}\label{eq:taylor-approximation}
    \nuc{\Hc - \T^s} \leq n\exp(\beta \Knorm^2)\paren{\frac{e\beta\Knorm^2}{s+1}}^{s+1}\,
\end{talign}
The fast decay of this approximation in the order parameter $s$ will allow us to derive fast-decaying error rates for the \rpnys approximation in \cref{section:theory}.

\subsection{Invariance of attention under shift and rescaling}\label{sec:rescaling}
Interestingly, the softmax matrix $\O=(\o_i)_{i\in[m]}$ is invariant under a global recentring of the keys. This follows as 
\begin{talign}
\o_{i}
    \!=\!
\frac{h(\queries_i, \K)\V}{h(\queries_i, \K)\boldone_n}\frac{\exp(-\beta\inner{\queries_i}{\kbar})}{\exp(-\beta\inner{\queries_i}{\kbar})}
    \!=\!
\frac{h(\queries_i, \K-\boldone_n\kbar^\top)\V}{h(\queries_i, \K-\boldone_n\kbar^\top)\boldone_n}
\end{talign}
for any row vector $\kbar\in\mathbb R^d$ and each $i\in[m]$. 
Hence our approximation algorithms are also free to operate on  recentred keys. 
Hereafter, we choose $\kbar\defeq\frac{1}{n}\sum_{i=1}^n\keys_i$ and treat $\K$ as the matrix of recentred keys, $\keys_i - \kbar$.  %

Note also that the attention matrix $\A$ is invariant under rescalings of the keys and queries, i.e., we are allowed to rescale $\K \gets \tau^{-1}\K$ and $\Q \gets \tau\Q$ without changing $\A$. However, when we form our approximation of $\A$ using rescaled keys and queries, the approximation error is affected in two ways. On the one hand, increasing $\tau$ makes the data matrix $\Hc_\tau \defeq h(\tau^{-1}\K, \tau^{-1}\K)$ increasingly low-rank approximable. In fact, in the extreme case $\Hc_\tau \xrightarrow{\tau\to\infty} \boldone_n \boldone_n^\top$ becomes a rank one matrix. On the other hand, $\tau$ increases the query-based error inflation factor $\exp(\beta\tau^2 \RQ^2)$ of \cref{prop:nystrom}. 
For $\RK \defeq \rownorm{\K}$, 
our theory in \cref{section:theory} suggests the following closed-form rescaling parameter that reflects the asymmetric roles of the keys and queries in our attention reconstruction:
\begin{talign}\label{eq:kernel-temperature}
    \tau \defeq \sqrt{\frac{\RK}{\RQ}\frac{b_0}{2\LambW*{{b_0/}{(2\rho_0)}}}} \quad\text{with}\quad b_0\defeq \frac{\log(n)}{\beta\RQ\RK} + 2\,.
\end{talign}
Here, $z \mapsto \LambW{z}$ denotes the Lambert-W function which is defined as the solution to $z = we^w$ (see \cref{sec:lambert-w-identities} for more details), and $\rho_0 \defeq \sqrt{1 + e^{\LambW{2/e^2}+2}} \approx 3.19$. We use the temperature scaling \cref{eq:kernel-temperature} in both our theory and experiments. 

\subsection{Maximising throughput via binning}
The sequential nature of \rpnys does not make full use of the massive parallelism available on a GPU. To maximise throughput, we employ a divide-and-conquer strategy that partitions the input into $B$ bins and identifies a coreset of size $r/B$ for each bin in parallel. With this strategy the total operation count is reduced to $O(nr^2/B^2 + nrd/B)$ while the parallel runtime is even faster,  $O(nr^2/B^3 + nrd/B^2)$. 
Our guarantees in \cref{section:theory} account for this binning and allow the user to flexibly trade off between speed and the level of guaranteed accuracy. 

\subsection{Summary of the methodology}
Our core methodology is summarised in \compresskv (\cref{alg:compresskv}) and visualised in \cref{fig:compresskv-visualisation}. After a low-cost recentring and temperature selection using \cref{eq:kernel-temperature}, the keys serve as input data for the \rpnys algorithm with kernel function $h_\tau \defeq \exp(\beta\langle \cdot, \cdot \rangle/\tau^2)$. Using the obtained coreset indices $\mathcal S$ and the Nystr\"om weights $\W \defeq h_\tau(\coreset, \coreset)^{-1} h_\tau(\coreset, \K)$, we form the compressed key and value tensors $\K_{\mathcal S} \defeq (\keys_l)_{l\in \mathcal S}$, $\V_{\mathcal S} \defeq \W \V \in \mathbb R^{r\times d}$. Note that \emph{all} keys and values are involved in the compression of $\V$. In addition, we form the new softmax normalisation vector $\w \defeq \W \boldone_n$ to compute $\hatD_i = \sum_{l\in \mathcal S}\exp(\beta\langle \queries_i, \keys_l\rangle)\w_l$. 
\vspace{-.5\baselineskip}
\begin{algorithm2e}[h]
    \caption{\compresskvnolink}
    \label{alg:compresskv}
    \SetAlgoNoLine\DontPrintSemicolon
    \small
    \begin{algorithmic}
        \STATE {\bf Input:} keys $\K$, values $\V$, radius $\RQ$, scale $\beta$, rank $r$, bins $B$\\[.1\baselineskip]
        \STATE 
        $\bar\keys \gets \K.\texttt{rowsmean()}$; %
        \quad $\K \gets \K -  \bar\keys$ \COMMENT{Recenter keys}
        \STATE Evenly divide (or reshape) rows of $\K$ into bins $\K^1, \dots, \K^B$ %
        \STATE\ForPar{b = 1, \dots, B}{
        \STATE $\RK \gets \max_{l\in [n]} \sqrt{\sum_{j=1}^d(\K^{b}_{l, j})^2}$
        \STATE $\tau \gets \texttt{getTemperature}(\beta, \RQ, \RK, n)$\qtext{using}   \cref{eq:kernel-temperature} \\[.1\baselineskip]
        \STATE $\K^b_{\mathcal S}, \W^b \gets$ $\rpnys(\K^b, \exp(\beta\langle \cdot, \cdot\rangle{/}{\tau^2}), r/B)$
        }
        \STATE // Concatenate (or reshape) bin results %
        \STATE $\coreset \gets (\K^b_\mathcal S)_{b\in [B]}\,;\, \W \gets (\W^b)_{b\in [B]}$
        \STATE $\coreset \gets \coreset + \bar\keys$
        \STATE $\V_\mathcal S \gets \W\V;\,  \w \gets \W\boldone_n$ \COMMENT{Compress values}
        \STATE {\bf Return:} $\coreset, \V_\mathcal S, \w$\!\!\!\!\!\!\!\!\!\!\!\!
    \end{algorithmic}
\end{algorithm2e}
\vspace{-2\baselineskip}
\begin{algorithm2e}[h!]
    \caption{\textsc{WtdAttn}}
    \label{alg:weighted-coreset-attention}
    \small
    \begin{algorithmic}
        \STATE {\bf Input:} queries $\Q$, keys $\K_{\mathcal S}$, values $\V_{\mathcal S}$, weights $\w$,
        \\ \qquad\quad range $(\values_{\min}, \values_{\max})$, scale $\beta$\\[.1\baselineskip]
        \STATE $\widehat\A \gets \exp(\beta \Q \K_{\mathcal S}^\top)$
        \STATE $\widehat{\mathbf O} \gets \mathrm{diag}(\widehat\A\w)^{-1}\widehat\A\V_{\mathcal S}$ $\sstext{\texttt{where}}$ $\widehat\A\w > 0$ $\sstext{\texttt{else}}$ 0
        \STATE {\bf Return:} $\widehat{\mathbf O} \gets \mathrm{clip}(\widehat{\mathbf O}, \values_{\min}, \values_{\max})$
    \end{algorithmic}
\end{algorithm2e}
\vspace{-2\baselineskip}
\begin{algorithm2e}[h!]
    \caption{\cattnolink}
    \label{alg:cmpd-attn}
    \small
    \begin{algorithmic}
        \STATE {\bf Input:} queries $\Q$, keys $\K$, values $\V$, scale $\beta$, rank $r$, bins $B$\\[.1\baselineskip]
        \STATE $(\values_{\min}, \values_{\max}) \gets (\min_{l\in [n]}\V_{l, [d]}, \max_{l\in [n]}\V_{l, [d]})$
        \\[.3\baselineskip]
        \STATE $\RQ \gets \max_{l\in [n]} \sqrt{\sum_{j=1}^d\Q_{l, j}^2}$
        \STATE $\K_{\mathcal S}, \V_{\mathcal S}, \w \gets \compresskv(\K, \V, \RQ, \beta, r, B)$
        \STATE {\bf Return:} $\widehat{\mathbf O} \defeq \wtdattn(\Q, \K_{\mc S}, \V_{\mc S}, \w, \values_{\min}, \values_{\max}, \beta)$ 
    \end{algorithmic}
\end{algorithm2e}
\vspace{-\baselineskip}

In autoregressive models, keys and values of previously processed tokens are stored in KV caches which can incur prohibitive $\Omega(nd)$ memory requirements. In this context (often called the \emph{prefill phase}), we will use \compresskv for \emph{KV cache compression}, requiring only $\bigO(rd)$ in storage for the output.\footnote{While we focus on its memory reduction benefits, KV cache compression also has a complementary computational benefit: $m$ new tokens can be generated in $O(rmd + m^2d)$ time instead of $\Theta(nmd + m^2d)$ time.} The compressed keys and values can then be incorporated into any subsequent attention computations (e.g., to generate new tokens in the \emph{decoding phase}) using the weighted attention forward pass, \wtdattn (\cref{alg:weighted-coreset-attention}).

For non-autoregressive models, we embed \compresskv and \wtdattn into our custom attention module \catt (\cref{alg:cmpd-attn}). 
In the canonical attention approximation setting with $m \sim n$, \catt enjoys $\bigO(nr^2 + nrd)$ runtime, which is near-linear for $r \in n^{o(1)}$.

\section{Approximation Guarantees}\label{section:theory}
We next derive efficient attention approximation guarantees for \catt based on the high quality and low runtime of \rpnys. Recall that \cref{thm:rp-cholesky} allows us to bound the error of \rpnys in terms of any benchmark approximation $\T\preceq \Hc_{\tau}$. To obtain a concrete bound, we consider $\T^s$ induced by an order $s$ Taylor approximation of the exponential function:
\begin{talign}
\mathbf T_{il}^s \defeq \sum_{p = 0}^s \frac{1}{p!}\big(\frac{\beta}{\tau^2}\langle \keys_i, \keys_l\rangle\big)^{p}.
\end{talign}
Our next lemma, proved in \cref{subsection:taylor-bound}, characterises the trade-off between order and approximation accuracy.
\begin{lemma}[Taylor  guarantee]\label{cor:trace-polynomial-guarantees}
Define the order parameter
\begin{align}
    \tilde s(\varepsilon) \defeq \frac{\log(n/\varepsilon) + \beta \RK^2/\tau^{2}}{\LambW*{\frac{\log(n/\varepsilon)\tau^{2}}{e\beta\RK^2}+ \frac{1}{e}}}
\end{align}
for $\eps>0$ where $W_0$ is the primary branch of the Lambert-W function.
Then, $\nuc{\Hc_\tau - \T^s} \leq \varepsilon$ for all $s \geq \lfloor \tilde s(\varepsilon)\rfloor$.
\end{lemma}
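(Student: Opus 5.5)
We start from the trace bound \cref{eq:taylor-approximation}, applied to the rescaled keys $\tau^{-1}\K$ with scale $\beta$. Equivalently, we use kernel $h_\tau$ with effective parameter $a\defeq \beta\RK^2/\tau^2$, which gives
\begin{talign}
\nuc{\Hc_\tau - \T^s} \leq n\, e^{a}\Big(\frac{e a}{s+1}\Big)^{s+1}.
\end{talign}
This bound can be re-derived directly. Because $\Hc_\tau-\T^s$ is p.s.d., its trace is the sum of its diagonal entries. Each diagonal entry is a Taylor remainder $\sum_{p>s} x^p/p!$ with $0\le x=\beta\twonorm{\keys_l}^2/\tau^2\le a$. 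The Lagrange form bounds this remainder by $e^{a}a^{s+1}/(s+1)!$, and the Stirling-type bound $(s+1)!\ge ((s+1)/e)^{s+1}$ finishes the estimate. Summing over the $n$ diagonal entries gives the display above. It therefore suffices to show that $n e^{a}(ea/(s+1))^{s+1}\le\varepsilon$ for every $s\ge\lfloor\tilde s(\varepsilon)\rfloor$.

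Write $t\defeq s+1$ and $L\defeq\log(n/\varepsilon)$. The target condition becomes $t\log(t/(ea))\ge L+a$. The function $g(t)\defeq t\log(t/(ea))$ is increasing once $t\ge a$, since $g'(t)=\log(t/(ea))+1\ge 0$ exactly when $t\ge a$. So it is enough to find the threshold $t^*$ with $g(t^*)=L+a$, check that $t^*\ge a$, and show $s+1\ge t^*$. To solve for the threshold, substitute $t=ea\,e^{u}$. The equation becomes $ea\,u e^{u}=L+a$, that is, $ue^{u}=\frac{L}{ea}+\frac1e$, so $u=W_0\big(\frac{L}{ea}+\frac1e\big)$. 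Using the identity $e^{W_0(z)}=z/W_0(z)$, we get
\begin{talign}
t^* = ea\,e^{u}=\frac{ea\,(L/(ea)+1/e)}{u}=\frac{L+a}{W_0\big(\frac{L\tau^2}{e\beta\RK^2}+\frac1e\big)}=\tilde s(\varepsilon).
\end{talign}
This matches the definition in the statement. For monotonicity, note that the argument of $W_0$ is at least $1/e$, so $u\ge W_0(1/e)>0$ and hence $t^*=ea\,e^u\ge ea\ge a$. Therefore every $t\ge t^*$ satisfies $g(t)\ge L+a$.

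Finally, if $s\ge\lfloor\tilde s\rfloor$ then $s+1>\tilde s=t^*$, and the bound holds. The main obstacle is bookkeeping rather than substance. We must match exactly the constants produced by the Stirling bound and the $e^{a}$ prefactor, so that the Lambert-W argument comes out as $\frac{L}{ea}+\frac1e$. We must also handle the floor correctly, which works because $s+1$ exceeds $\tilde s$ whenever $s\ge\lfloor\tilde s\rfloor$. The degenerate case $a=0$ (all keys zero) is trivial, since then $\Hc_\tau=\T^s$ for every $s\ge 0$.
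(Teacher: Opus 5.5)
Your proposal is correct and is essentially the paper's proof: the trace bound $n e^{a}(ea/(s+1))^{s+1}$ from the Taylor remainder, the identification of $\tilde s(\varepsilon)$ via $e^{W_0(z)} = z/W_0(z)$ as the point where $(ea/t)^t = \varepsilon/(n e^{a})$, and monotonicity of $t\mapsto (ea/t)^t$ for $t \ge a$ combined with $s+1 > \tilde s$. Your diagonal-only derivation of the trace bound is, if anything, slightly cleaner than the paper's entrywise-max bound.

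One small tidy-up: your claim that the argument of $W_0$ is at least $1/e$ assumes $\varepsilon \le n$. The conclusion $t^* \ge a$ holds whenever $\tilde s(\varepsilon)$ is defined anyway, because $W_0 \ge -1$ gives $t^* = ea\,e^{u} \ge a$.
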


Meanwhile, \cref{lem:rank-entropy}, proved in \cref{proof-lem:binom}, bounds the rank of $\T^s$ in terms of its order. %
\begin{lemma}[Taylor rank bound]\label{lem:rank-entropy}
    For any $s\in\naturals$, 
    \begin{talign}
        \mathrm{rank}(\T^s) \!\leq \!\frac{1}{\sqrt{\pi}} n^{(\sigma + \delta)\mathrm{Ent}\paren{\frac{\sigma}{\sigma + \delta}}}\stext{for} (\sigma,\delta) \!\defeq \!(\frac{s}{\log(n)}, \frac{d}{\log(n)})
    \end{talign}
    and  $\mathrm{Ent}(p) \defeq -p\log(p) - (1-p)\log(1-p)$.
\end{lemma}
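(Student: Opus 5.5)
We bound the rank of $\T^s$ by the dimension of the polynomial feature space it factors through, and then control the resulting binomial coefficient with a Stirling-type entropy estimate.

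\textbf{Step 1 (rank bound).} Each summand $(\frac{\beta}{\tau^2}\inner{\keys_i}{\keys_l})^p$ is a polynomial in $\keys_i$ and $\keys_l$. We can write it as an inner product of degree-$p$ monomial feature vectors via the multinomial expansion. Hence $\T^s = \Phi\Phi^\top$, where $\Phi\in\reals^{n\times N}$ collects all monomials of total degree at most $s$ in $d$ variables, with suitable nonnegative coefficients. This gives
\begin{talign}
\mathrm{rank}(\T^s)\le N=\binom{s+d}{d}.
\end{talign}
This is the bound already quoted after \cref{thm:rp-cholesky} (see \cref{prop:polynomial-approximation}).

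\textbf{Step 2 (entropy estimate).} It remains to show
\begin{talign}
\binom{s+d}{s}\le \frac{1}{\sqrt{\pi}}\exp\big((s+d)\mathrm{Ent}(\tfrac{s}{s+d})\big).
\end{talign}
Once this holds, the substitution $s+d=(\sigma+\delta)\log(n)$ and $\frac{s}{s+d}=\frac{\sigma}{\sigma+\delta}$ turns the right-hand side into $\frac{1}{\sqrt\pi}n^{(\sigma+\delta)\mathrm{Ent}(\frac{\sigma}{\sigma+\delta})}$, which is the claim.

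For the estimate itself, take $N=s+d$ and $p=s/N$. When $s,d\ge1$, the two-sided Stirling bounds
\begin{talign}
\sqrt{2\pi k}\,(k/e)^k\le k!\le \sqrt{2\pi k}\,(k/e)^k e^{1/(12k)}
\end{talign}
yield the standard estimate
\begin{talign}
\binom{N}{pN}\le \frac{1}{\sqrt{2\pi Np(1-p)}}\,e^{N\mathrm{Ent}(p)}\,e^{1/(12N)}.
\end{talign}
Here the upper Stirling factor goes on $N!$ and the lower bounds on $s!$ and $d!$.

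Since $s,d\ge1$, we have $Np(1-p)=sd/(s+d)\ge 1/2$. The prefactor is therefore at most $\frac{1}{\sqrt{\pi}}e^{1/(12N)}$.

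\textbf{Step 3 (main obstacle: the stray factor).} The difficulty is removing the factor $e^{1/(12N)}$ to reach exactly $1/\sqrt\pi$. I would first check the smallest cases directly. For $s=d=1$ the claim reads $2\le 4/\sqrt\pi\approx2.26$. When $\min(s,d)=1$ it reads $N\le N^N/((N-1)^{N-1}\sqrt\pi)$, which holds since $(1+\frac{1}{N-1})^{N-1}\ge2>\sqrt\pi$.

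If $\min(s,d)\ge 2$, then $sd/(s+d)\ge1$. The prefactor becomes $(2\pi)^{-1/2}e^{1/24}<\pi^{-1/2}$, which absorbs the stray factor.

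Finally, the degenerate case $s=0$ (or $d=0$) gives rank $1$ and $\mathrm{Ent}=0$. There the bound $1\le 1/\sqrt\pi$ fails, so I would treat this case separately by noting that $s\in\naturals$ means $s\ge1$. I would also assume $d\ge1$ throughout.
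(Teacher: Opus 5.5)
Your proof is correct and follows the paper's route. You bound $\mathrm{rank}(\T^s)\le\binom{s+d}{d}$ via the monomial feature expansion, then use Stirling to get $\binom{s+d}{d}\le\frac{1}{\sqrt{2\pi}}\sqrt{\frac1s+\frac1d}\,e^{(s+d)\mathrm{Ent}(\frac{s}{s+d})}$ together with $\sqrt{\frac1s+\frac1d}\le\sqrt2$ for $s,d\ge1$. The only difference is that the paper invokes Robbins' two-sided bounds, whose lower-bound correction $e^{1/(12s+1)}$ on $s!$ already absorbs the stray factor $e^{1/(12(s+d))}$ (since $12s+1\le 12(s+d)$), so your Step 3 case analysis is valid but avoidable.
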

Combining \cref{prop:transformer-bound-via-A-approximation,prop:nystrom,cor:trace-polynomial-guarantees,lem:rank-entropy} with \cref{thm:rp-cholesky}, we arrive at the following attention approximation guarantee for \catt (proved in \cref{subsection:proof:thm:cmpd-attn-guarantees}). %
\begin{theorem}[\cattnolink guarantee]\label{thm:cmpd-attn-guarantees-non-aymptotic}
Let $\widehat {\mathbf O}_r$ be the output of \catt (\cref{alg:cmpd-attn}) with rank parameter $r$ and $B=1$. 
Fix $a\geq \frac{1}{2}$ and define the entry and dimension growth parameters,
    \begin{talign}
        \gamma \defeq \frac{\beta\RQ\RK}{\log(n)} \qtext{and}
        \delta \defeq \frac{d}{\log(n)},
    \end{talign}
    along with the Taylor growth parameter
    \begin{talign}\label{eq:taylor-order-explicit}
        \sigma \defeq 
        \frac{a + \gamma}{\LambW*{\frac{1}{2\rho_0 \gamma} + \frac{1}{\rho_0}}}\,.
    \end{talign}
    Then, \scalebox{0.9}{$\mathbb E\error[r] \leq 3\Vnorm n^{-a}$}
    provided
    \begin{talign}\label{eq:wildcat-r-bound}
        r \geq 1 + \frac{1}{\sqrt{\pi}}n^{(\sigma + \delta)\mathrm{Ent}\paren{\frac{\sigma}{\sigma + \delta}}}\log\paren{n^{2a + \sigma + 3\gamma}}.
    \end{talign}
    For $B > 1$, the same result holds with the effective sequence length and rank $(n_{\mathrm{eff}},r_{\mathrm{eff}})  = (\lfloor \frac{n}{B}\rfloor, \lceil \frac{r}{B}\rceil)$ in place of $(n,r)$.
\end{theorem}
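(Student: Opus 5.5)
We chain the earlier results in the order the paper lists them. First we pass to the rescaled problem. By the shift invariance of \cref{sec:rescaling}, we may work with recentred keys. Rescaling $\K\gets\tau^{-1}\K$, $\Q\gets\tau\Q$ leaves $\A$, $\O$ and the clipping range unchanged. \catt forms its \Nystrom approximation in this rescaled geometry, with kernel $h_\tau$ on keys and query radius $\tau\RQ$.

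Applying \cref{prop:transformer-bound-via-A-approximation} together with \cref{prop:nystrom} gives
\begin{talign}
\error[r] \leq \Vnorm\min\Big(\tfrac{3}{\sqrt n}\exp(\beta\tau^2\RQ^2/2)\sspecnorm{\Hc_\tau-\widehat\Hc^r}^{1/2}\big/\min_{ij}\A_{ij},\,2\Big).
\end{talign}
Cauchy--Schwarz gives $\min_{ij}\A_{ij}\geq \exp(-\beta\RQ\RK)=n^{-\gamma}$. Taking expectations and using Jensen's inequality for the square root, it suffices to ensure
\begin{talign}
\mathbb E\sspecnorm{\Hc_\tau-\widehat\Hc^r}\leq \varepsilon \defeq n^{1-2a-2\gamma}\exp(-\beta\tau^2\RQ^2).
\end{talign}
With this choice the bound becomes $3\Vnorm n^{-a}$. (The constant $3$ may need a slightly different bookkeeping of the $\sqrt n$ factor; I would track this carefully.)

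Next we invoke \cref{thm:rp-cholesky} with $\T=\T^s\preceq\Hc_\tau$. The inequality $\T^s\preceq\Hc_\tau$ holds because the Taylor tail of $\exp$ applied entrywise to a PSD Gram matrix is a sum of Schur powers, hence PSD. We choose $s$ via \cref{cor:trace-polynomial-guarantees} so that $\nuc{\Hc_\tau-\T^s}\leq\varepsilon$. Then the trace term in \cref{thm:rp-cholesky} contributes at most $\varepsilon(1/\varepsilon-1/\|\Hc\|)\leq 1$, which accounts for the ``$1+$'' in \cref{eq:wildcat-r-bound}. The rank term is bounded by \cref{lem:rank-entropy}. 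For the logarithm we use $\sspecnorm{\Hc_\tau}\leq n\exp(\beta\RK^2/\tau^2)$. The target is then to show
\begin{talign}
\log(\sspecnorm{\Hc_\tau}/\varepsilon)\leq (2a+\sigma+3\gamma)\log n
\qquad\text{and}\qquad
\lfloor\tilde s(\varepsilon)\rfloor\leq \sigma\log n
\end{talign}
when $\tau$ is the temperature \cref{eq:kernel-temperature}. Monotonicity of $n^{(\sigma+\delta)\mathrm{Ent}(\sigma/(\sigma+\delta))}$ in $\sigma$ lets us replace $s/\log n$ by the upper bound $\sigma$.

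The main obstacle is this last calculus step: substituting the specific $\tau$ and $\varepsilon$ into $\tilde s(\varepsilon)$ and simplifying via Lambert-W identities. Writing $x=\beta\RQ\RK$, we have
\begin{talign}
\beta\tau^2\RQ^2=x\,\tfrac{b_0}{2W_0(b_0/(2\rho_0))}
\qquad\text{and}\qquad
\beta\RK^2/\tau^2=x\,\tfrac{2W_0(\cdot)}{b_0}.
\end{talign}
I would first express $\log(n/\varepsilon)$ and the Lambert argument of $\tilde s$ in units of $\log n$. The next step is to use $e^{W_0(z)}=z/W_0(z)$ to collapse the ratio. The definition $\rho_0^2=1+e^{W_0(2/e^2)+2}$ is presumably exactly what makes the argument $\frac{\log(n/\varepsilon)\tau^2}{e\beta\RK^2}+\frac1e$ dominate $\frac{1}{2\rho_0\gamma}+\frac1{\rho_0}$, with $b_0/2=\frac{1}{2\gamma}+1$. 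The numerator should become $(a+\gamma)\log n$ up to the terms absorbed by $a\geq\frac12$.

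The $3\gamma$ and $\sigma$ in the log term should come from bounding $\beta\tau^2\RQ^2+\beta\RK^2/\tau^2$ by $\gamma\log n$-type quantities. If exact matching proves hard, I would instead check the needed inequalities as monotone one-variable inequalities in $\gamma$. Finally, for $B>1$ the same argument is applied bin by bin with $(n_{\mathrm{eff}},r_{\mathrm{eff}})$. Each row of $\hatA$ is a sum over bins of per-bin \Nystrom approximations, so the rowwise error is bounded by summing per-bin errors. The per-bin normalisation by $n_{\mathrm{eff}}$ in \cref{prop:transformer-bound-via-A-approximation} handles the factor $B$.
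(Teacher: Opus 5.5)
\textbf{Verdict: genuine gap.} Your reduction is the paper's own: it is \cref{lem:basis-change} applied with target rowwise error $n^{-a}\sqrt n\,e^{-\beta\RQ\RK}$. The following parts all go through:
\begin{itemize}
\item the choice of $\varepsilon$ and the constant $3$;
\item $\T^s\preceq\Hc_\tau$, and the ``$1+$'' coming from the trace term;
\item monotonicity of the rank bound in $\sigma$;
\item the logarithmic factor. Here $\rho\defeq\tau^2\RQ/\RK=b_0/(2W_0(b_0/(2\rho_0)))\ge\rho_0>1$ because $W_0(z)\le z$, so $1/\rho+2\le3$, and $\rho\gamma\le\sigma$ is equivalent to $a\ge\frac12$.
\end{itemize}

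The gap is the step you yourself flag as the main obstacle, $\lfloor\tilde s(\varepsilon)\rfloor\le\sigma\log n$. It carries the real content of the theorem, and it is left unproved. After substitution, with $x\defeq\beta\RQ\RK=\gamma\log n$, $b\defeq2a/\gamma+2$ and $b_0\defeq1/\gamma+2$,
\[
\tilde s(\varepsilon)=e\,x\,l_b(\rho),\qquad l_b(\rho)\defeq\frac{q}{\rho\,W_0(q)},\quad q\defeq\tfrac1e(\rho^2+b\rho+1),\qquad\text{whereas}\qquad\sigma\log n=\frac{x\,b\,\rho}{b_0}.
\]
So you must show $e\,l_b(\rho)\le b\rho/b_0$ for all $b\ge b_0$.

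Your heuristics point the wrong way here. The numerator of $\tilde s$ is $x(b+\rho+1/\rho)>2(a+\gamma)\log n$. It is not $(a+\gamma)\log n$ up to terms absorbed by $a\ge\frac12$. A factor of two must therefore be recovered from the denominator: $W_0(q)$ has to be roughly twice $W_0(b_0/(2\rho_0))$. Dominance of the Lambert argument only gives $W_0(q)\ge W_0(b_0/(2\rho_0))$, which is far too weak. Also, $\rho_0$ is not tuned to produce such a dominance; it is the minimiser of $l_0$.

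The missing idea is an optimisation analysis of $\rho\mapsto l_b(\rho)$ (\cref{lem:ode-rescaling-keys-queries,cor:basis-change-bounds}):
\begin{enumerate}
\item On $(\sqrt2,\infty)$, $l_b$ has a unique minimiser $\rho_b$, characterised by $W_0(q)=eq/(\rho^2-1)$, with minimal value $(\rho_b^2-1)/(e\rho_b)<\rho_b/e$.
\item The map $b\mapsto\rho_b$ increases along an ODE started at $\rho_0$, and a comparison argument gives $\rho_b\le b/(2W_0(b/(2\rho_0)))$.
\item Hence $\rho\ge\rho_{b_0}$. By continuity and coercivity there is $b_\tau\ge b_0$ with $\rho_{b_\tau}=\rho$, so $e\,l_{b_\tau}(\rho)\le\rho$.
\item To reach the actual $b$, use concavity of $z\mapsto z/W_0(z)$, whose derivative is $1/(1+W_0(z))$. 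This bounds $l_b(\rho)$ by its tangent line in $b$ at $b_\tau$.
\item The first-order condition turns the slope into $\frac{\rho^2-1}{e(2\rho^2+b_\tau\rho)}\le\frac{\rho}{e\,b_0}$.
\item The cases $b<b_\tau$ and $b\ge b_\tau$ both then give $e\,l_b(\rho)\le b\rho/b_0$, using $b\ge b_0$. This is the second place $a\ge\frac12$ is needed.
\end{enumerate}
A ``monotone one-variable check in $\gamma$'' would not suffice, because the inequality genuinely depends on both $\gamma$ and $a$.

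Separately, for $B>1$: summing per-bin rowwise errors with the triangle inequality loses a factor $\sqrt B$, and the $n_{\mathrm{eff}}$ normalisation does not absorb it. Since the bins occupy disjoint column blocks of $\hatA$, use $\rownorm{\A-\hatA}^2\le\sum_b\rownorm{\A^b-\hatA^b}^2$ together with $B\lfloor n/B\rfloor\le n$ instead. The paper does not write out this case either.
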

\cref{thm:cmpd-attn-guarantees-non-aymptotic} lets us easily identify conditions under which \catt guarantees super-polynomial accuracy in near-linear time. For example, in \cref{tab:comparison-approximate-attention-guarantees}, we compare the error decay guarantees of \cref{thm:cmpd-attn-guarantees-non-aymptotic} with those of various  practical attention approximations %
assuming bounded dimension, bounded entries, $m=n$, and $\bigO(dn^{1+t})$ runtime. Notably, Thinformer \citep{carrell2025low}, BalanceKV \citep{han2025streaming}, KDEformer \citep{zandieh2023kdeformer}, and HyperAttention without masking \citep{han2024hyperattention} all guarantee at best polynomial error decay, while \catt provides \emph{super}-polynomial $\bigO(n^{-\Omega(\log(\log(n)))t})$ error decay.

\begin{table}[tb]
    \centering
    \caption{\tbf{Practical approximation guarantees.}
    For each approximation $\Ohat$ to the softmax matrix $\O$ \cref{eq:attn-output} with $m=n$, we report, up to constants, the best worst-case error bound on $\error$ given 
    bounded dimension $d \in \bigO(1)$, 
    bounded entries $\beta\RQ^2,\beta\RK^2 \leq R^2\in O(1)$, 
    and $\bigO(dn^{1+t})$ runtime. Here, the ratios   $\opnorm{\V}/\maxnorm{\V}$ and $\fronorm{\V}/\maxnorm{\V}$
    lie in $[1,\sqrt{nd}]$, $\xi \defeq 0.173+o(1)$, and $\kappa \defeq e^{-1}(2\rho_0 + 1)$. 
    See \cref{proof-tab:comparison-approximate-attention-guarantees} for the proof of each guarantee.}\label{tab:comparison-approximate-attention-guarantees}
    \begin{tabular}{cc}
        \toprule
        \Centerstack{\bf Approximation}
         & \Centerstack{\bf Guarantee}
         \\\midrule\\[\tabletopskip] 
         \Centerstack{\bf Thinformer} 
         & %
         $\frac{\sqrt{\log( \smaxnorm{\V})}\log n}{n^{t}}\cdot\rownorm{\V}$ %
         \\[\tablelineskip]
         
          \Centerstack{\bf BalanceKV} 
         & %
         $\frac{(\log n)^3}{n^{t}}\cdot\fronorm{\V}$ %
         \\[\tablelineskip]        \Centerstack{\textbf{KDEformer}}%
         & %
         $\frac{n^{\xi/2}}{n^{t/2}}\cdot\specnorm{\V}$ %
          \\[\tablelineskip]
         \Centerstack{\textbf{HyperAttention}}%
         & $\frac{(\log n)^{{1/}{6}}}{n^{t/6}}\cdot\specnorm{\V}$%
         \\[\tablelineskip]
         \Centerstack{\textbf{\cattnolink}}%
         & $ \frac{\log n}{n^{0.14 t\log\paren{e + \log(n)/(\kappa R)}}}\cdot\maxnorm{\V}$
         \\[2mm]
         \bottomrule
    \end{tabular}%
\end{table}

Plugging in $t = 8/\sqrt{\log\log(n)}$, we observe that \catt can even deliver super-polynomial $O(n^{-\sqrt{\log \log(n)}})$ error decay in near-linear $O(dn^{1+o(1)})$ time. In fact, this remains true even when the entries and dimension are allowed to grow with the sequence length:
\begin{corollary}[Super-polynomial error decay in near-linear time]\label{cor:simple-superpoly}
Under the assumptions of \cref{thm:cmpd-attn-guarantees-non-aymptotic}, suppose $\beta\RQ\RK\in \bigO(\log(n)^\alpha)$ with $\alpha \in (0, 1)$,  $d\in o(\log(n))$, and $a(n) \in o(\log\log(n))$.
Then
\begin{talign}
\mathbb E\error[r] \leq 3\Vnorm n^{-a(n)}
\stext{for some}
r\in n^{o(1)}.
\end{talign}
\end{corollary}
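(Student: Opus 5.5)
We apply \cref{thm:cmpd-attn-guarantees-non-aymptotic} with $a = a(n)$ and show that the right-hand side of \cref{eq:wildcat-r-bound} is $n^{o(1)}$. Then taking $r$ equal to the ceiling of that quantity gives $\mathbb E\error[r]\le 3\Vnorm n^{-a(n)}$ with $r\in n^{o(1)}$. Without loss of generality we assume $a(n)\ge \tfrac12$; a larger $a$ only strengthens the conclusion, so we may replace $a(n)$ by $\max(a(n),\tfrac12)$, which is still $o(\log\log n)$. Under the hypotheses, $\gamma = \beta\RQ\RK/\log n \in O(\log(n)^{\alpha-1}) \to 0$ and $\delta = d/\log n\to 0$.

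First we bound the Taylor growth parameter $\sigma$. Since $\gamma\to0$, the Lambert-W argument $\frac{1}{2\rho_0\gamma}+\frac1{\rho_0}$ tends to infinity. Using $W_0(z)\ge \log z - \log\log z$ for large $z$ (\cref{sec:lambert-w-identities}), we get $W_0 \gtrsim \log(1/\gamma) \gtrsim (1-\alpha)\log\log n$. The numerator satisfies $a+\gamma = o(\log\log n)$, so $\sigma = (a+\gamma)/W_0(\cdot) \to 0$. More precisely, $\sigma \lesssim a/((1-\alpha)\log\log n) = o(1)$. We also record a lower bound: since $a\ge\frac12$ and $W_0(z)\le \log z$, we have $\sigma \gtrsim 1/\log\log n$. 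This ensures $\sigma$ does not decay too fast relative to $\delta$ in the next step.

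The main step is showing that the exponent $E\defeq(\sigma+\delta)\mathrm{Ent}\paren{\frac{\sigma}{\sigma+\delta}}$ tends to $0$. Set $S=\sigma+\delta\to0$ and $p=\sigma/S$. Then
\begin{align}
E = -\sigma\log p - \delta\log(1-p) = \sigma\log\paren{1+\tfrac{\delta}{\sigma}} + \delta\log\paren{1+\tfrac{\sigma}{\delta}}.
\end{align}
Using $x\log(1+y/x)\le y$ on each term where convenient, and otherwise $x\log(1+y/x)\le x\log(2\max(x,y)/x)$, each term is at most $S\log(2S/\min(\sigma,\delta))$-type quantities. This is the delicate part, because $\delta$ may be arbitrarily small (even $d$ fixed), and then the term $\delta\log(1+\sigma/\delta)$ must be handled by the bound $\delta\log(1+\sigma/\delta)\le \delta\log(1+\sigma/\delta)\to 0$. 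Writing $u=\delta$ and using that $u\log(1+c/u)\to0$ as $u\to0$ for bounded $c$, and that $\sigma\log(1+\delta/\sigma)\le\delta\to0$, we get $E\to0$. The cleanest route is the two-sided elementary bound $E\le \sigma\cdot\min(\delta/\sigma,\log(1+\delta/\sigma)) + \delta\log(1+\sigma/\delta)$ together with $\sigma,\delta\to0$ and $\sigma\le 1$. Hence $n^{E} = n^{o(1)}$.

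Finally, the logarithmic factor is $\log(n^{2a+\sigma+3\gamma}) = (2a+\sigma+3\gamma)\log n = o(\log\log n)\log n = n^{o(1)}$. The product of the two factors, plus $1$, is therefore $n^{o(1)}$, which completes the argument. If binning with $B>1$ is intended, the same computation applies to $n_{\mathrm{eff}}$ as long as $B$ is treated as fixed or as $n^{o(1)}$. We expect the main obstacle to be the careful Lambert-W asymptotics in the $\sigma$ step, since this is exactly where the constraints $\alpha<1$ and $a\in o(\log\log n)$ are used.
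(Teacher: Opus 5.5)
Your proposal is correct and follows the same route as the paper's proof. A logarithmic lower bound on $W_0$ gives $\sigma\lesssim a(n)/((1-\alpha)\log\log n)\in o(1)$; the paper uses \cref{lambert-lower-bound} rather than $\log z-\log\log z$, which is not in the cited section but is true for $z\ge e$. Your constant $1-\alpha$ is the right one, and the paper's $\alpha$ there is a slip. Together with $\delta\in o(1)$, both factors in \cref{eq:wildcat-r-bound} are $n^{o(1)}$.

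Two simplifications. First, the entropy step is not delicate: $\mathrm{Ent}\le\log 2$ gives $(\sigma+\delta)\mathrm{Ent}(\cdot)\le(\sigma+\delta)\log 2\to 0$ directly. Second, your claimed lower bound $\sigma\gtrsim 1/\log\log n$ does not follow from the hypotheses, since nothing prevents $\gamma$ from being far smaller than $(\log n)^{\alpha-1}$. It is never used, so just drop it.
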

\begin{proof}
    For $\beta\RQ\RK\in \bigO(\log(n)^\alpha)$ and $a(n) \in o(\log(\log(n)))$, a short calculation (see \cref{prop:asymptotic-behaviour-taylor-order-parameter}) shows that there exists a $c>0$ and $n_0 > 0$ such that $\sigma(n) \leq c\frac{a(n)}{\log\paren{1 + \gamma(n)^{-1}}} \leq c \frac{a(n)}{\alpha\log(\log(n))} \in o(1)$ for all $n>n_0$. Furthermore, by definition, $\delta = d/\log(n)\in o(1)$. Therefore, we obtain from \cref{thm:cmpd-attn-guarantees-non-aymptotic} that it is sufficient to take $r \sim n^{(\sigma + \delta)\mathrm{Ent}\paren{\frac{\sigma}{\sigma + \delta}}}\log\paren{n^{2a + \sigma + 3\gamma}} \in n^{o(1)}$ to guarantee $n^{-a(n)}$ error decay.
\end{proof}
Meanwhile, the HyperAttention, KDEformer, and Thinformer guarantees 
deliver, at best, near-constant $n^{-o(1)}$ error in near-linear time and  require quadratic time to guarantee super-polynomial error decay. 

While we followed \citet{alman2023fast,keles2023computational,carrell2025low} in stating entrywise error guarantees, our results also improve upon the operator norm guarantees established for KDEformer and Hyperattention and the $\rownorm{\cdot}$ guarantees established for BalanceKV. Indeed, even using  the lossy conversions  $\lVert\mathbf O - \widehat{\mathbf O}\rVert_{\mathrm{op}} \leq \sqrt{nd} \error$ and $\lVert \mathbf O - \widehat {\mathbf O}\rVert_{2,\infty} \leq \sqrt{d} \error$, our guarantees for $\error$ imply super-polynomial decay in near-linear time for the other norms. Meanwhile, the guarantees for prior work remain exactly as in \cref{tab:comparison-approximate-attention-guarantees} with each requiring quadratic time for super-polynomial error decay and achieving at best sub-polynomial, near-constant decay in near-linear time.

Pushing beyond the limits of prior work on the computational hardness of attention \citep{alman2023fast,keles2023computational}, our next corollary shows that \catt can achieve super-polynomial error decay in near-linear time even when the dimension or entries grow super-logarithmically in $n$. 
Our proof in \cref{subsection:derived-asymptotic results} uses the entropy factor in \cref{thm:cmpd-attn-guarantees-non-aymptotic} to refine the analysis in \cref{cor:simple-superpoly}.
\begin{corollary}[Refined super-polynomial error decay in near-linear time]\label{cor:guarantee-cmpd-attn-asymptotic}
Instantiate the assumptions of \cref{thm:cmpd-attn-guarantees-non-aymptotic}. 
If $\gamma(n) \defeq \frac{\beta\RQ\RK}{\log(n)} \in o(1)$, $\delta(n) \defeq \frac{d}{\log(n)}$, and
\begin{talign}
a(n) \in o\paren{\frac{\log(1/\gamma(n))}{\max\{\log(\delta(n)), 1\}}}\cap n^{o(1)},
\end{talign}
then $\mathbb E\error[r] \leq 3\Vnorm n^{-a(n)}$
    for some
    $r \in n^{o(1)}$.
The same conclusion holds if, alternatively, $\delta(n) \in o(1)$, $\gamma(n) \in \Omega(1)\cap n^{o(1/d)}$, and $a(n) \in n^{o(1/d)}$.
\end{corollary}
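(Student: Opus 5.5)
The plan is to apply \cref{thm:cmpd-attn-guarantees-non-aymptotic} directly and show that the right-hand side of \cref{eq:wildcat-r-bound} lies in $n^{o(1)}$. The factor $\log(n^{2a+\sigma+3\gamma}) = (2a+\sigma+3\gamma)\log n$ is in $n^{o(1)}$ whenever $a,\sigma,\gamma\in n^{o(1)}$, which holds under either set of hypotheses (for the second set we need $\sigma\in n^{o(1)}$, addressed below). So the crux is to show that the exponent $E\defeq(\sigma+\delta)\mathrm{Ent}\paren{\frac{\sigma}{\sigma+\delta}}$ tends to $0$. Once that is done we take $r$ equal to the right-hand side rounded up, which is $n^{o(1)}$, and the theorem gives $\mathbb E\error[r]\le 3\Vnorm n^{-a(n)}$.

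\emph{First regime} ($\gamma\in o(1)$). I will first bound $\sigma$. As $\gamma\to0$, the Lambert argument $\frac{1}{2\rho_0\gamma}+\frac1{\rho_0}$ diverges. Using $W_0(z)\ge \log z-\log\log z \gtrsim \log z$ for large $z$ (\cref{prop:asymptotic-behaviour-taylor-order-parameter}), I get $\sigma\le c\,(a+\gamma)/\log(1/\gamma)$. Since $a\ge\frac12$ and $\gamma\to0$, this gives $\sigma\le c'a/\log(1/\gamma)$.

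Next I bound the entropy term through the elementary estimate $(\sigma+\delta)\mathrm{Ent}\paren{\frac{\sigma}{\sigma+\delta}} = \sigma\log\frac{\sigma+\delta}{\sigma}+\delta\log\frac{\sigma+\delta}{\delta}\le \sigma\log\paren{1+\frac{\delta}{\sigma}}+\sigma$, using $\log(1+x)\le x$ for the second term. I then split into two cases.
\begin{itemize}
\item If $\delta\lesssim 1$, then $E\le \sigma(\log(1+\delta/\sigma)+1)$. This is $o(1)$ because $\sigma\to0$ and $\sigma\log(1/\sigma)\to0$. Here $\sigma\to 0$ because $a\in o(\log(1/\gamma))$ when $\max\{\log\delta,1\}=1$.
\item If $\delta$ is large, then $E\lesssim \sigma\log\delta+\sigma\log(1/\sigma)+\sigma$. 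The hypothesis $a\in o(\log(1/\gamma)/\log\delta)$ gives exactly $\sigma\log\delta\to0$, and the remaining terms vanish since $\sigma\to0$.
\end{itemize}
The main care point is making the bound $W_0(z)\gtrsim\log z$ uniform and keeping the constants consistent across the two cases. Concretely, I would write $E\le \sigma\paren{1+\log(1+\delta)+\log(1/\sigma)}$, valid for $\sigma\le1$, and treat both cases at once.

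\emph{Second regime} ($\delta\in o(1)$, $\gamma\in\Omega(1)\cap n^{o(1/d)}$, $a\in n^{o(1/d)}$). Now the Lambert argument is bounded above. It is also bounded below by $1/\rho_0$, so $W_0$ of it is bounded below by a positive constant. Hence $\sigma\lesssim a+\gamma\in n^{o(1/d)}$, and in particular $\sigma\in n^{o(1)}$, which settles the logarithmic factor. For the exponent I use the other form, $E\le \delta\log\paren{1+\frac{\sigma}{\delta}}+\delta$, obtained by symmetry with the roles of $\sigma$ and $\delta$ swapped. Multiplying by $\log n$, the rank bound becomes $n^{E}\le e^{d}\,(1+\sigma\log n/d)^{d}$. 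This equals $\binom{s+d}{d}$-type growth, which is $n^{o(1)}$ iff $d\log(1+\sigma\log n/d)\in o(\log n)$. Since $\sigma\in n^{o(1/d)}$ and $d\in o(\log n)$, I get $d\log(\sigma)\in o(\log n)$ and $d\log(\log n/d)\in o(\log n)$, because $x\log(1/x)\to0$ with $x=d/\log n$. The obstacle here is simply tracking that the $n^{o(1/d)}$ assumptions are exactly what is needed to make $d\log\sigma$ sublogarithmic, so both regimes give $r\in n^{o(1)}$.
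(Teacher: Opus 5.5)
Your proposal is correct and follows essentially the paper's route: you bound $\sigma$ via a logarithmic lower bound on $W_0$, then show that the entropy exponent $(\sigma+\delta)\mathrm{Ent}\big(\tfrac{\sigma}{\sigma+\delta}\big)$ is $o(1)$ and that the $\log(n^{2a+\sigma+3\gamma})$ factor is $n^{o(1)}$. Your direct estimates $E\le\sigma\big(1+\log(1+\delta)+\log(1/\sigma)\big)$ and its $\sigma\leftrightarrow\delta$ mirror simply inline the paper's binomial-coefficient growth lemma, whose hypothesis $\delta\in\exp(o(1)/\sigma)$ is equivalent to your condition $\sigma\log\delta\to 0$.
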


Let us consider two important implications of \cref{cor:guarantee-cmpd-attn-asymptotic}.  First, when $d$ is bounded (the typical case when one is working with a fixed model and focused on increasing its context length), \cref{cor:guarantee-cmpd-attn-asymptotic} supports unbounded entries with \emph{any} form of near-constant $\beta\RQ\RK\in n^{o(1)}$ growth, even super-logarithmic $\omega(\log(n))$ growth. In contrast, the near-linear-time theory of \citet[Thm.~3.8]{alman2023fast} only guarantees polynomial error decay for $\beta\RQ\RK\in o(\log n)$.

Second, \cref{cor:guarantee-cmpd-attn-asymptotic} also supports super-logarithmic dimension growth, $d=\omega(\log n)$. For example, when the scaled entries $\beta\RQ\RK$ are bounded, any choice of $d = \log(n)\exp(o(\log(\log n)))$ still leads to super-polynomial decay in near-linear time. In contrast, the near-linear-time theory of \citet{alman2023fast} only guarantees polynomial error decay for $d\in O(\log n)$. Interestingly,  \citet{keles2023computational} also provide a \emph{lower} bound on the speed of attention approximation when $d\in\omega(\log n)$.
Assuming the strong exponential-time hypothesis (a widely-believed conjecture in complexity theory), \citet[Thm.~6]{keles2023computational} shows that, for any $\eps>0$, approximating attention to absolute error $n^{-3d}e^{-3d^2}$ with $d=\omega(\log n)$ requires $\Omega(n^{2-\eps})$ time. However, this lower bound still allows for super-polynomial error decay in near-linear time like that established by \cref{cor:guarantee-cmpd-attn-asymptotic}.

\textbf{Guarantees for KV cache compression:}
The \compresskv methodology enables us to reduce the memory footprint of KV caches from linear $\Theta(nd)$ to near-constant $\mathcal O(rd)$ whenever $r \in n^{o(1)}$. Long-context inference via \wtdattn with the compressed cache $(\K_{\mc S}, \V_{\mc S}, \w)$ then still obeys the accuracy guarantees of \cref{thm:cmpd-attn-guarantees-non-aymptotic} and \cref{cor:guarantee-cmpd-attn-asymptotic}.

\subsection{Additional related work}

Two other fast attention methods use the Nystr\"om method in their methodology: \citet{xiong2021nystromformer} approximate the matrix $\Dinv\A$ with a Nystr\"om approximation, directly. The evaluation of any entry in $\Dinv\A$ requires the realisation of $m\times n$ entries. Nystr\"omformer therefore requires additional sketches of the softmax matrix and does not offer strong guarantees. \citet{chen2021skyformer}, on the other hand, use the Nystr\"om method to approximate a distinct Gaussian attention mechanism. Finally, \citet{chen2022sketching} uses sketching methods to construct low-rank approximations of the attention mechanism.
\begin{figure*}[!t]
\centering
\begin{minipage}{\textwidth}
    \begin{minipage}{\samplempwid\textwidth}
    \centering
    Exact
    \includegraphics[width=\textwidth]{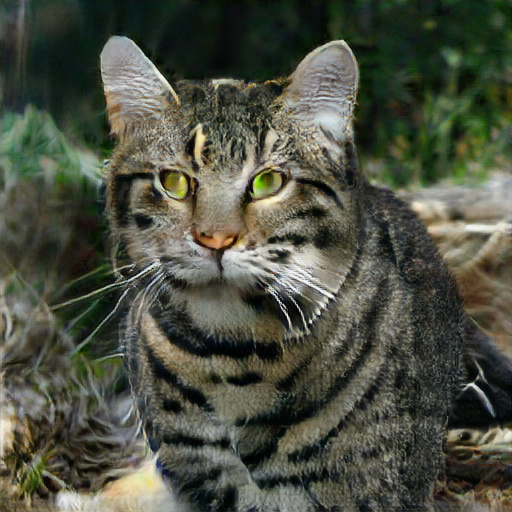}
    \end{minipage}
\hspace{-\samplehinterval}
    \begin{minipage}{\samplempwid\textwidth}
    \centering
    \textsc{WildCat} %
    \includegraphics[width=\textwidth]{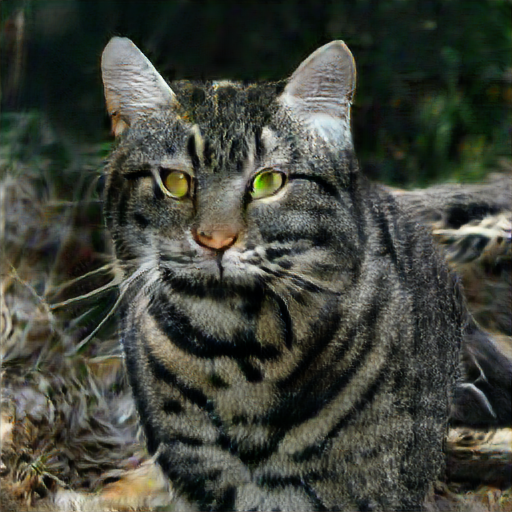}
    \end{minipage}
\hspace{-\samplehinterval}
    \begin{minipage}{\samplempwid\textwidth}
    \centering
    Performer %
    \includegraphics[width=\textwidth]{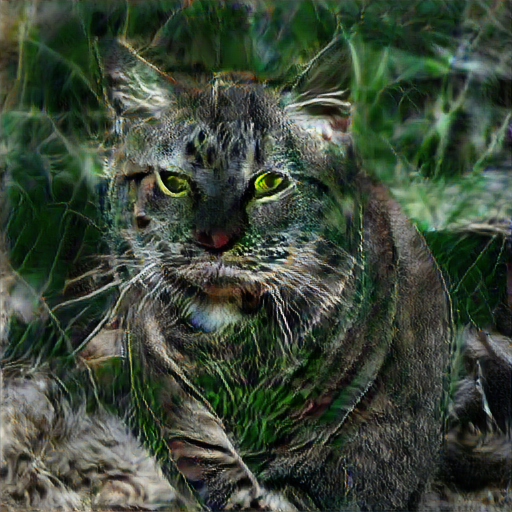}
    \end{minipage}
\hspace{-\samplehinterval}
    \begin{minipage}{\samplempwid\textwidth}
    \centering
    Reformer %
    \includegraphics[width=\textwidth]{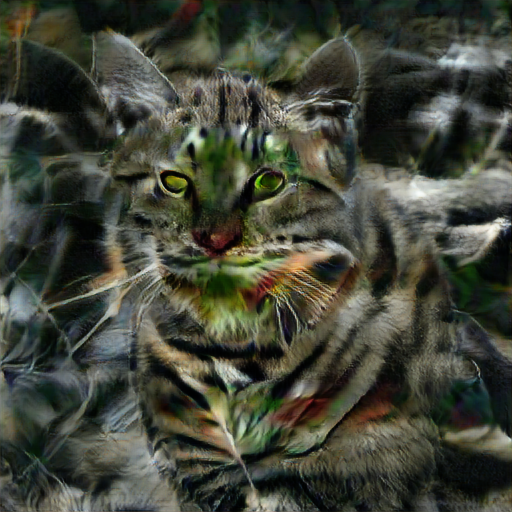}
    \end{minipage}
\hspace{-\samplehinterval}
    \begin{minipage}{\samplempwid\textwidth}
    \centering
    KDEformer %
    \includegraphics[width=\textwidth]{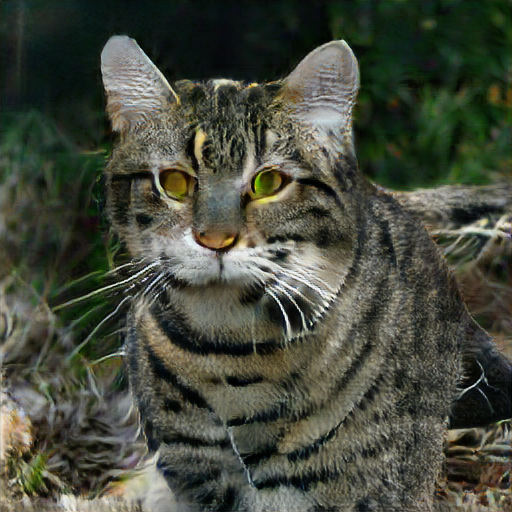}
    \end{minipage}
\hspace{-\samplehinterval}
    \begin{minipage}{\samplempwid\textwidth}
    \centering
    Thinformer %
    \includegraphics[width=\textwidth]{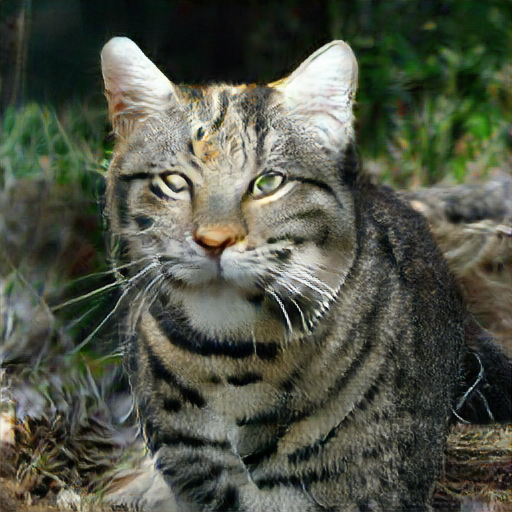}
    \end{minipage}
\hspace{-\samplehinterval}
\end{minipage}

\centering
\caption{\textbf{Example generations from BigGAN with exact or approximate attention.}
}
\vspace{-2mm}
\label{fig:biggan_result_visualisation}
\end{figure*}
\begin{table*}[t]
\caption{\tbf{Quality of attention approximations for BigGAN image generation.} 
We report speed-ups over $10$ batches of $32$ images and mean degradation ($\pm 1$ standard deviation across five seeds) of the Inception Score (IS) and Frechet Inception Distance (FID) between $5$K generations and the ImageNet 2012 validation set.}%
    \label{tab:biggan}
\small    
\centering

\begin{tabular}{cccc}
\toprule
\textbf{Attention Algorithm} & \textbf{Speed-up over Exact} & \textbf{IS Degradation (\%)} & \textbf{FID Degradation (\%)} \\
\midrule
\textbf{Reformer} & $0.69\times$ & $66.55 \pm 0.52$ & $124.20 \pm 1.19$ \\[1mm]
\textbf{ScatterBrain} & $1.75\times$ & $36.77 \pm 0.50$ & $20.87 \pm 1.25$ \\[1mm]
\textbf{Performer} & $3.46\times$ & $35.14 \pm 0.82$ & $4.01 \pm 0.91$ \\[1mm]
\textbf{KDEformer} & $0.72\times$ & $2.02 \pm 0.81$ & $\mathbf{0.00} \pm 0.00$ \\[1mm]
\textbf{Thinformer} & $2.32\times$ & $1.79 \pm 0.31$ & $\mathbf{0.00} \pm 0.00$ \\[1mm]
\textbf{\textsc{WildCat}} & $\mathbf{4.33\times}$ & $\mathbf{1.22} \pm 0.87$ & $\mathbf{0.00} \pm 0.00$ \\
\bottomrule
\end{tabular}
\end{table*}

Our method relies on the reduction of a dataset to a representative weighted coreset with similarity of datapoints measured by the attention kernel. From this standpoint, it falls into the class of \emph{distribution compression} methods which aim to succinctly summarize an empirical or population distribution using a small collection of representative points and into the class of \emph{kernel quadrature} methods which aim to accurately approximate expectations of functions in an reproducing kernel Hilbert space \citep{aronszajn1950theory}. 
Early works that use kernels for distribution compression with unweighted coresets include \citet{dwivedi2024kernel, dwivedigeneralized,shetty2022distribution,gong2024supervised,carrell2025low}. Additional strategies to achieve compression with guarantees for weighted coresets were proposed in \citet{hayakawa2022positively, epperly2023kernel, li2024debiased}.  A method to approximate ratios of kernel sums using the \Nystrom method was explored empirically in \citet{gong2024supervised} but without providing guarantees.
\section{Experiments}
\label{sec:experiments}
We now turn to an empirical evaluation of our new tools on a suite of standard approximate attention benchmarks. See
\begin{align}
\text{\url{https://github.com/microsoft/wildcat}}
\end{align}
for open-source PyTorch \citep{paszke2019pytorch} code recreating all experiments and \cref{app:experiment-details} for supplementary experiment details.
\begin{table*}[t]
\caption{\tbf{Quality of attention approximations for T2T-ViT ImageNet classification.}
We report speed-ups over $50$ batches of $64$ images and mean Top-$1$ accuracy $\pm1$ standard deviation across five seeds.}%
    \label{tab:t2t}
    \small
    \centering
\begin{tabular}{cccc}
\toprule
\textbf{Attention Algorithm} & \textbf{Top-1 Accuracy (\%)} & \textbf{Layer 1 Speed-up} & \textbf{Layer 2 Speed-up} \\
\midrule
\textbf{Exact} & $82.55 \pm 0.00$ & $1.00\times$ & $1.00\times$ \\[0.5mm]
\textbf{Performer} & $80.91 \pm 0.18$ & $7.29\times$ & $1.82\times$ \\[0.5mm]
\textbf{Reformer} & $81.47 \pm 0.06$ & $2.35\times$ & $0.92\times$ \\[0.5mm]
\textbf{KDEformer} & $82.04 \pm 0.02$ & $3.28\times$ & $0.49\times$ \\[0.5mm]
\textbf{ScatterBrain} & $82.05 \pm 0.02$ & $2.65\times$ & $0.77\times$ \\[0.5mm]
\textbf{Thinformer} & $82.16 \pm 0.02$ & $8.84\times$ & $2.61\times$ \\[0.5mm]
\textbf{\textsc{WildCat}} & $82.19 \pm 0.04$ & $11.59\times$ & $2.65\times$ \\
\bottomrule
\end{tabular}
\end{table*}
\begin{table*}[!t]
\caption{\label{tab:longbenche}
\textbf{Quality of KV cache compression for LongBench-E long-context language understanding.}
}
\centering
\small
\begin{tabular}{@{\hspace{2.5pt}}c@{\hspace{4pt}}c@{\hspace{4pt}}c@{\hspace{4pt}}c@{\hspace{4pt}}c@{\hspace{4pt}}c@{\hspace{4pt}}c@{\hspace{4pt}}c@{\hspace{4pt}}c@{\hspace{4pt}}c@{\hspace{4pt}}c@{\hspace{4pt}}c@{\hspace{4pt}}c@{\hspace{4pt}}c@{\hspace{4pt}}c@{\hspace{2.5pt}}}
\toprule
\textbf{Method} & \textbf{qasper} & \textbf{multifield} & \textbf{hotpot} & \textbf{2wiki} & \textbf{gov} & \textbf{multinews} & \textbf{trec} & \textbf{trivia} & \textbf{samsum} & \textbf{p.count} & \textbf{p.ret} & \textbf{lcc} & \textbf{repo-p} & \textbf{average} \\
\midrule
\multicolumn{15}{c}{\textbf{75.0\% Compression}} \\
\midrule
\textbf{Exact} & 43.76 & 50.08 & 56.43 & 43.71 & 34.16 & 24.32 & 64.00 & 87.53 & 38.56 & 14.67 & 99.67 & 69.96 & 62.17 & 53.00 \\[.5mm]
\textbf{StreamingLLM} & 23.17 & 25.49 & 26.45 & 20.89 & 29.65 & 22.25 & 52.33 & 75.17 & 35.79 & 12.50 & 24.83 & 68.48 & 56.22 & 36.40 \\[.5mm]
\textbf{PyramidKV} & 21.59 & 29.96 & 39.38 & 30.24 & 27.12 & 21.13 & 43.33 & 86.77 & 38.27 & 15.88 & 61.06 & 67.40 & 59.20 & 41.64 \\[.5mm]
\textbf{BalanceKV} & 29.50 & 36.57 & 37.89 & 23.71 & 30.27 & 21.98 & 55.00 & 73.83 & 34.56 & 12.67 & 71.67 & 65.48 & 62.57 & 42.75 \\[.5mm]
\textbf{Uniform} & 26.91 & 37.51 & 38.46 & 25.93 & 30.02 & 21.86 & 54.33 & 81.71 & 35.37 & 15.33 & 63.44 & 64.84 & 61.34 & 42.85 \\[.5mm]
\textbf{SnapKV} & 25.52 & 30.13 & 44.36 & 31.80 & 29.70 & 22.10 & 49.33 & 88.32 & 37.15 & 16.67 & 89.06 & 69.16 & 56.33 & 45.36 \\[.5mm]
\textbf{CompressKV} & 33.23 & 38.13 & 43.43 & 33.37 & 30.30 & 22.26 & 54.33 & 86.15 & 35.38 & 14.33 & 98.00 & 64.85 & 60.43 & \textbf{47.25} \\[.5mm]
\midrule
\multicolumn{15}{c}{\textbf{87.5\% Compression}} \\
\midrule
\textbf{Exact} & 43.76 & 50.08 & 56.43 & 43.71 & 34.16 & 24.32 & 64.00 & 87.53 & 38.56 & 14.67 & 99.67 & 69.96 & 62.17 & 53.00 \\[.5mm]
\textbf{StreamingLLM} & 19.27 & 24.62 & 24.34 & 21.55 & 26.40 & 20.49 & 47.33 & 71.71 & 33.61 & 10.67 & 15.67 & 67.24 & 59.13 & 34.00 \\[.5mm]
\textbf{PyramidKV} & 17.69 & 24.43 & 31.94 & 26.08 & 25.48 & 20.34 & 40.33 & 87.41 & 37.68 & 13.89 & 42.22 & 66.61 & 58.74 & 37.91 \\[.5mm]
\textbf{BalanceKV} & 17.90 & 28.87 & 27.79 & 17.94 & 27.45 & 20.78 & 45.67 & 62.84 & 33.29 & 10.67 & 32.22 & 60.95 & 60.90 & 34.41 \\[.5mm]
\textbf{Uniform} & 16.79 & 30.22 & 27.98 & 18.50 & 27.10 & 20.90 & 44.67 & 68.40 & 33.95 & 13.00 & 26.00 & 62.08 & 59.91 & 34.58 \\[.5mm]
\textbf{SnapKV} & 16.36 & 25.74 & 35.15 & 24.96 & 26.38 & 20.76 & 45.17 & 88.29 & 37.09 & 14.00 & 58.11 & 68.84 & 56.41 & 39.79 \\[.5mm]
\textbf{CompressKV} & 23.16 & 30.14 & 35.27 & 24.96 & 27.70 & 21.16 & 42.33 & 83.23 & 34.33 & 14.67 & 87.06 & 63.46 & 60.05 & \textbf{42.12} \\[.5mm]
\midrule
\multicolumn{15}{c}{\textbf{93.75\% Compression}} \\
\midrule
\textbf{Exact} & 43.76 & 50.08 & 56.43 & 43.71 & 34.16 & 24.32 & 64.00 & 87.53 & 38.56 & 14.67 & 99.67 & 69.96 & 62.17 & 53.00 \\[.5mm]
\textbf{StreamingLLM} & 13.98 & 22.17 & 23.05 & 21.25 & 23.27 & 18.34 & 38.33 & 65.88 & 31.82 & 6.67 & 8.11 & 63.30 & 56.67 & 30.22 \\[.5mm]
\textbf{PyramidKV} & 11.46 & 22.21 & 30.97 & 23.14 & 22.94 & 18.86 & 32.00 & 85.48 & 36.66 & 11.67 & 18.78 & 64.06 & 56.46 & 33.44 \\[.5mm]
\textbf{BalanceKV} & 10.09 & 24.12 & 19.80 & 18.34 & 24.17 & 19.62 & 24.33 & 51.58 & 32.23 & 8.67 & 5.33 & 58.16 & 59.15 & 27.35 \\[.5mm]
\textbf{Uniform} & 11.24 & 24.06 & 21.17 & 16.70 & 24.22 & 19.38 & 31.83 & 54.58 & 32.97 & 11.67 & 7.00 & 55.81 & 58.48 & 28.39 \\[.5mm]
\textbf{SnapKV} & 11.47 & 22.52 & 30.79 & 23.13 & 23.18 & 18.89 & 31.33 & 86.09 & 36.63 & 11.67 & 18.78 & 64.51 & 55.67 & 33.44 \\[.5mm]
\textbf{CompressKV} & 15.03 & 24.71 & 28.99 & 24.01 & 24.63 & 19.51 & 26.00 & 80.27 & 34.13 & 13.00 & 62.33 & 62.28 & 60.26 & \textbf{36.55} \\[.5mm]
\bottomrule
\end{tabular}
\end{table*}
\subsection{Benchmarking image generation}\label{sec:biggan}
We begin with the BigGAN image generation benchmark of \citet{carrell2025low}.
BigGAN \citep{brocklarge} is a generative adversarial network for image generation containing a single attention layer, which, for images of size $512\times 512$, has input tensors $\Q \in \mathbb R^{4096\times 64}, \K \in \mathbb R^{1024\times 64},$ and $ \V\in \mathbb R^{1024\times 256}$. 
The BigGAN benchmark evaluates the quality of attention approximations used as drop-in replacements for exact attention in a BigGAN model pretrained on ImageNet \citep{deng2009imagenet}. 
Using the settings and implementations provided by \citet{carrell2025low}, we benchmark \catt (with $r=96$ and $B=8$) against exact attention and five leading approximate attention mechanisms: Reformer, ScatterBrain, Performer, KDEformer, and Thinformer.
\cref{fig:biggan_result_visualisation} displays example generations, and \cref{tab:biggan} reports the gain in speed and the loss in  quality from using each approximation to generate $5000$ images. We observe that \catt yields the largest speed-up ($4.33\times$), the smallest degradation in Inception Score \citep[IS,][]{salimans2016improved} (just $1.22\%$), and, surprisingly, no degradation in Frechet Inception Distance \citep[FID,][]{heusel2017gans}.  

\subsection{Benchmarking image classification}\label{sec:t2t}
We next replicate the  Tokens-to-Token Vision Transformer (T2T-ViT) image classification benchmark of \citet{carrell2025low}, where attention approximations are used as drop-in replacements for exact attention in the computationally demanding tokens-to-token module. T2T-ViT \citep{yuan2021tokens}
splits an input image into a large number of overlapping patches which are progressively reduced to a smaller number of tokens by two attention layers. 
The T2T-ViT benchmark uses a model pretrained on ImageNet with images of size $224\times 224$, layers of size $(n_1,d_1) = (3136,64)$ and $(n_2,d_2) = (784,64)$, and a computational cost dominated by the larger first layer.

In \cref{tab:t2t}, we benchmark \catt, with $(r_1,B_1)=(224,224)$ for the first layer and $(r_2,B_2) = (196,196)$ for the second, against exact attention and the five leading approximate attention mechanisms of \cref{sec:biggan} using the settings and implementations provided by \citet{carrell2025low}. 
Amongst the approximations, \catt provides the highest mean Top-$1$ accuracy ($82.19\%$ vs. $82.55\%$ for exact) while also yielding the lowest runtime for each layer, including an $11.59\times$ speed-up for the dominant layer 1.

\subsection{Benchmarking KV cache compression}\label{sec:kvcache}
Finally, we evaluate the performance of  \compresskv on $13$ benchmark KV cache compression tasks with the Qwen2.5-7B-Instruct language model \citep{yang2024qwen2}. In transformer-based autoregressive generative models, only the queries, keys, and values associated with the last decoded token of the sequence %
have to be computed from hidden states, while $(\keys_l, \values_l)$ for $l < n$ can be stored in a cache to avoid recomputation. However, as the context length $n$ increases, the KV cache memory eventually becomes a bottleneck, limiting the maximum number of past tokens that can be considered during inference. 
KV cache compressors conserve memory by extracting a smaller set of $r$ keys and values from the context and attending only over those $r$ context pairs during generation.

We begin with an empirical verificiation of the assumptions underlying our strongest compression guarantees. 
Specifically, we test the assumptions of \cref{cor:guarantee-cmpd-attn-asymptotic} using Qwen2.5-7B-Instruct and document-grounded question answering sequences from the QASPER-E dataset \citep{bai2024longbench}. Since $d$ is constant for any fixed model, \cref{cor:guarantee-cmpd-attn-asymptotic} applies as long as $\beta \RQ\RK \in n^{o(1)}$. In \cref{tab:gamma-empirical}, we find that $\gamma(n) = \tfrac{\beta \RQ\RK}{\log(n)}$, averaged all layers and the first $10$ sequences with $n\geq 16384$, is not only bounded but is in fact decreasing with $n$. By \cref{cor:guarantee-cmpd-attn-asymptotic}, \compresskv can therefore approximate attention with super-polynomially decaying error using a near-constant cache size $r$ for this model and task. 
Interestingly, this concordance with our assumptions is also implied by the work of \citet[proof of Thm.~2.2]{velivckovic2025softmax}, who showed that \emph{any} fixed transformer-based model with a finite vocabulary has all query and key norms bounded independently of the sequence length $n$.
\begin{table}[t!]
\caption{For document-grounded question answering with Qwen2.5-7B-Instruct, the entry growth factor $\gamma(n) = \tfrac{\beta \RQ\RK}{\log(n)}$ of \cref{cor:guarantee-cmpd-attn-asymptotic} decreases as a function of context length $n$.}
\label{tab:gamma-empirical}
\centering
\small
\begin{tabular}{cccccccc}
\toprule
$\hspace{-0.1cm}\boldsymbol{n}$\!\! &4 & 16 & 64 & 256 & 1024 & 4096 & 16384 \!\!\!\\
\midrule
$\hspace{-0.1cm}\boldsymbol{\gamma(n)}$\!\! & 14.95 & 9.55 & 7.48 & 6.70 & 6.23 & 5.86 & 5.63 \!\!\!\\
\bottomrule
\end{tabular}
\vskip -0.1in
\end{table}

Following the experimental setup of \citet{han2025streaming}, we next benchmark compression quality using the $13$  LongBench-E tasks of \citet{bai2024longbench}. 
These diverse tasks cover a wide range of long-context language understanding applications including single and multi-document question answering, summarisation, few-shot learning, and code completion. 
Following \citet{han2025streaming}, we compress each cache by $75\%$, $87.75\%$, and $93.75\%$ 
and benchmark \compresskv with $B=\frac{r}{12}$ against no cache compression (``Exact'') and five leading cache compression methods: 
StreamingLLM \citep{xiaoefficient}, PyramidKV \citep{cai2024pyramidkv}, BalanceKV and Uniform \citep{han2025streaming}, and SnapKV \citep{li2024snapkv}. We use the implementations of \citet{han2025streaming} for BalanceKV and Uniform and those provided by KVPress \citep{devoto2025expected} for the remaining methods. 
As in \citet{han2025streaming}, BalanceKV, Uniform, and \compresskv all retain the first and last $32$ context tokens and compress the remaining tokens to achieve the desired compression level. 

\cref{tab:longbenche} reports a standard measure of compression quality for each LongBench-E task as well as the average compression quality across all $13$ tasks. 
Remarkably, for each compression level, \compresskv yields the highest average compression quality and the highest individual task quality on a plurality of the $13$ tasks. %

\subsection{Benchmarking against FlashAttention 2}\label{flash}
We additionally benchmark \catt attention with $r=64$ and $B=16$ against the highly-optimized, I/O-aware FlashAttention 2 \citep[FA2,][]{dao2024flashattention} implementation of exact attention using $(\Q, \K, \V)$ inputs with $d=64$, $n$ ranging from $2^{13}$ to $2^{18}$, and independent standard Gaussian entries. As the sequence length increases, we observe in \cref{fig:flash} both a steady increase in speed-up over FA2 (from $1.1\times$ to $68\times$) and a steady decrease in approximation error $\error[]$. Additional ablations over the $r$ and $B$ parameters can be found in \cref{supp_flash}.

\begin{figure}
    \centering
    \includegraphics[width=\linewidth]{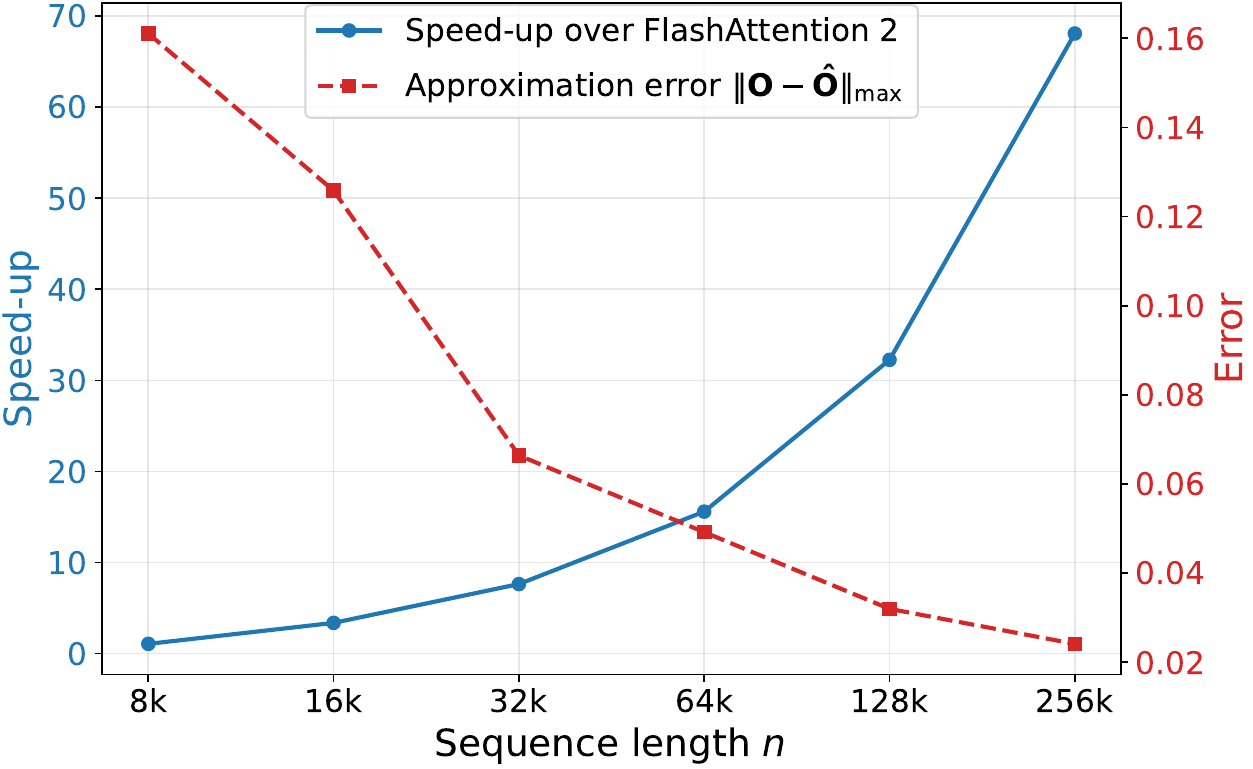}
    \caption{\catt approximation error and speed-up over Flash Attention 2. See \cref{flash} for more details.}
    \label{fig:flash}
\end{figure}
\section{Conclusions}
We introduced \catt, a principled method for cheaply, accurately, and practically approximating  softmax attention. \catt efficiently distills the information from all keys and values into a small coreset optimally weighted for attention reconstruction. Our fast but spectrally-accurate subsampling procedure and optimised weighting allow us to achieve super-polynomially decaying error guarantees while maintaining near-linear runtime and near constant KV cache sizes. To bridge the gap between theory and practice, we additionally developed a GPU-optimised PyTorch implementation and demonstrated the practical benefits of \catt for image generation, image classification, and KV cache compression.
That said, this work is not without its limitations. For example, this work does not address the important problem of streaming generation with causal masking, and we suspect such an extension is possible via prefix sums as in \citet{choromanskirethinking} or divide-and-conquer evaluations as in \citet{han2024hyperattention}. 
In future work, we also aim to address the path dependence and sequential nature of the pivot selection process in \rpnys. This could be achieved by oversampling mechanisms, such as those used in accelerated RPCholesky \citep{epperly2024embrace} and recursive leverage-score sampling methods \citep{musco2017recursive, rudi2018fast}. Fortunately, our modular analysis does allow for us to substitute any of these fast, spectrally accurate subsampling algorithms for \rpnys.  However, these alternative procedures  adaptively adjust coreset sizes across iterations, introducing additional challenges for batch-wise processing.

Finally, while our methodology naturally extends to other forms of kernelised attention \citep{tsai2019transformer}, more work may be required to extend our analysis. Such extensions are possible with sufficient knowledge of the spectral properties of the kernel data matrix. One set of tools for this purpose consists of sampling inequalities that express the low-rank approximability of a kernel matrix in terms of the fill distance of the underlying domain \citep{narcowich2005sobolev, RiegerSampling, fuselier2012scattered, altschuler2019massively}. We suspect such tools will also yield improved runtime and error guarantees under additional smoothness or manifold \citep{zhu2018ldmnet} assumptions on the attention inputs.

\section*{Impact Statement}
By improving the trade-off between resource consumption and model quality, \catt and \compresskv have the potential 
to reduce energy costs, 
to extend model access to resource-constrained settings, and 
to facilitate scientific discovery. However, we caution that any approximate attention tool should be deployed responsibly and only after evaluating the suitability and safety of the associated model. 

\section*{Acknowledgments}
The authors thank Insu Han for sharing his code, model settings, and valuable advice concerning the image generation and image classification benchmarks. TS was supported by an EPSRC-DTP scholarship, partially funded by the Department of Mathematics at Imperial College London. TS thanks G-Research for financial support to attend the conference. Part of this research was conducted during TS’s internship at Microsoft Research New England.

\bibliography{refs}
\bibliographystyle{Styles/icml2026}

\newpage
\appendix
\onecolumn
     \etoctocstyle{1}{Appendix Contents}
    \etocdepthtag.toc{mtappendix}
    \etocsettagdepth{mtchapter}{none}
    \etocsettagdepth{mtappendix}{section}
    \etocsettagdepth{mtappendix}{subsection}
    \etocsettagdepth{mtappendix}{subsubsection}
    {\small\tableofcontents}
\numberwithin{theorem}{section}
\numberwithin{lemma}{section}
\numberwithin{definition}{section}
\numberwithin{proposition}{section}
\numberwithin{corollary}{section}
\numberwithin{figure}{section}
\numberwithin{table}{section}
\numberwithin{algocf}{section}
\section{\pcref{prop:transformer-bound-via-A-approximation}}\label{subsection:proof:cat-error-decomposition}
 Introduce the shorthand $\Amin\defeq \min_{i\in[m],l\in[n]}\A_{il}$. 
 Since $\Dinv\A$ is a row-stochastic matrix, and $\CAtt = \mathrm{clip}(\hatDinv\hatA \V, \values_\mathrm{min}, \values_\mathrm{max})$ we immediately have the upper bound \begin{align}
    \smaxnorm{\O-\CAtt}\leq \smaxnorm{\vmax-\vmin}\leq 2\Vnorm.
\end{align}
We now consider two cases. First suppose that $\rownorm{\A-\hatA} \geq \sqrt{n}\Amin$.  Then $\frac{\frac{3}{\sqrt{n}}\rownorm{\A-\hatA}}{\Amin} \geq 3 > 2$, so  the advertised bound \cref{eq:clipped-catt-bound-proposition} holds.

Next suppose that $\rownorm{\A-\hatA} < \sqrt{n}\Amin$. 
In this case, $\hatD$ has all positive entries as, for each $i\in[m]$,
\begin{align}
\hatD_{ii}
    =
\D_{ii}
    +
\hatD_{ii} - \D_{ii}
    =
\e_i^\top\A\boldone_n
    +
\e_i^\top(\hatA - \A)\boldone_n
    \geq
n\min_{l\in[n]}\A_{il}
    -
\rownorm{\hatA - \A}\twonorm{\boldone_n}
    > 
0, 
\end{align}
where we have used  \Holder's inequality and the definition of $\rownorm{\cdot}$.

Now consider the decomposition
\begin{align}\label{eq:output-error-decomp}
\O - \hatO
    &=
(\Dinv\A \V - \Dinv\hatD \hatO)
    +
(\Dinv\hatD \hatO -\hatO)
    =
\Dinv(\A \V - \hatD \hatO) 
    +
(\Dinv -\hatDinv) \hatD \hatO.
\end{align}
Using \Holder's inequality, the definition of $\rownorm{\cdot}$, and the clipping of $\hatO$, we find that, for each $i\in[m]$ and $j\in[n]$, 
\begin{align}\label{eq:Dinv-difference}
|((\Dinv -\hatDinv) \hatD \hatO)_{ij}|
    =
|(\Dinv(\hatD -\D)\hatO)_{ij}|
    =
\frac{|\e_i^\top(\hatA-\A)\boldone_n|}{\e_i^\top\A\boldone_n}|\hatO_{ij}|
    \leq
\frac{\frac{1}{\sqrt{n}}\rownorm{\A-\hatA}\Vnorm}{\Amin}.
\end{align}

Next, since $\hatD$ has positive entries, we can write 
\begin{align}
\hatD \hatO = \clip(\hatA\V, \hatD\vmin, \hatD\vmax),
\quad 
\A \V = \clip(\A\V, \D\vmin, \D\vmax),
\end{align}
and therefore, by the triangle inequality, \Holder's inequality, and the definition of $\rownorm{\cdot}$
\begin{align}\label{eq:av-hatdo-diff}
\smaxnorm{\A \V - \hatD \hatO}
    &\leq
\smaxnorm{\clip(\A\V, \D\vmin, \D\vmax)
    -
\clip(\A\V, \hatD\vmin, \hatD\vmax)} \\
    &+ 
\smaxnorm{\clip(\A\V, \hatD\vmin, \hatD\vmax)
    -
\clip(\hatA\V, \hatD\vmin, \hatD\vmax)} \\
    &\leq
\smaxnorm{\D-\hatD}\Vnorm
    +
\max_{i\in[m],j\in[n]}|\e_i^\top(\A-\hatA)\V\e_j|\\
    &\leq
\sqrt{n}\rownorm{\A-\hatA}\Vnorm
    +
\max_{j\in[n]}\rownorm{\A-\hatA}\twonorm{\V\e_j}
    \leq
2\sqrt{n}\rownorm{\A-\hatA}\Vnorm.
\end{align}
Finally, since $\smaxnorm{\Dinv} \leq 1/(n\Amin)$, we conclude that $\error\leq \frac{\frac{3}{\sqrt{n}}\rownorm{\A-\hatA}\Vnorm}{\Amin}$ from \cref{eq:output-error-decomp,,eq:Dinv-difference,,eq:av-hatdo-diff}.

\section{Background on Reproducing Kernel Hilbert Spaces}\label{sec:kernel-preliminaries}
A function $h : \mathbb R^d\times \mathbb R^d \to \mathbb R$ is called a kernel if there exists a Hilbert space $(\mathcal H, \langle \cdot, \cdot\rangle_{\mathcal H})$ and a (feature) map $\Phi: \mathbb R^d \to \mathcal H$ such that $h(\x, \y) = \langle \Phi(\x), \Phi(\y)\rangle_{\mathcal H}$ \citep[Definition 4.1]{steinwart2008support}. The function $h$ is a kernel function if and only if for every dataset $\mathcal D \subseteq \mathbb R^d$ the data-matrix $\Hc \defeq h(\mathcal D, \mathcal D)$ is symmetric and positive semi-definite \citep[Theorem 4.16]{steinwart2008support}. In particular, the exponential kernel $h(\x, \y) = \exp(\beta\inner{\x}{\y})$ for $\beta >0$ is a kernel function.

We will mostly deal with finite-dimensional subspaces of $\mathcal H$. Specifically, for a subset $\mathcal C\subseteq \mathbb R^d$, the span of feature maps evaluated at the $\mathcal C$ defines a finite-dimensional sub-space of $\mathcal H$:
\begin{equation}
    \mathcal H_{\mathcal C} \defeq \overline{\set{\Phi(\x_l); \x_l \in \mathcal C}} \subseteq \mathcal H
\end{equation}
Hence, we can define a projection operator
\begin{equation}
    P_{\mathcal C} : \mathcal H \to \mathcal H_{\mathcal C}, \quad \Phi(\y)\mapsto \Phi(\mathcal C) \langle \Phi(\mathcal C), \Phi(\mathcal C)\rangle_{\mathcal H}^+ \langle \Phi(\mathcal C), \Phi(\y)\rangle_{\mathcal H}\,,
\end{equation}
where $\langle \Phi(\mathcal C), \Phi(\y)\rangle_{\mathcal H} = h(\mathcal C, \y)$, $\Hc^+$ denotes the pseudo-inverse of $\Hc$ and $\Phi(\mathcal C) = \sum_{\x_l \in \mathcal C}\Phi(\x_l)$. We identify $\langle \Phi(\mathcal C), \Phi(\mathcal C)\rangle_{\mathcal H} = h(\mathcal C, \mathcal C)$. Consequently, the inner product on this sub-space reads
\begin{equation}\label{eq:nystrom-definition}
    \langle P_{\mathcal C}\Phi(\x), P_{\mathcal C}\Phi(\y)\rangle_{\mathcal H_{\mathcal C}} = h(\x, \mathcal C)h(\mathcal C, \mathcal C)^+h(\mathcal C, \y) \defeq \hnys(\x, \y)\,.
\end{equation}
This is the so-called \emph{Nystr\"om projection} of the kernel $h$ which we will heavily employ in the following. The immediate consequence is that $\hres \defeq h-\hnys$ is also a kernel function associated with the orthogonal complement of $\mathcal H_{\mathcal C}$ in $\mathcal H$. In particular, the Cauchy-Schwarz inequality holds in both sub-spaces:
\begin{equation}
    \hnys(\x, \y) \leq \sqrt{\hnys(\x, \x)}\sqrt{\hnys(\y, \y)}\,,\quad \hres(\x, \y) \leq \sqrt{\hres(\x, \x)}\sqrt{\hres(\y, \y)}\,.
\end{equation}
\section{\pcref{prop:nystrom}}\label{section:proof:nystrom}
Let $[\Q; \K]$ denote the row-wise concatenation of $\Q$ and $\K$.
Then, using the variational formulation of $\rownorm{\cdot}$, the positive-definiteness of the residual kernel matrix $\hres([\Q;\K],[\Q;\K])$, and Cauchy-Schwarz, we find that
\begin{align}
\rownorm{\A-\hatA}
    &=
\rownorm{\hres(\Q,\K)}
    =
\max_{i\in[m]}\sup_{\u:\twonorm{\u}=1}\e_i^\top\hres(\Q,\K)\u \\
    &=
\max_{i\in[m]}\sup_{\u:\twonorm{\u}=1}[\e_i; \boldzero_n]^\top\hres([\Q;\K],[\Q;\K])[\boldzero_m;\u] \\
    &\leq
\max_{i\in[m]}\sup_{\u:\twonorm{\u}=1}\sqrt{[\e_i; \boldzero_n]^\top\hres([\Q;\K],[\Q;\K])[\e_i; \boldzero_n]}\sqrt{[\boldzero_m;\u]^\top\hres([\Q;\K],[\Q;\K])[\boldzero_m;\u] } \\
    &=
\max_{i\in[m]}\sup_{\u:\twonorm{\u}=1}\sqrt{\e_i^\top\hres(\Q,\Q)\e_i}\sqrt{\u^\top\hres(\K,\K)\u } 
    =
\sqrt{\exp(\beta\rownorm{\Q}^2)\cdot\opnorm{\hres(\K,\K)}}.
\end{align}

\section{\pcref{thm:rp-cholesky}}\label{subsection:proof:rp-cholesky}
The $\Hc$ estimate $\widehat \Hc^{r} \defeq h(\K, \coreset)\W$ produced by \rpnys is identical to that produced by the randomly pivoted Cholesky (RPC) algorithm studied in \citet{epperly2023kernel}. %
Hence, Thm.~7 of \citet{epperly2023kernel} already states a slightly looser upper bound on $\mathbb E\sspecnorm{\Hc - \widehat \Hc^{r}}$. We revisit the arguments of \citet[Thm.~7]{epperly2023kernel} to derive the refined bound in \cref{thm:rp-cholesky}.

We begin by computing the expected value of the residual kernel matrix after one iteration of \rpnys. 
Suppose $s$ is sampled according to the $\p^r$ pivoting distribution \cref{eq:rpc-pivoting-rule}, and let $\mathcal S' = \mathcal S \cup \{s\}$. Then improvement of $\widehat{\Hc}^{(r+1)}$ over $\widehat{\Hc}^{r}$ is given by
\begin{align}
    \widehat{\Hc}^{(r+1)} - \widehat{\Hc}^{r} &= h(\K, \K_{\mathcal S'})\g\, \g^\top h(\K_{\mathcal S'}, \K)\\
    &= \frac{(\hnys^{r}(\K, \keys_s) - h(\K, \keys_s))(\hnys^{r}(\K, \keys_s) - h(\K, \keys_s))^\top}{\hres^{r}(\keys_s, \keys_s)}
    = \frac{\hres^{r}(\K, \keys_s)\hres^{r}(\keys_s, \K)^\top}{\hres^{r}(\keys_s, \keys_s)}\,.
\end{align}
Defining the residual kernel matrices $\Hres^{q}\defeq \Hc-\Hhat[q]$ for $q\in\{r,r+1\}$, we therefore have
\begin{equation}
    \Earg{\Hres^{(r+1)}} = \Earg{\Hc - \Hhat[r] + \Hhat[r] - \Hhat[(r+1)]} = \Earg{\Hres^{r} - \frac{\hres^{r}(\K, \mathbf x_s)\hres^{r}(\mathbf x_s, \K)}{\hres^r(\mathbf x_s, \mathbf x_s)}} = \Hres^{r} - \frac{{\Hres^{r}}^2}{\mathrm{tr}(\Hres^{r})}\,.
\end{equation}
Using this identity one obtains the following lemma:
\begin{lemma}[Iterated expected residual bound]\label{lem:rp-cholesky-contraction-bound}
    Consider the map $\Phi(\A) = \A - \frac{\A^2}{\mathrm{tr}(\A)}$ defined for  symmetric positive definite matrices $\A$. It holds that
    \begin{equation} 
    \mathbb E \sspecnorm{\Hc - \widehat \Hc^{r}} \leq \lambda_1(\Phi^r(\Hc))\,.
    \end{equation}
\end{lemma}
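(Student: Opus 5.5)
The plan is to prove the bound by induction on $r$, using the one-step identity $\mathbb E[\Hres^{(r+1)}\mid \Hres^{r}] = \Phi(\Hres^{r})$ derived just above, together with two structural properties of $\Phi$. First, $\Phi$ is monotone with respect to the Loewner order on positive semidefinite matrices: if $\mathbf 0\preceq \A\preceq \B$, then $\Phi(\A)\preceq\Phi(\B)$. Second, $\Phi$ is concave: for random positive semidefinite $\A$, we have $\mathbb E\,\Phi(\A)\preceq \Phi(\mathbb E\A)$.

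Both properties are established in \citet{epperly2023kernel} (building on \citet{chen2022randomly}), and I would re-derive them briefly. Monotonicity follows by writing $\Phi(\A) = \A^{1/2}(\mathbf I - \A/\mathrm{tr}(\A))\A^{1/2}$ and comparing the two arguments. Concavity follows because $\A\mapsto \A^2/\mathrm{tr}(\A)$ is jointly convex: it is the matrix perspective of the convex map $\A\mapsto \A^2$, and the joint convexity of $(\mathbf X, t)\mapsto \mathbf X^2/t$ yields the claim.

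With these in hand, the induction runs as follows. Set $\mathbf E_q \defeq \mathbb E\,\Hres^{q}$, with $\mathbf E_0=\Hc$.
\begin{itemize}
\item Taking the outer expectation of the one-step identity gives $\mathbf E_{q+1} = \mathbb E\,\Phi(\Hres^{q})$.
\item Concavity then gives $\mathbb E\,\Phi(\Hres^{q}) \preceq \Phi(\mathbf E_q)$.
\item By the inductive hypothesis $\mathbf E_q\preceq \Phi^{q}(\Hc)$, and since $\Phi(\mathbf E_q)$ is still positive semidefinite, monotonicity yields $\Phi(\mathbf E_q)\preceq \Phi^{q+1}(\Hc)$.
\end{itemize}
This gives $\mathbb E\,\Hres^{r}\preceq \Phi^{r}(\Hc)$ for every $r$.

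It remains to pass from this matrix inequality to an operator-norm bound on the expectation. Since $\Hres^{r}\succeq \mathbf 0$ almost surely, we have $\sspecnorm{\Hres^{r}} = \lambda_1(\Hres^{r})\le \mathrm{tr}(\Hres^{r})$. This alone is too weak, because the expectation of a maximum eigenvalue is not controlled by the maximum eigenvalue of the expectation. I would therefore follow Epperly--Moreno's device. Write $\Hres^{r}=\Hc^{1/2}(\mathbf I-\mathbf P)\Hc^{1/2}$ with $\mathbf P$ an orthogonal projection, and bound $\lambda_1(\Hres^{r})$ by considering the top eigenspace of $\Phi^r(\Hc)$. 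Since $\Phi$ preserves eigenvectors (it acts spectrally, $\lambda\mapsto \lambda-\lambda^2/\mathrm{tr}$), $\Phi^r(\Hc)$ commutes with $\Hc$, and one splits the bound accordingly.

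The main obstacle is precisely this step: interchanging $\mathbb E$ with $\lambda_1$, which is convex, so Jensen's inequality points the wrong way. If the direct route fails, I would instead state the lemma as the paper likely intends, through the identity $\mathbb E\,\Hres^{r}\preceq\Phi^r(\Hc)$, and then use an alternative chain. One option is the bound $\mathbb E\lambda_1(\Hres^{r})\le \lambda_1(\Phi^r(\Hc))$ obtained from the fact that $\Hres^{r}$ is the residual of a projection, so $\lambda_1(\Hres^{r})\le \lambda_1(\mathbf A)$ for any deterministic $\mathbf A\succeq \Hres^{r}$. Another is to reproduce the argument of \citet[Thm.~7]{epperly2023kernel}, which controls the expected largest eigenvalue after the concavity and monotonicity steps.
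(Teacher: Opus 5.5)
Your induction establishing $\mathbb E\,\Hres^{r}\preceq\Phi^{r}(\Hc)$ (one-step identity, concavity plus Jensen, monotonicity) is exactly the paper's argument, and you are right that the final step is the problem. The paper closes it by fixing a unit vector $\keys$ and writing $\mathbb E[\keys^\top(\Hc-\widehat\Hc^{r})\keys]=\keys^\top\mathbb E[\Hc-\widehat\Hc^{r}]\keys\le\lambda_1(\Phi^{r}(\Hc))$. Because the supremum over $\keys$ sits outside the expectation, this bounds only $\lambda_1(\mathbb E[\Hc-\widehat\Hc^{r}])$, so the paper's proof has the very gap you flagged.

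No repair is possible, because the statement with $\mathbb E$ outside the norm is false. Take $\Hc=\mathbf I_n$ with $n\ge 2$ (e.g.\ the linear kernel with orthonormal keys). The pivots are uniform over unselected indices, so for every $r<n$ the residual is a coordinate projection and $\sspecnorm{\Hc-\widehat\Hc^{r}}=1$ almost surely. By contrast, $\Phi(c\,\mathbf I_n)=c(1-\tfrac1n)\mathbf I_n$ gives $\lambda_1(\Phi^{r}(\mathbf I_n))=(1-\tfrac1n)^{r}<1$. The exponential kernel fails too: two orthogonal keys of norm $c$ give $\Hc=a\mathbf I_2+\boldone\boldone^\top$ with $a=e^{\beta c^2}-1>0$, and with either pivot $\sspecnorm{\Hc-\widehat\Hc^{1}}=\frac{a(a+2)}{a+1}=2\lambda_1(\Phi(\Hc))$.

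In particular, your pathwise-domination option fails because $\Phi^{r}(\Hc)$ dominates $\Hres^{r}$ only in expectation. In the example, $\mathbf I_n-\e_s\e_s^\top\not\preceq(1-\tfrac1n)\mathbf I_n$, and the only deterministic almost-sure bound available is the useless $\Hres^{r}\preceq\Hc$. The eigenspace device and the Epperly--Moreno route cannot help either, since the concavity/monotonicity chain only ever controls $\mathbb E\,\Hres^{r}$ in the Loewner order. What is actually provable is your restated version, $\lambda_1(\mathbb E\,\Hres^{r})\le\lambda_1(\Phi^{r}(\Hc))$, and that is all the paper's argument delivers.

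Be aware that this weaker form does not feed into the downstream use. \cref{prop:nystrom} bounds the row error by the random quantity $\sspecnorm{\hres(\K,\K)}$, so one genuinely needs its expectation. A correct expected-norm bound is $\mathbb E\,\sspecnorm{\Hres^{r}}\le\mathbb E\,\mathrm{tr}(\Hres^{r})\le\mathrm{tr}(\Phi^{r}(\Hc))$, at the cost of a trace rather than a top-eigenvalue rate.
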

\begin{proof}
    Using the tower-property of conditional expectations we have
    \begin{equation}
    \begin{aligned}
        \Earg{\Hc - \widehat \Hc^{r}} = \Earg{\Earg{\Hc - \widehat \Hc^{r}\mid  \Hc^{(r-1)}} } 
        = \Earg{\Phi(\Hc - \widehat \Hc^{(r-1)})}
        \preceq \Phi \paren{\Earg{\Hc - \widehat \Hc^{(r-1)}}}
    \end{aligned}
    \end{equation}
    where in the final step we used Jensen's inequality and the concavity of $\Phi$  \citep[Lem.~5.3]{chen2022randomly}. By the same lemma, $\Phi$ is monotone in the sense that $\A \preceq \B \implies \Phi(\A) \preceq \Phi(\B)$, and we can thus iterate the argument and conclude $\Earg{\Hc - \widehat \Hc^{r}} \preceq \Phi^{r}(\Hc)$. We now have for any $\keys\in \reals^n$ with $\norm{\keys}_2\leq 1$
    \begin{equation}
        \Earg{ \keys^\top (\Hc - \widehat \Hc^{r})\keys} = \keys^\top\Earg{\Hc - \widehat \Hc^{r}} \keys  \leq \keys^\top\Phi^{r}(\Hc)\keys \leq \lambda_1(\Phi^{r}(\Hc))\,.
    \end{equation}
\end{proof}

Our proof will also make use of a second lemma that bounds the maximum eigenvalue of $\Phi^r(\Hc)$ in terms of an ordinary differential equation.
\begin{lemma}[Differential equation bound]\label{lem:rp-cholesky-ode-bound}
    Let $q\in [n]$ be arbitrary. Then, $\lambda_1(\Phi^r(\Hc)) \leq \eta(r)$, where $\eta$ is the decreasing solution of the ordinary differential equation
    \begin{equation}\label{eq:ode-proof-rpcholesky}
        \frac{\mathrm d\eta(t)}{\mathrm dt} = -\frac{\eta(t)^2}{q \eta(t) + \sum_{l = q + 1}^n \lambda_l(\Hc)} \quad \text{ with }\quad \eta(0) = \lambda_1(\Hc)\,.
    \end{equation}
\end{lemma}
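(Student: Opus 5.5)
We need to show $\lambda_1(\Phi^r(\Hc)) \le \eta(r)$, where $\eta$ solves the ODE \cref{eq:ode-proof-rpcholesky}. This is the standard RPCholesky argument of \citet{chen2022randomly}. The plan is to track the top eigenvalue of the iterates $\A_t \defeq \Phi^t(\Hc)$ through a scalar recursion, and then compare that recursion with the ODE.

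\emph{Step 1: work on eigenvalues.} Since $\Phi(\A) = \A - \A^2/\nuc{\A}$ is a spectral function, $\A_t$ shares its eigenvectors with $\Hc$. Its eigenvalues update as
\begin{align}
\lambda \mapsto f_{T}(\lambda) \defeq \lambda - \lambda^2/T, \qquad T = \nuc{\A_t}.
\end{align}
The map $f_T$ is increasing on $[0, T/2]$, and every eigenvalue satisfies $\lambda \le T$. Write $\mu_t \defeq \lambda_1(\A_t)$ and $\tau_t \defeq \sum_{l>q}\lambda_l(\A_t)$, where the eigenvalues of $\A_t$ are re-sorted.

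\emph{Step 2: trace bound.} The top $q$ eigenvalues of $\A_t$ are each at most $\mu_t$. Because $\Phi$ is monotone and maps $\A\preceq\Hc$ to something dominated by $\Hc$, we have $\A_t\preceq\Hc$. Each tail eigenvalue therefore shrinks, giving $\tau_t \le \sum_{l>q}\lambda_l(\Hc) \eqqcolon \tau$. Hence
\begin{align}
\nuc{\A_t} \le q\mu_t + \tau.
\end{align}

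\emph{Step 3: one-step contraction of the top eigenvalue.} The new top eigenvalue is $\max_\lambda f_{T_t}(\lambda)$ over the eigenvalues $\lambda\le\mu_t$. We need $\mu_{t+1} \le \mu_t - \mu_t^2/(q\mu_t+\tau)$.

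\begin{itemize}
\item Regime $\mu_t \le T_t/2$. Monotonicity of $f_{T_t}$ gives $\mu_{t+1} \le \mu_t - \mu_t^2/T_t$. Step 2 then yields the claim.
\item Regime $\mu_t > T_t/2$. This is where the argument must be careful, and it is the main obstacle. I would use the fact that $g(x)=x-x^2/(qx+\tau)$ is increasing in $x$ and satisfies $g(x)\ge x(1-1/q)$. If $q=1$, the right-hand side is $\mu\tau/(\mu+\tau)\ge$ the maximal value $\le \tau_t$-type bounds. Concretely, $f_T(\lambda)\le \lambda(T-\lambda)/T \le \min(\lambda,T-\lambda)$. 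Both $\lambda$ and $T-\lambda$ can then be compared with $g(\mu)$ after a short case check.
\end{itemize}

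A cleaner alternative is the route of \citet[Lem.~5.4]{chen2022randomly}. There, one bounds $\lambda_1(\Phi(\A)) \le \lambda_1(\Phi_q(\A))$ using the comparison matrix that replaces the top $q$ eigenvalues by $\mu$, together with monotonicity of $\Phi$. I would follow that route if the direct argument gets messy.

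\emph{Step 4: compare the discrete recursion with the ODE.} Set $g(x) \defeq x - x^2/(qx+\tau)$, so that $\mu_{t+1}\le g(\mu_t)$ and $\mu_0=\eta(0)$. Since $g$ is increasing, induction gives $\mu_t \le g^{t}(\mu_0)$. It then suffices to show $g(\eta(t)) \le \eta(t+1)$.

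\begin{itemize}
\item The right-hand side of the ODE, $-F(\eta)$ with $F(x)=x^2/(qx+\tau)$, has $F$ increasing in $x$.
\item Because $\eta$ is decreasing, on $[t,t+1]$ we have $F(\eta(s))\le F(\eta(t))$.
\item Integrating gives $\eta(t+1) \ge \eta(t) - F(\eta(t)) = g(\eta(t))$.
\end{itemize}

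Combining this with the monotonicity of $g$ and induction yields $\lambda_1(\Phi^r(\Hc)) = \mu_r \le \eta(r)$.
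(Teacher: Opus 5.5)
Your architecture is the paper's: reduce to the scalar recursion for $\mu_t=\lambda_1(\Phi^t(\Hc))$, bound $\mathrm{tr}(\Phi^t(\Hc))\le q\mu_t+\sum_{l>q}\lambda_l(\Hc)$, and compare the resulting one-step contraction with the ODE by induction, using that $g$ is increasing. Steps 1, 2 and 4 are sound. Your Step 4 also has the correct direction, $\eta(t+1)\ge g(\eta(t))$, because $\mathrm{d}\eta/\mathrm{d}t=-F(\eta(t))$ is \emph{increasing} in $t$. That is what the induction $\mu_{t+1}\le g(\mu_t)\le g(\eta(t))\le\eta(t+1)$ requires; the paper's write-up states this inequality reversed.

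The genuine gap is Step 3 in the regime $\mu_t>T_t/2$, and the sketch you give there would fail. The bound $f_T(\lambda)\le\min(\lambda,T-\lambda)$ is too lossy to compare with $g(\mu)$. Take $\Hc$ with eigenvalues $(0.6,0.4)$ and $q=1$. Then $T=1$, $\mu=0.6$ and $g(\mu)=0.6\cdot 0.4/1=0.24$, but $\min(\lambda,T-\lambda)=0.4$ for both eigenvalues, whereas the true values are $f_T(0.6)=f_T(0.4)=0.24$.

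The missing observation is that $f_T(x)=x(T-x)/T$ is symmetric about $T/2$ and the eigenvalues sum to $T$. If $\mu>T/2$, every other eigenvalue obeys $\lambda\le T-\mu<T/2$, so $f_T(\lambda)\le f_T(T-\mu)=f_T(\mu)$. Hence the update preserves the ordering of eigenvalues, and $\lambda_1(\Phi(\A))=f_T(\lambda_1(\A))$ in every case; this is the order-preservation fact the paper imports from Epperly et al. With it, no case split is needed: $\mu-\mu^2/T$ is increasing in $T$, so $T_t\le q\mu_t+\tau$ from Step 2 gives $\mu_{t+1}\le g(\mu_t)$ directly.

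Your fallback via a comparison matrix does not rescue the argument as stated either. For $q=1$, replacing the top eigenvalue by $\mu$ leaves $\A_t$ unchanged, so you are back in the same regime.
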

\begin{proof}
    We follow the proof of \citet[Thm.~7]{epperly2023kernel}. Firstly, the ordered eigenvalues of $\Phi^r(\Hc)$ are non-negative and satisfy the following recurrence relation:
    \begin{equation}\label{eq:eigenvalue-recurrence-contraction}
        \lambda_i(\Phi^r(\Hc)) = \lambda_i(\Phi^{r-1}(\Hc)) - \frac{{\lambda_i(\Phi^{r-1}(\Hc))}^2}{\sum_{l=1}^n \lambda_l(\Phi^{r-1}(\Hc))}\,.
    \end{equation}
    It follows that $\lambda_i(\Phi^r(\Hc))\leq \lambda_i(\Phi^{r-1}(\Hc))$. In addition, one can show from this that $\lambda_{i + 1}(\Phi^r(\Hc)) \leq \lambda_{i}(\Phi^r(\Hc))$, i.e.,  the recurrence relation preserves the ordering of the eigenvalues \citep[proof of Thm.~7]{epperly2023kernel}. Using these two facts, one can bound the trace of $\Phi^{r-1}(\Hc)$ as
    \begin{equation}
        \sum_{i=1}^n \lambda_i(\Phi^{r-1}(\Hc)) \leq q\lambda_1(\Phi^{r-1}(\Hc)) + \sum_{i = q+1}^n \lambda_i(\Hc)\,.
    \end{equation}
    Plugging this into \cref{eq:eigenvalue-recurrence-contraction} yields
    \begin{equation}
        \lambda_1(\Phi^r(\Hc)) \leq \lambda_1(\Phi^{r-1}(\Hc)) -\frac{\lambda_1(\Phi^{r-1}(\Hc))^2}{q\lambda_1(\Phi^{r-1}(\Hc)) + \sum_{i = q+1}^n \lambda_i(\Hc)}\,.
    \end{equation}
    Let now $\eta(t)$ be the solution to the ordinary differential equation in \cref{eq:ode-proof-rpcholesky}. Since $\mathrm d\eta(t)/\mathrm dt$ is negative, the solution $\eta(t)$ is monotonically decreasing. Since additionally, $x\mapsto - \frac{x^2}{qx + \sum_{i = q+1}^n \lambda_i(\Hc)}$ is decreasing it follows that $\mathrm d\eta(t)/\mathrm dt$ is decreasing and consequently
    \begin{equation}
        \eta(r+1) = \eta(r) + \int_r^{r+1} \frac{\mathrm d\eta(t)}{\mathrm dt} \mathrm dt \leq \eta(r) - \frac{\eta(r)^2}{q \eta(r) + \sum_{i = q+1}^n \lambda_i(\Hc)}\,.
    \end{equation}
    Finally, we notice that the function
    \begin{equation}
       x\mapsto \varphi(x) \defeq x - \frac{x^2}{qx + \sum_{i = q+1}^n \lambda_i(\Hc)}
    \end{equation}
    is monotonically increasing in $x$ for any $q\geq 1$. Hence, we conclude inductively from $\lambda_1(\Hc) \leq \eta(0)$, $\lambda_1(\Phi^{r+1}(\Hc)) \leq \varphi(\lambda_1(\Phi^r(\Hc)))$, $\eta(r+1) \leq \varphi(\eta(r))$, and the monotonicity of $\varphi$ that $\lambda_1(\Phi^{r}(\Hc))\leq \eta(r)$ for all $r\in \mathbb N$.
\end{proof}
Finally, we will use a variant of the Eckart-Young-Mirsky \citep{mirsky} theorem for the trace norm:
\begin{lemma}[Eckart-Young-Mirsky for nuclear norm]\label{lem:EckartYoungMirsky}
    Let $\A$ be positive semi-definite and let $\lambda_1(\A) \geq \lambda_2(\A) \geq \dots\geq \lambda_n(\A)\geq 0$ be the ordinally sorted eigenvalues of $\A$. Then,
    \begin{equation}
        \sum_{i = q+1}^n \lambda_i(\A) = \min_{{\begin{smallmatrix} \mathrm{rank}(\Phi)\leq q\\
        \Phi\Phi^\top \preceq \A
        \end{smallmatrix}}} \mathrm{tr}(\A - \Phi\Phi^\top)
    \end{equation}
\end{lemma}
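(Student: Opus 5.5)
We prove the identity by matching an explicit feasible choice, which gives ``$\geq$'', with a lower bound valid for every feasible $\Phi$, which gives ``$\leq$''. Throughout, $\Phi$ ranges over matrices with $n$ rows and any number of columns, subject to $\mathrm{rank}(\Phi)\le q$ and $\Phi\Phi^\top\preceq\A$.

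\textbf{Achievability.} Write the spectral decomposition $\A=\sum_{i=1}^n\lambda_i(\A)\u_i\u_i^\top$ with orthonormal eigenvectors $\u_i$. Take $\Phi$ to be the $n\times q$ matrix with columns $\sqrt{\lambda_i(\A)}\,\u_i$ for $i\le q$. Then $\Phi\Phi^\top=\sum_{i\le q}\lambda_i(\A)\u_i\u_i^\top$ and $\mathrm{rank}(\Phi)\le q$. Moreover $\A-\Phi\Phi^\top=\sum_{i>q}\lambda_i(\A)\u_i\u_i^\top\succeq 0$, so $\Phi$ is feasible, and $\mathrm{tr}(\A-\Phi\Phi^\top)=\sum_{i=q+1}^n\lambda_i(\A)$. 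Hence the minimum is at most the tail sum.

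\textbf{Lower bound.} Fix any feasible $\Phi$ and set $\B\defeq\Phi\Phi^\top$. Then $\B\succeq0$, $\mathrm{rank}(\B)\le q$, and $\mathbf C\defeq\A-\B\succeq 0$. Weyl's inequality for the sum $\A=\mathbf C+\B$ gives, for $1\le i\le n-q$,
\begin{align}
\lambda_{i+q}(\A)\le \lambda_i(\mathbf C)+\lambda_{q+1}(\B)=\lambda_i(\mathbf C),
\end{align}
because $\B$ has at most $q$ nonzero eigenvalues. Summing over $i=1,\dots,n-q$ yields
\begin{align}
\sum_{i=q+1}^n\lambda_i(\A)\le\sum_{i=1}^{n-q}\lambda_i(\mathbf C)\le\mathrm{tr}(\mathbf C)=\mathrm{tr}(\A-\Phi\Phi^\top).
\end{align}
The last inequality holds because the omitted eigenvalues $\lambda_{n-q+1}(\mathbf C),\dots,\lambda_n(\mathbf C)$ are nonnegative, by $\mathbf C\succeq0$.

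The only delicate point is that final step. The constraint $\Phi\Phi^\top\preceq\A$ is what lets us discard the last $q$ eigenvalues of $\A-\Phi\Phi^\top$. Without it the trace could fall below the tail sum, since $\A-\Phi\Phi^\top$ could then have negative eigenvalues. Combining the two bounds proves the stated equality.
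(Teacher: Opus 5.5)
Your proof is correct. It shares the paper's skeleton: both establish the termwise bound $\lambda_i(\A-\Phi\Phi^\top)\ge\lambda_{q+i}(\A)$, sum over $i$, and use $\A-\Phi\Phi^\top\succeq 0$ to control the remaining eigenvalues. The difference is how the termwise bound is justified.

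\begin{itemize}
\item You quote Weyl's inequality with index $q+1$ and note that $\lambda_{q+1}(\Phi\Phi^\top)=0$.
\item The paper writes $\lambda_i(\A-\Phi\Phi^\top)=\lambda_1\big(\A-\Phi\Phi^\top-\proj_{i-1}(\A-\Phi\Phi^\top)\big)$. It observes that $\Phi\Phi^\top+\proj_{i-1}(\A-\Phi\Phi^\top)$ has rank at most $q+i-1$, and then applies the spectral-norm Eckart--Young--Mirsky theorem to $\A$.
\end{itemize}

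Your route is shorter and shows exactly where positive semidefiniteness enters, namely through $\lambda_{q+1}(\Phi\Phi^\top)=0$ and the nonnegativity of the discarded eigenvalues of $\A-\Phi\Phi^\top$. The paper's route reduces everything to the familiar low-rank approximation theorem. It needs $\Phi\Phi^\top\preceq\A$ so that the residuals are positive semidefinite, which lets their top eigenvalues coincide with their operator norms and lets $\proj_{i-1}$ strip off exactly the top $i-1$ eigenvalues.

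You also supply the achievability half, which the paper's proof omits. The spectral truncation with columns $\sqrt{\lambda_i(\A)}\,\mathbf{u}_i$, $i\le q$, is feasible and attains the tail sum. The paper only shows that every feasible $\Phi$ satisfies $\mathrm{tr}(\A-\Phi\Phi^\top)\ge\sum_{i>q}\lambda_i(\A)$. That is the only direction used in the proof of \cref{thm:rp-cholesky}, so your argument is the one that actually establishes the stated equality, with the minimum attained.
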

\begin{proof}
    Let $\proj_q(\A)$ be the best rank $q$ approximation of $\A$. By the Eckart-Young-Mirsky theorem for the spectral norm it holds for $i = 1, 2, \dots, n$ that
    \begin{equation}
        \lambda_1(\A - \proj_q(\A)) = \lambda_{q+1}(\A)\,
    \end{equation}
    and consequently for any $\Phi\Phi^\top \preceq \A$
    \begin{equation}
       \sum_{i=1}^n\lambda_i(\A - \Phi\Phi^\top) = \sum_{i=1}^n\lambda_1(\A - \Phi\Phi^\top - \proj_{i-1}(\A - \Phi\Phi^\top))\,.
    \end{equation}
    Next, we use that the matrix $\Phi\Phi^\top + \proj_{i-1}(\A - \Phi\Phi^\top)$ is at most a matrix of rank $q + i -1$. Thus, we get
    \begin{equation}
        \lambda_1(\A - \Phi\Phi^\top - \proj_{i-1}(\A - \Phi\Phi^\top)) \geq \lambda_1(\A - \proj_{q+i -1}(\A)) = \lambda_{q+i}(\A)
    \end{equation}
    Consequently, it follows
    \begin{equation}
        \sum_{i=1}^n \lambda_i(\A - \Phi\Phi^\top) \geq \sum_{i=1}^n \lambda_{q+i}(\A) = \sum_{i = q +1}^n \lambda_i(\A)\,.
    \end{equation} 
\end{proof}
We now conclude with a proof of the theorem statement:
\paragraph{\pcref{thm:rp-cholesky}}
\begin{proof}
From \cref{lem:rp-cholesky-contraction-bound} and \cref{lem:rp-cholesky-ode-bound} we have after $r$ iterations of \rpnys
\begin{equation}
    \mathbb E\sspecnorm{\Hc - \widehat \Hc^{r}} \leq \lambda_1(\Phi^{r}(\Hc)) \leq \eta(r)\,.
\end{equation}
Next, we find a time parameter $r_\eps$ such that $\eta(r_\eps) = \varepsilon$. By a separation of variables we have for $q\in [n]$
\begin{equation}\label{eq:separation-ode}
\begin{aligned}
     r_\eps = \int_{\lambda_1(\Hc)}^{\varepsilon} \left(-\frac{q \eta + \sum_{i = q + 1}^n \lambda_i(\Hc)}{\eta^2}\right) \mathrm d\eta 
     &= q \log\left(\frac{\lambda_1(\Hc)}{ \varepsilon}\right) + 
\left(\sum_{i = q + 1}^n \lambda_i(\Hc)\right)\left(\frac{1}{\varepsilon}- \frac{1}{\lambda_1(\Hc)}\right)
\end{aligned}
\end{equation}
From \cref{lem:EckartYoungMirsky} we have that $\sum_{i = q + 1}^n \lambda_i(\Hc) \leq \nuc{\Hc - \mathbf T}$ for any $\mathbf T\preceq \Hc$ with $\mathrm{rank}(\mathbf T) = q$.
Since $\eta$ is decreasing in its argument, and $r \geq r_\eps$, we have  $\eta(r)\leq \eta(r_\eps) = \varepsilon$ yielding the claim.
\end{proof}

\section{\pcref{cor:trace-polynomial-guarantees}}\label{subsection:taylor-bound}
The guarantees of \rpnys depend on the low-rank approximability of $\Hc$ through $\mathrm{rank}(\T)$ and $\nuc{\Hc - \T}$ for $\T \preceq \Hc$. We determine explicit expressions for these to quantities via a Taylor approximation of the exponential kernel. For a dataset $\K = (\keys_l)_{l\in [n]}$, define the kernel matrix $\Hc \defeq h(\K, \K)$, where $h(\keys_1, \keys_2) \defeq \exp(\beta \inner{\keys_1}{\keys_2})$ is the exponential kernel. 
To construct a suitable low-rank approximation, we find the polynomial expansion of the exponential kernel following \citet{cotter2011explicit}. We use the notation of multi-indices $\alpha \in \mathbb N_0^d$:
\begin{equation}
    \begin{aligned}
        \vert \alpha \vert = \alpha_1 + \alpha_2 + \dots + \alpha_d \qtext{and}
        \begin{pmatrix}
            s \\
            \alpha
        \end{pmatrix} = \frac{s!}{\alpha_1! \alpha_2! \dots \alpha_d!}\,.
    \end{aligned}
\end{equation}
\begin{lemma}[Exponential kernel feature expansion]\label{prop:features_attentionkernel}
    The exponential kernel has the following expansion into features of rank one:
    \begin{equation}
        \exp(\beta \inner{\keys_1}{ \keys_2}) = \sum_{s = 0}^\infty \sum_{\vert \alpha \vert = s} \phi_{\alpha}(\mathbf k_1)\phi_{\alpha}(\mathbf k_2)
    \end{equation}
    with
    \begin{equation}
        \boldsymbol{\phi}_{\alpha}(\mathbf k) = \sqrt{\frac{1}{\lvert \alpha\rvert !}\binom{\lvert \alpha\rvert}{\alpha}\beta^{\lvert \alpha \rvert }}\mathbf k^\alpha
    \end{equation}
\end{lemma}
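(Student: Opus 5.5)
The identity follows from the multinomial theorem combined with the power series of the exponential. I expand the exponential first and then split each homogeneous term into monomials. For fixed $\keys_1,\keys_2\in\mathbb R^d$,
\begin{align}
\exp(\beta\inner{\keys_1}{\keys_2}) = \sum_{s=0}^\infty \frac{\beta^s}{s!}\inner{\keys_1}{\keys_2}^s ,
\end{align}
and the series converges absolutely for every real argument.

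Write $\inner{\keys_1}{\keys_2}=\sum_{j=1}^d \keys_{1j}\keys_{2j}$. The multinomial theorem then gives
\begin{align}
\inner{\keys_1}{\keys_2}^s=\sum_{\lvert\alpha\rvert=s}\binom{s}{\alpha}\prod_{j=1}^d(\keys_{1j}\keys_{2j})^{\alpha_j}=\sum_{\lvert\alpha\rvert=s}\binom{s}{\alpha}\keys_1^\alpha\keys_2^\alpha .
\end{align}
Substituting this into the series, the coefficient attached to each multi-index $\alpha$ with $\lvert\alpha\rvert=s$ is $c_\alpha \defeq \frac{1}{s!}\binom{s}{\alpha}\beta^{s}$. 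This coefficient is nonnegative because $\beta>0$. We may therefore factor it as $\sqrt{c_\alpha}\cdot\sqrt{c_\alpha}$ and attach one factor to each of $\keys_1^\alpha$ and $\keys_2^\alpha$. That gives exactly $\phi_\alpha(\keys_1)\phi_\alpha(\keys_2)$ with the stated $\boldsymbol\phi_\alpha$.

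It remains to justify rearranging the double sum into the single sum over $s$ and $\lvert\alpha\rvert=s$. For fixed $s$ the inner sum over $\alpha$ is finite, so no rearrangement is needed there. The outer series converges absolutely, because $\sum_\alpha \lvert\phi_\alpha(\keys_1)\phi_\alpha(\keys_2)\rvert=\exp(\beta\inner{\lvert\keys_1\rvert}{\lvert\keys_2\rvert})<\infty$, where $\lvert\cdot\rvert$ denotes the entrywise absolute value. This completes the argument.

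Each term $\phi_\alpha(\cdot)\phi_\alpha(\cdot)$ is a rank-one positive semidefinite kernel. This is what later lets one truncate the expansion at $s$ and count the number of features, $\binom{s+d}{d}$. There is no real obstacle in this proof; the only points needing care are the nonnegativity of $c_\alpha$, which makes the square root legitimate, and the absolute convergence just noted.
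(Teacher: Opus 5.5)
Your proposal is correct and matches the paper's proof: expand the exponential in its power series, apply the multinomial theorem to $\inner{\keys_1}{\keys_2}^s$, and split the nonnegative coefficient $\frac{1}{s!}\binom{s}{\alpha}\beta^s$ evenly between the two monomials. Your checks that $\beta>0$ makes the square root legitimate and that the series converges absolutely are extra rigour the paper leaves implicit; they do not change the argument.
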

\begin{proof}
A Taylor expansion of the exponential shows
    \begin{equation}
\begin{aligned}
    \hatt(\mathbf k, \mathbf k') &= \sum_{s=0}^\infty \frac{1}{s!}\left(\sum_{j = 1}^d \beta\mathbf k_j\mathbf k'_j\right)^s  
    = \sum_{s=0}^\infty \frac{1}{s!}\left(\sum_{\vert \alpha\vert = s} \binom{s}{\alpha} \beta^s\mathbf k^\alpha \mathbf {k'}^{\alpha}\right) 
    = \sum_{s=0}^\infty \sum_{\vert \alpha\vert = s}\frac{1}{s!} \binom{s}{\alpha} \beta^s\mathbf k^\alpha \mathbf {k'}^{\alpha}\,.
\end{aligned}
\end{equation}
\end{proof}
We then have the following low-rank approximation result for $\Hc = h(\K, \K)$: 
\begin{lemma}[Error from Taylor polynomial truncation]\label{prop:polynomial-approximation} 
    Let $\T^s$ be the order $s$ Taylor approximation of the exponential kernel with entries $\T^s_{il} \defeq \sum_{\vert \alpha\vert \leq s} \phi_{\alpha}(\keys_i)\phi_{\alpha}(\keys_l)$. Then, $\T^s$ has rank $\leq\binom{s+d}{d}$ and satisfies $\T^s\preceq \Hc$. Furthermore, the Taylor residual satisfies
    \begin{equation}\label{eq:taylor-residual}
        \nuc{\Hc - \T^s} \leq n\exp(\beta \rownorm{\K}^2)\paren{\frac{e\beta\rownorm{\K}^2}{s+1}}^{s+1}\,.
    \end{equation}
\end{lemma}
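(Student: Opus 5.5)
The proof has three parts: a rank count, a Loewner-order comparison, and a scalar Taylor-remainder estimate. All three rest on the feature expansion of \cref{prop:features_attentionkernel}.

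\emph{Rank.} Write $\T^s = \Phi_s\Phi_s^\top$, where $\Phi_s\in\reals^{n\times N}$ has columns $(\phi_\alpha(\keys_l))_{l\in[n]}$ indexed by the multi-indices $\alpha\in\mathbb N_0^d$ with $\lvert\alpha\rvert\le s$. The number of such multi-indices is $N=\binom{s+d}{d}$ (stars and bars). Hence $\mathrm{rank}(\T^s)\le\binom{s+d}{d}$.

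\emph{Loewner order.} By \cref{prop:features_attentionkernel},
\[
\Hc-\T^s=\sum_{p>s}\sum_{\lvert\alpha\rvert=p}\phi_\alpha(\K)\phi_\alpha(\K)^\top .
\]
This series converges absolutely entrywise, since $\sum_{\lvert\alpha\rvert=p}\lvert\phi_\alpha(\keys_i)\phi_\alpha(\keys_l)\rvert\le(\beta\norm{\keys_i}_1\norm{\keys_l}_1)^p/p!$. A convergent sum of p.s.d.\ rank-one matrices is p.s.d., so $\T^s\preceq\Hc$.

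\emph{Trace bound.} Because $\Hc-\T^s$ is p.s.d., its trace equals its nuclear norm. The diagonal entries are scalar Taylor remainders:
\[
(\Hc-\T^s)_{ll}=\sum_{p>s}\frac{x_l^p}{p!}, \qquad x_l\defeq\beta\twonorm{\keys_l}^2\le x\defeq\beta\rownorm{\K}^2 .
\]
This follows by regrouping the multinomial sum back into $(\beta\inner{\keys_l}{\keys_l})^p/p!$. The remainder is increasing in $x_l\ge 0$, so each entry is at most $\sum_{p\ge s+1}x^p/p!$. Lagrange's remainder bounds this by $e^x x^{s+1}/(s+1)!$. Alternatively, factor out $x^{s+1}/(s+1)!$ and bound the tail by $e^x$ using $(s+1)!\,k!\le(s+1+k)!$. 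Finally, $(s+1)!\ge((s+1)/e)^{s+1}$ gives
\[
(\Hc-\T^s)_{ll}\le e^{x}\paren{\frac{ex}{s+1}}^{s+1}.
\]
Summing over the $n$ diagonal entries yields \cref{eq:taylor-residual}.

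The only delicate step is the Loewner comparison, specifically justifying the interchange of the infinite sum with the p.s.d.\ property. Absolute convergence handles this, so no real obstacle remains. The rest is a routine scalar estimate.
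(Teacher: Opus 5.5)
Your proof is correct and follows essentially the same route as the paper: you bound the rank by counting the $\binom{s+d}{d}$ multi-indices, and you bound the trace via the scalar Taylor remainder of $\exp$ combined with $(s+1)!\ge((s+1)/e)^{s+1}$. Your version is slightly more careful than the paper's, since you actually justify $\T^s\preceq\Hc$ (the paper states it without argument) and you use only the diagonal entries, whose arguments $\beta\|\keys_l\|_2^2$ are nonnegative, which avoids the sign issue the paper glosses over when it uses $e^{\beta\inner{\keys_i}{\keys_l}}$ as the remainder factor for possibly negative off-diagonal inner products.
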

\begin{proof}
By definition, $\T_{il}^s$ is the sum of at most $\# \set{\alpha \in \mathbb N_0^d ; \lvert \alpha\rvert \leq s} = \binom{s+d}{d}$ rank one matrices. 
We bound the trace by the worst case approximation error of any of the entries of $\Hc$:
\begin{equation}
    \mathrm{tr}(\Hc - \T^s) \leq n \max_{i, l\in [n]}\left\lvert h(\keys_i, \keys_l)  - \T^s_{il} \right\rvert \,.
\end{equation}
Without loss of generality let this maximum be attained for $\keys_i, \keys_l$ where $i, l\in [n]$. By Taylor's theorem, the error between the exponential kernel and it's rank $s$ approximation is bounded by
\begin{align}
    \left\lvert \exp(\beta \inner{\keys_i}{\keys_l})  - \sum_{\vert \alpha \vert \leq s} \phi_{\alpha}(\mathbf k_1)\phi_{\alpha}(\mathbf k_2) \right\rvert 
    &=\sum_{i = s+1}^\infty \frac{1}{i!}\left(\beta \inner{\keys_i}{ \keys_l}\right)^i 
    \leq \frac{\exp(\beta \inner{\keys_i}{ \keys_l})}{(s+1)!}\left(\beta\rownorm{\K}^2\right)^{s+1}\,\\
    &\leq \exp(\beta \inner{\keys_i}{ \keys_l})\left(\frac{e\beta\rownorm{\K}^2}{s+1}\right)^{s+1}
\end{align}
where we first invoked the bound of the Taylor residual of the exponential function on $\reals$, and afterwards used the lower bound of the factorial $n!> (n/e)^n$. Hence, we obtain for the polynomial approximation up to order $s$:
\begin{equation}
\begin{aligned}
    \lvert h(\keys_i, \keys_l) - \T^s_{il}\rvert \leq \exp\left(\beta \inner{\keys_i}{ \keys_l}\right)\left(\frac{e\beta\rownorm{\K}^2}{s+1}\right)^{s+1} \leq \exp(\beta \rownorm{\K}^2)\left(\frac{e\beta\rownorm{\K}^2}{s+1}\right)^{s+1}\,.
\end{aligned}
\end{equation}
\end{proof}

We are now in the position to invert the bound on the trace-norm, which is the main result of this section:
\paragraph{\pcref{cor:trace-polynomial-guarantees}}
\begin{proof}
First, assume $\tau = 1$ such that $\Hc_\tau = \Hc$. Define $z \defeq \log(n \exp(\beta \RK^2)/\varepsilon)$ and write 
\begin{equation}
    {\tilde s} \defeq e \beta\RK^2 \exp\left(\LambW*{\frac{z}{e\beta\RK^2}}\right)\,,
\end{equation}
    We use the identities $(e\beta\RK^2/{\tilde s})^{\tilde s} = \exp({\tilde s} \log(e\beta\RK^2/{\tilde s}))$ and $z = \exp(\LambW{z})\LambW{z}$ to compute
    \begin{equation}
    \begin{aligned}
    \exp\left({\tilde s}\log\left(\frac{e\beta\RK^2}{{\tilde s}}\right)\right) & = \exp\left(-e\beta\RK^2 \exp\left(\LambW*{\frac{z}{e\beta\RK^2}}\right)\LambW*{\frac{z}{e\beta\RK^2}}\right)
    = \exp\left(- z\right) 
    = \frac{\varepsilon}{n \exp(\beta \RK^2)}\,.
    \end{aligned}
    \end{equation}
    Next, let $s \defeq \lfloor {\tilde s} \rfloor$.
    Note that $t \mapsto (e\beta\RK^2/t)^t$ is maximised at $t = \beta\RK^2$ and decreasing for $t > \beta\RK^2$. Furthermore, ${\tilde s}\geq \beta\RK^2$ by definition. Hence,
    \begin{equation}
       n\exp(\beta\RK^2)  \left(\frac{e\beta\RK^2}{s+1}\right)^{s+1} \leq n\exp(\beta\RK^2) \left(\frac{e\beta\RK^2}{{\tilde s}}\right)^{{\tilde s}} = n\exp(\beta\RK^2)\exp(-z) = \varepsilon\,.
    \end{equation}
    Thus, the claim follows from \cref{prop:polynomial-approximation} and a rescaling by $\tau$ $\keys \to \keys/\tau$, $\Hc \to \Hc_\tau$, $\RK \to \RK/\tau$. 
\end{proof}
\section{\pcref{lem:binom}}\label{proof-lem:binom}
Since $\rank{\T^s} \leq \binom{s+d}{d}$ by \cref{prop:polynomial-approximation}, the result follows immediately from the following more precise bound on the binomial coefficient:
\begin{lemma}[Bounds for the binomial coefficient]\label{lem:binom}
    For any $s,d\in\naturals$, it holds
    \begin{equation}
        \binom{s+d}{d} \leq \frac{1}{\sqrt{2 \pi}} \sqrt{\frac{1}{s} + \frac{1}{d}} \exp\paren{(s+d)\mathrm{Ent}\paren{\frac{s}{s+d}}}
    \end{equation}
    with $\mathrm{Ent}(p) = -p\log(p) - (1-p)\log(1-p)$.
\end{lemma}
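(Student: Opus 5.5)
The plan is to apply Stirling's formula with explicit two-sided error bounds (Robbins' refinement) to each of the three factorials in $\binom{s+d}{d} = \frac{(s+d)!}{s!\,d!}$. Robbins' bounds state that for every $k\in\naturals$,
\begin{equation}
\sqrt{2\pi k}\,\Big(\frac{k}{e}\Big)^k e^{\frac{1}{12k+1}} \leq k! \leq \sqrt{2\pi k}\,\Big(\frac{k}{e}\Big)^k e^{\frac{1}{12k}}.
\end{equation}
I will use the upper bound for $(s+d)!$ and the lower bounds for $s!$ and $d!$. These bounds need $s,d\geq 1$, which matches the hypothesis $s,d\in\naturals$ and also keeps the factor $\sqrt{1/s+1/d}$ finite.

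In the resulting ratio the factors $e^{-k}$ cancel, because $(s+d) - s - d = 0$. The polynomial parts combine into
\begin{equation}
\frac{(s+d)^{s+d}}{s^s d^d} = \exp\Big((s+d)\mathrm{Ent}\Big(\frac{s}{s+d}\Big)\Big),
\end{equation}
which one checks by writing $p = s/(s+d)$ and expanding $-(s+d)\big(p\log p + (1-p)\log(1-p)\big) = (s+d)\log(s+d) - s\log s - d\log d$.

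The square-root prefactors give
\begin{equation}
\frac{\sqrt{2\pi(s+d)}}{2\pi\sqrt{sd}} = \frac{1}{\sqrt{2\pi}}\sqrt{\frac{s+d}{sd}} = \frac{1}{\sqrt{2\pi}}\sqrt{\frac{1}{s}+\frac{1}{d}},
\end{equation}
which is exactly the claimed prefactor.

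The main step is controlling the leftover exponential correction
\begin{equation}
\exp\Big(\frac{1}{12(s+d)} - \frac{1}{12s+1} - \frac{1}{12d+1}\Big).
\end{equation}
It suffices to show the exponent is nonpositive. Since $s+d > s$, we have $12(s+d) \geq 12s + 12 > 12s + 1$, so $\frac{1}{12(s+d)} < \frac{1}{12s+1}$. The remaining term $-\frac{1}{12d+1}$ is negative, so the whole exponent is negative and the correction is at most $1$. This yields the stated inequality.

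Finally, \cref{lem:rank-entropy} follows by substituting $s = \sigma\log n$ and $d = \delta\log n$. Then $(s+d)\mathrm{Ent}\big(\frac{s}{s+d}\big) = \log(n)\,(\sigma+\delta)\mathrm{Ent}\big(\frac{\sigma}{\sigma+\delta}\big)$, and the prefactor satisfies $\frac{1}{\sqrt{2\pi}}\sqrt{1/s+1/d} \leq \frac{1}{\sqrt{2\pi}}\sqrt{2} = \frac{1}{\sqrt{\pi}}$ because $s,d\geq 1$.
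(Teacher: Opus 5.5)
Your proof is correct and takes the same route as the paper: Robbins' two-sided Stirling bounds on the three factorials, cancellation of the $e^{-k}$ factors, and the identity $(1+s/d)^d(1+d/s)^s = \exp\paren{(s+d)\mathrm{Ent}\paren{\frac{s}{s+d}}}$. You also make explicit a step the paper's ``direct calculation'' leaves implicit: the leftover factor $\exp\paren{\frac{1}{12(s+d)} - \frac{1}{12s+1} - \frac{1}{12d+1}}$ is at most $1$, which your comparison $12(s+d) > 12s+1$ settles.
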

\begin{proof}
We have by Robbin's version of Stirlings formula \citep{robbins}:
\begin{equation}
\sqrt{2\pi n}\left(\frac{n}{e}\right)^n e^{\frac{1}{12n + 1}} \leq n! \leq \sqrt{2\pi n}\left(\frac{n}{e}\right)^n e^{\frac{1}{12n}}\,\qtext{for all} n\in\naturals.
\end{equation}
Thus, invoking these bounds shows in a direct calculation
\begin{equation}
\begin{aligned}
    \binom{s+d}{d} 
    =
    \frac{(s+d)!}{s!\,d!}
    &\leq \frac{\sqrt{2\pi}\sqrt{s+d} (s+d)^s\, (s+d)^d}{2\pi \sqrt{s d} s^s\, d^d}= \frac{1}{\sqrt{2 \pi}} \sqrt{\frac{1}{d} + \frac{1}{s}} \left(1 + \frac{s}{d}\right)^d \left(1 + \frac{d}{s}\right)^s\,.
\end{aligned}
\end{equation}
Finally, we identify that
\begin{equation}
    \left(1 + \frac{s}{d}\right)^d \left(1 + \frac{d}{s}\right)^s = \exp\paren{(s+d)\mathrm{Ent}\paren{\frac{s}{s+d}}}\,.
\end{equation}
\end{proof}

\section{Choosing a Rescaling Parameter for Keys and Queries}\label{subsection:rescaling-rule}
In this section we describe a rescaling rule of the form $\queries \mapsto \queries \tau$, $\keys \mapsto \keys/\tau$, which we use in our empirical and theoretical results. The key idea is the following: Suppose we perform the rescaling prescribed above and define the approximate attention matrix as the Nystr\"om approximation obtained from rescaled queries and keys:
\begin{equation}\label{eq:hatAtau}
    \hatA_\tau \defeq h(\tau\queries, \tau^{-1}\coreset)h(\tau^{-1}\coreset, \tau^{-1}\coreset)^{-1}h(\tau^{-1}\coreset, \tau^{-1}\K)\,.
\end{equation}
By \cref{prop:nystrom}, we find for \rpnys applied to $\Hc_\tau \defeq h(\tau^{-1}\K, \tau^{-1}\K)$:
\begin{equation}
    \rownorm{\A - \hatA_\tau} \leq \exp\paren{\frac{\tau^2\beta}{2} \RQ^2} \sqrt{\sspecnorm{\Hc_\tau - \widehat{\Hc}_\tau}}\,.
\end{equation}
Increasing $\tau$ makes the matrix $\Hc_\tau$ increasingly low-rank approximable. Note that in the extreme case, $\Hc_\tau \xrightarrow{\tau\to\infty} \boldone_n \boldone_n^\top$ becomes a rank one matrix. On the other hand, increasing $\tau$ comes at the cost of the error inflation factor $\exp\big(\frac{\tau^2\beta}{2} \RQ^2\big)$. 
 We therefore aspire to strike a balance between the low-rank approximability of $\Hc_\tau$ and the control of  $\exp\big(\frac{\tau^2\beta}{2} \RQ^2\big)$. We start the analysis by characterising the rowwise approximation error of $\A$ in terms of the ratio $\RQ /\RK$ and $\tau$. We combine \cref{prop:nystrom}, \cref{thm:rp-cholesky}, and \cref{cor:trace-polynomial-guarantees} to obtain the following rowwise approximation guarantee for $\A_l = \exp(\beta \queries^\top \keys_l)$:
\begin{lemma}[Rescaling guarantee]\label{lem:basis-change}
    Let $\coreset \subseteq \K$ and $\W$ be the coreset and Nystr\"om weights outputted by $\rpnys$ for the kernel function $h_\tau(\keys_1, \keys_2)= \exp\paren{\frac{\beta}{\tau^2}\inner{\keys_1}{\keys_2}}$. Define the associated rank-$r$ Nystr\"om approximation $\hatA^r_\tau \defeq h(\queries, \coreset)\W$ of $\A$. Then,
    \begin{equation}\label{eq:runtime-parameter-lower-bound}
        \mathbb E\rownorm{\A - \hatA_\tau^{r}} \leq \varepsilon \quad \text{for any} \quad r \geq \binom{s(\varepsilon)+d}{d}(\log(\specnorm{\Hc}\exp(\beta\tau^2\RQ^2)/\varepsilon^2) + 1
    \end{equation}
    provided
    \begin{equation}\label{eq:equivalent-s-lower-bound}
        s(\varepsilon) 
        \geq 
        \floor*{e\beta \RQ\RK \frac{\frac{1}{e}(\rho^2 + b(\varepsilon)\rho + 1)}{\rho \LambW*{\frac{1}{e}(\rho^2 + b(\varepsilon)\rho + 1)}}}\,,
    \end{equation}
    where $b(\varepsilon) = \frac{\log(n/\varepsilon^2)}{\beta\RQ\RK}$ and $\rho = \tau^2\RQ/\RK$.
\end{lemma}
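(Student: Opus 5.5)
The plan is to chain \cref{prop:nystrom}, \cref{thm:rp-cholesky} and \cref{cor:trace-polynomial-guarantees}, applied to the rescaled queries $\tau\Q$ and keys $\tau^{-1}\K$. Because $\A = h(\tau\Q,\tau^{-1}\K)$ and $\hatA^r_\tau$ is exactly the \Nystrom approximation built from the rescaled coreset, \cref{prop:nystrom} with $\RQ$ replaced by $\tau\RQ$ gives
\[
\rownorm{\A-\hatA^r_\tau}\le \exp(\beta\tau^2\RQ^2/2)\sqrt{\sspecnorm{\Hc_\tau-\widehat\Hc_\tau^r}} .
\]
Taking expectations and applying Jensen's inequality ($\E\sqrt{X}\le\sqrt{\E X}$) reduces the claim to showing
\[
\E\sspecnorm{\Hc_\tau-\widehat\Hc^r_\tau}\le \varepsilon' \defeq \varepsilon^2\exp(-\beta\tau^2\RQ^2).
\]

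To obtain this, I will invoke \cref{thm:rp-cholesky} with $\T=\T^s$, using $\T^s\preceq\Hc_\tau$ and $\mathrm{rank}(\T^s)\le\binom{s+d}{d}$ from \cref{prop:polynomial-approximation}. The requirement becomes
\[
r\ge \binom{s+d}{d}\log\big(\sspecnorm{\Hc_\tau}/\varepsilon'\big)+\nuc{\Hc_\tau-\T^s}/\varepsilon'.
\]
If $s$ is chosen so that $\nuc{\Hc_\tau-\T^s}\le\varepsilon'$, the second term is at most $1$. The first logarithm equals $\log(\sspecnorm{\Hc_\tau}\exp(\beta\tau^2\RQ^2)/\varepsilon^2)$, which yields the stated lower bound on $r$, with the $+1$ absorbing the trace term. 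The statement writes $\specnorm{\Hc}$; I read this as the norm of the matrix actually fed to \rpnys, namely $\Hc_\tau$. Alternatively, one can bound $\sspecnorm{\Hc_\tau}\le n\exp(\beta\RK^2/\tau^2)$ if needed.

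The remaining task is to translate the trace requirement into the condition on $s$. By \cref{cor:trace-polynomial-guarantees} applied with tolerance $\varepsilon'$, it suffices that $s\ge\lfloor\tilde s(\varepsilon')\rfloor$, where
\[
\log(n/\varepsilon')=\log(n/\varepsilon^2)+\beta\tau^2\RQ^2 .
\]
Writing $\kappa\defeq\beta\RK^2/\tau^2$, the numerator of $\tilde s$ is $\log(n/\varepsilon^2)+\beta\tau^2\RQ^2+\kappa$. Dividing by $\beta\RQ\RK$ and using $\rho=\tau^2\RQ/\RK$, we have $\beta\tau^2\RQ^2=\beta\RQ\RK\,\rho$ and $\kappa=\beta\RQ\RK/\rho$. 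The numerator therefore equals
\[
\frac{\beta\RQ\RK}{\rho}\,(\rho^2+b\rho+1).
\]
The Lambert argument $\frac{\log(n/\varepsilon')}{e\kappa}+\frac1e$ becomes $\frac{1}{e}(\rho b+\rho^2)+\frac1e=\frac1e(\rho^2+b\rho+1)$. Hence
\[
\tilde s(\varepsilon')=e\beta\RQ\RK\,\frac{\frac1e(\rho^2+b\rho+1)}{\rho\,\LambW*{\frac1e(\rho^2+b\rho+1)}},
\]
which matches \cref{eq:equivalent-s-lower-bound}.

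The main obstacle is this algebraic bookkeeping: correctly carrying the inflation factor $\exp(\beta\tau^2\RQ^2)$ into both the logarithm and the Taylor order, and confirming that the $1/\sspecnorm{\Hc}$ term of \cref{thm:rp-cholesky} can only help, so that dropping it is harmless. Once the substitution $\rho=\tau^2\RQ/\RK$ is identified, everything else is direct.
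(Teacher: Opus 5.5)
Your proposal is correct and follows the paper's proof essentially step for step. You apply \cref{prop:nystrom} to $(\tau\Q,\tau^{-1}\K)$, invoke \cref{thm:rp-cholesky} with tolerance $\varepsilon^2\exp(-\beta\tau^2\RQ^2)$ and $\T=\T^s$, and choose $s$ via \cref{cor:trace-polynomial-guarantees}. You then rewrite the order in terms of $\rho=\tau^2\RQ/\RK$; the paper does this through the form $e\beta\tau^{-2}\RK^2\exp(W_0(\cdot))$ and the identity $\exp(\LambW{z})=z/\LambW{z}$, which is what your direct substitution into $\tilde s$ encodes. You also make explicit two points the paper leaves implicit: the Jensen step $\E\sqrt{X}\le\sqrt{\E X}$, and reading $\specnorm{\Hc}$ as $\sspecnorm{\Hc_\tau}$, which matches how the paper later bounds it by $n\exp(\beta\tau^{-2}\RK^2)$.
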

\begin{proof}
    We first apply \cref{prop:nystrom} to rescaled datasets $\tau\Q$ and $\tau^{-1}\K$. 
    From \cref{prop:nystrom} it immediately follows from a rearrangement of the bound %
    that
    \begin{equation}
    \mathbb E\specnorm{\hres(\tau^{-1}\K, \tau^{-1}\K)} \leq \varepsilon^2\exp(-\beta\tau^2\RQ^2)\,
    \quad\Rightarrow\quad
        \mathbb E\rownorm{\A - \hatA_\tau^{r}} \leq \varepsilon.
    \end{equation}
    Using $\varepsilon^2\exp(-\beta\tau^2\RQ^2)$ in place of $\varepsilon$ in \cref{thm:rp-cholesky} and \cref{prop:polynomial-approximation} we can get the claimed requirement for the runtime parameter in \cref{eq:runtime-parameter-lower-bound} for a polynomial with large enough rank $q = \binom{s+d}{d}$ such that
    \begin{equation}
        \nuc{\Hc_\tau - \T^s} \leq \exp(-\beta\tau^2\RQ^2)\varepsilon^2\,.
    \end{equation}
    \cref{cor:trace-polynomial-guarantees} states that a sufficient condition for this trace-bound is
    \begin{equation}\label{eq:s-lower-bound}
        s \geq \floor*{e \beta\frac{\RK^2}{\tau^2} \exp\left(\LambW*{\frac{\log(n \exp(\beta\tau^2\RQ^2 +\beta\tau^{-2}\RK^2)/\varepsilon^2)}{e\beta\tau^{-2}\RK^2}}\right)}\,.
    \end{equation}
    At this point we can express the argument of the product logarithm in terms of the relative scales between $\RQ$ and $\RK$
    \begin{equation}
        \tau^{-2}\RK^2 = \tau^{-2}\frac{\RK}{\RQ}\RQ\RK = \frac{\RQ \RK}{\rho}\, \quad \text{and}\quad \tau^{2}\RQ^2 = \tau^{2}\frac{\RQ}{\RK}\RQ\RK = \rho\RQ\RK
    \end{equation}
    which yields
    \begin{equation}
        \frac{\log(n \exp(\beta\tau^2\RQ^2 +\beta\tau^{-2}\RK^2)/\varepsilon^2)}{e\beta\tau^{-2}\RK^2} = \frac{\log(n/\varepsilon^2)\rho + \beta\RQ\RK \rho^2 + \beta\RQ\RK}{e\beta\RQ\RK}\,.
    \end{equation}
    Invoking the identity $\exp(\LambW{z}) = z/\LambW{z}$ yields for any $\rho$ the  equivalent expression \cref{eq:equivalent-s-lower-bound} for \cref{eq:s-lower-bound} as claimed.
\end{proof}
\begin{remark}\label{remark:connection-tau-rho}
    The result in \cref{lem:basis-change} describes guarantees in terms of $\rho = \tau^2 R_\Q/R_\K$. The temperature $\tau$ is a \emph{free} parameter in our algorithm that we can tune to obtain improved performance or guarantees.
    Let $\RQ, \RK$ be fixed, define $\rho_{\mathrm{in}} \defeq \RQ/\RK$, and let $\rho_{\mathrm{out}} >0$. 
    Then we have \cref{lem:basis-change} with $\rho = \rho_{\mathrm{out}}$ instead of $\rho = \rho_{\mathrm{in}}$ by running \rpnys on $\exp(\beta \K\K^\top/\tau^{2})$ with $\tau^2 = \rho_{\mathrm{out}}/\rho_{\mathrm{in}}$. The value of $\rho$ is fixed in the theoretical analysis once we invoke the Cauchy-Schwartz inequality in \cref{prop:nystrom} and isolate the data matrix $\Hc_\tau$. To summarise, the approximation $\hatA_\tau$ and the guarantees in \cref{lem:basis-change} have the following connection:
\begin{equation}
   \text{find } \rho_{\mathrm{out}}\, \to\,  \text{ define }\tau \defeq \sqrt{\rho_{\mathrm{out}} \frac{\RK}{\RQ}} \, \to \,  \text{ run \rpnys with } h_\tau = \exp\paren{\frac{\beta}{\tau^2}\<\cdot, \cdot \>} \, \xrightarrow{\cref{eq:hatAtau}}\, \hatA^{(r)}_\tau \to \text{\cref{lem:basis-change} with } \rho=\rho_{\mathrm{out}}\,.
\end{equation}
\end{remark}

Finding the right value for $\tau$ turns out to be empirically important: with no temperature adjustment and insufficient choice of $r$, the Hilbert space $\mathcal H_{\K}$ associated with the data-kernel features is poorly approximated by $\mathcal H_{\mathcal S}$. This leads to large outliers in the Nystr\"om weights $h(\coreset, \coreset)^{-1}h(\coreset, \K)$ and poor recovery of the target attention matrix. When $\rho$ is too large, on the other hand, the \rpnys selection algorithm converges to uniform sampling without replacement, a less accurate choice. We now describe tight estimates for $\rho$ that minimise the dominating rank parameter $\binom{s+d}{d}$ in \cref{lem:basis-change}.
\begin{lemma}[Optimal choice of $\rho$]\label{lem:ode-rescaling-keys-queries}
For each $b \geq 0$, 
define the optimisation objective
\begin{equation}
    l_b(\rho) \defeq \frac{\frac{1}{e}(\rho^2 + b\rho + 1)}{\rho\LambW*{\frac{1}{e}(\rho^2 + b\rho + 1)}}\,
\end{equation}
over $\rho > \sqrt{2}$. 
The unique minimiser of $l_0$ is given by
\begin{equation}\label{eq:rho0}
    \rho_0 
    \defeq 
    \sqrt{1 + e^{\LambW{2/e^2}+2}}\,.
\end{equation}
More generally, $l_b$ is uniquely minimised by a solution $\rho_b$ of the ordinary differential equation
\begin{equation}\label{app:eq:ode-rescaling-keys-queries}
\dd{b}\rho_b 
=     \frac{\rho_b^2}{(\rho_b^2
    +1)\log(\rho_b^2-1)}
= \frac{\rho_b}{2\rho_b+b}\left(1 - \frac{2}{\rho_b^2 + 1}\right) > 0 \quad \text{with} \quad \rho_0 \qtext{as in} \cref{eq:rho0}.
\end{equation}
Furthermore, the minimal value of $l_b$ satisfies
\begin{equation}
    l_b(\rho_b) = \frac{\rho_b^2 - 1}{e \rho_b} < \frac{\rho_b}{e}\,.
\end{equation}
\end{lemma}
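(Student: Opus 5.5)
The plan is to remove the Lambert-W function from the objective using the identity $u/\LambW{u}=e^{\LambW{u}}$. Write $u(\rho)\defeq\frac1e(\rho^2+b\rho+1)$; then $l_b(\rho)=e^{\LambW{u(\rho)}}/\rho$, and it is enough to minimise $\log l_b(\rho)=\LambW{u(\rho)}-\log\rho$ over $\rho>\sqrt2$.

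\textbf{Derivative and sign.} Using $W_0'(u)=\frac{\LambW{u}}{u(1+\LambW{u})}$ and $u'(\rho)=(2\rho+b)/e$, I compute
\begin{align}
\dd{\rho}\log l_b(\rho)=\frac{\LambW{u}(2\rho+b)}{(\rho^2+b\rho+1)(1+\LambW{u})}-\frac1\rho .
\end{align}
Clearing the positive denominators, this derivative is positive iff
\begin{align}
\LambW{u}\,\rho(2\rho+b)>(\rho^2+b\rho+1)(1+\LambW{u}),
\end{align}
which simplifies to $\LambW{u}(\rho^2-1)>\rho^2+b\rho+1$. On $\rho>\sqrt2$ this is equivalent to $G(\rho)>0$, where
\begin{align}
G(\rho)\defeq \LambW{u(\rho)}-1-\frac{b\rho+2}{\rho^2-1}.
\end{align}
$G$ is strictly increasing, since $\LambW{u(\rho)}$ increases with $\rho$ and $(b\rho+2)/(\rho^2-1)$ decreases on $\rho>1$. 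Moreover $G\to-\infty$ as $\rho\downarrow\sqrt2$, because $\LambW{u}$ stays finite there while $(b\rho+2)/(\rho^2-1)\to b\sqrt2+2$; it is the exponent range $\rho>\sqrt2$ that should keep this endpoint behaviour clean, and this is the point I would double-check carefully. Also $G\to+\infty$ as $\rho\to\infty$. Hence $G$ has a unique zero $\rho_b$, which is the unique minimiser of $l_b$ on $\rho>\sqrt2$.

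\textbf{Closed form at the critical point.} At $\rho_b$ we have $\LambW{u}=\frac{\rho^2+b\rho+1}{\rho^2-1}$. Then
\begin{align}
e^{\LambW{u}}=\frac{u}{\LambW{u}}=\frac{\rho^2-1}{e},
\end{align}
so $\LambW{u}=\log(\rho_b^2-1)-1$ and
\begin{align}
l_b(\rho_b)=\frac{e^{\LambW{u}}}{\rho_b}=\frac{\rho_b^2-1}{e\rho_b}<\frac{\rho_b}{e}.
\end{align}
This also yields the Lambert-free characterisation
\begin{align}
F(\rho,b)\defeq\log(\rho^2-1)-2-\frac{b\rho+2}{\rho^2-1}=0 .
\end{align}
For $b=0$, set $x=\rho^2-1$. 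The equation becomes $\log x=2+2/x$. Writing $y=2/x$ gives $ye^y=2/e^2$, so $y=\LambW{2/e^2}$. Using $\LambW{z}e^{\LambW{z}}=z$, this gives $x=2/y=e^{2+\LambW{2/e^2}}$, which is exactly $\rho_0$ in \cref{eq:rho0}.

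\textbf{ODE for $\rho_b$.} I apply the implicit function theorem to $F(\rho_b,b)=0$. The partial derivatives are
\begin{align}
\partial_bF=-\frac{\rho}{\rho^2-1},\qquad
\partial_\rho F=\frac{2\rho}{\rho^2-1}+\frac{b\rho^2+b+4\rho}{(\rho^2-1)^2}>0 .
\end{align}
Therefore
\begin{align}
\dd{b}\rho_b=\frac{\rho(\rho^2-1)}{2\rho^3+2\rho+b\rho^2+b}=\frac{\rho_b}{2\rho_b+b}\left(1-\frac{2}{\rho_b^2+1}\right),
\end{align}
which is positive since $\rho_b>\sqrt2>1$. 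To obtain the first form, I eliminate $b$ by solving $F=0$: this gives $b\rho=(\rho^2-1)(\log(\rho^2-1)-2)-2$, hence $2\rho+b=\frac{(\rho^2-1)\log(\rho^2-1)}{\rho}$ after simplification. Substituting this into the second form gives $\frac{\rho_b^2}{(\rho_b^2+1)\log(\rho_b^2-1)}$.

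The step needing the most care is the sign analysis: identifying $G$ as the right monotone quantity and checking its limits at $\rho\downarrow\sqrt2$ and $\rho\to\infty$. The remaining steps are routine algebra.
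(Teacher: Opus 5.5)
Your argument is correct except for one false step, and that step is easy to repair. You claim $G\to-\infty$ as $\rho\downarrow\sqrt2$, but your own justification contradicts this. Since $\rho^2-1\to1$ there, $G$ has the finite limit $W_0((3+b\sqrt2)/e)-3-b\sqrt2$; the blow-up to $-\infty$ happens only as $\rho\downarrow 1$. The step matters: if this limit were nonnegative, the unique zero of $G$ would lie in $(1,\sqrt2]$, and $l_b$ would be increasing with no minimiser on $\rho>\sqrt2$. You only need the limit to be negative, and it is. For $x\ge 0$ one has $W_0(x)\ge 0$, hence $e^{W_0(x)}\ge 1$ and so $W_0(x)\le x$. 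The limit is therefore at most $(3+b\sqrt2)/e-3-b\sqrt2<0$.

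Alternatively, run your intermediate value argument on $(1,\infty)$. There $G$ is still strictly increasing and genuinely tends to $-\infty$. At its zero your own identity gives $\log(\rho_b^2-1)=2+\frac{b\rho_b+2}{\rho_b^2-1}>2$, so $\rho_b>\sqrt{1+e^2}>\sqrt2$.

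With that fixed, your route differs from the paper's mainly in how existence and the ODE are obtained. Both proofs rest on the same sign condition $W_0(u)(\rho^2-1)>\rho^2+b\rho+1$.

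The paper writes this as the increasing function $-1/W_0(q)+(\rho^2-1)/(eq)$ and proves only that \emph{any} stationary point is the unique minimiser. Existence for each $b\ge0$ comes implicitly from eliminating the Lambert function, which gives the explicit inverse $b_\rho=\frac{(\rho^2-1)\log(\rho^2-1)}{\rho}-2\rho$. This is an increasing $C^1$ bijection of $(\sqrt2,\infty)$ onto $(-2\sqrt2,\infty)$; it is increasing precisely because $\log(\rho^2-1)>0$ there, which is where $\sqrt2$ comes from. The ODE is then $1/b'(\rho_b)$ by the one-variable inverse function theorem.

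You instead get existence from the intermediate value theorem on $G$. You get the ODE from the two-variable implicit function theorem applied to $F$, with $\partial_\rho F>0$ (valid for $b\ge0$) identifying the implicit branch with the minimiser.

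The paper's route is slightly more economical: $F=0$ can be solved for $b$ in closed form, and bijectivity of $b_\rho$ delivers existence and $\rho_b>\sqrt2$ together. Your route makes existence explicit, which the paper leaves implicit. It also obtains $\dd{b}\rho_b>0$ directly from $\rho_b>1$, rather than through the paper's monotonicity-in-$\rho_b$ argument.
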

\begin{proof}
Our proof proceeds in five steps.
First, we characterise the stationary points of $\log l_b$.
Second, we prove that any stationary point is a unique global minimizer of $l_b$.
Fourth, we derive the closed-form minimiser of $l_0$ \cref{eq:rho0}.
Fifth, we derive the differential equation \cref{app:eq:ode-rescaling-keys-queries} defining the minimisers of $l_b$.
Finally, we compute and bound the minimal value of $l_b$.

\paragraph{Characterising the stationary points}
We begin by characterising the stationary points of $\log l_b$. 
    Using the identity $\exp(\LambW{x}) = x/\LambW{x}$, we find that $\log l_b(\rho) = \LambW*{\frac{1}{e}q(b,\rho)} - \log(\rho)$ for $q(b,\rho)\defeq\frac{1}{e}(\rho^2 + b\rho + 1)$.
    The derivatives of the component functions are, using \cref{eq:derivative-lambW},
    \begin{equation}
        \dd q \LambW*{q} = \frac{\LambW{q}}{q(1 + \LambW{q})} \quad\text{and}\quad \dd\rho q(b,\rho) = \frac{1}{e}(2\rho + b)\,
    \end{equation}
    Using the identity
    $q(b,\rho) - \frac{1}{e}(2\rho^2 + b\rho) = -q(b,\rho) + \frac{1}{e}(b\rho + 2)$ we therefore obtain the equivalent stationary point conditions
    \begin{align}
        &\dd{\rho}\log l_b(\rho) = \dd{\rho}\LambW{q(b,\rho)} - \frac{1}{\rho} 
        = \frac{\frac{1}{e}(2\rho + b)\LambW*{q(b,\rho)}}{q(b,\rho) (\LambW*{q(b,\rho)} +1)} - \frac{1}{\rho} = 0 \\
        \iff\quad
        &\frac{1}{e}(2\rho^2 + b\rho) 
        = q(b,\rho) + \frac{q(b,\rho)}{\LambW{q(b,\rho)}}\,\\
        \iff\quad
        &g(b,\rho)\defeq\frac{q(b,\rho)}{\LambW{q(b,\rho)}} - q(b,\rho) + \frac{1}{e} b\rho + \frac{2}{e} = 0\,       \label{app:eq:rescaling-first-order-condition} \\
        \iff\quad
        &\LambW{q(b,\rho)} 
            = 
        \frac{e\,q(b,\rho)}{\rho^2-1}. \label{eq:LambW-at-optimality}
    \end{align}
    Furthermore, for $b \geq 0$ and $\rho > 0$ we have $q(b, \rho)>0$, and repeating the above calculations keeping track of the sign we find that
    \begin{equation} \mathrm{sgn}\paren{\dd{\rho}\log l_b(\rho)} = \mathrm{sgn}\paren{-\frac{g(b, \rho)}{q(b, \rho)}} = \mathrm{sgn} \paren{-\frac{1}{\LambW{q(b, \rho)}} + \frac{\rho^2 - 1}{eq(b, \rho)}}\,.
    \end{equation}

    \paragraph{Stationarity implies optimality}
    Now suppose that for some $b\geq 0$ there is a stationary point $\rho_b \in \set{\rho > \sqrt{2}; g(b, \rho) = 0}$. We will show that $\rho_b$ is the unique global minimiser of $l_b(\rho)$ on $(\sqrt{2}, \infty)$. For $b \geq 0$ and all $\rho > \sqrt{2}$, the map $\rho \mapsto -\frac{1}{\LambW{q(b, \rho)}} + \frac{\rho^2 - 1}{eq(b, \rho)}$ is increasing because of
    \begin{align}
        \pdd\rho q(b, \rho) &= \frac{1}{e}(2\rho + b) > 0, \\
        \pdd \rho \paren{\frac{\rho^2 -1}{eq(b, \rho)}} &= \frac{2\rho(\rho^2 + b\rho + 1) - (\rho^2 - 1)(2\rho + b)}{e^2 q(b, \rho)^2} = \frac{b\rho^2 + 4\rho + b}{e^2 q(b, \rho)^2} >0\,,
    \end{align}
    and the monotonicity of the Lambert-W function. Therefore $l(\rho)$ is increasing/decreasing for $\rho \gtrless \rho_b$ since
    \begin{equation}
\mathrm{sgn}\paren{\dd{\rho}\log l_b(\rho)} = \mathrm{sgn} \paren{-\frac{1}{\LambW{q(b, \rho)}} + \frac{\rho^2 - 1}{eq(b, \rho)}} \gtrless \mathrm{sgn} \paren{-\frac{1}{\LambW{q(b, \rho_b)}} + \frac{\rho_b^2 - 1}{eq(b, \rho_b)}} = 0\,.
    \end{equation}
    Hence, $\rho_b$ must be the only stationary point and the unique global minimiser. 

    \paragraph{Closed-form minimiser of $l_0$}
    Next, we identify the closed-form minimiser of $l_0$. 
    For $b=0$, the condition $\frac{q}{\LambW{q}} - q + \frac{2}{e} = 0$ is satisfied 
    at
    \begin{equation}
        q_0 = \frac{2}{e} + e^{\LambW*{\frac{2}{e^2}} + 1},
    \end{equation}
    and the quadratic equation $q(0,\rho_0) = q_0$ is satisfied by $\rho_0 \defeq \sqrt{e q_0 - 1} \approx 3.19$.
    
    \paragraph{Differential equation for $\rho_b$}
    The optimality condition \cref{eq:LambW-at-optimality} additionally implies that
    \begin{align}
    &q(b, \rho) 
        =
    \exp(\LambW{q(b, \rho)})\LambW{q(b, \rho)}
        =
    \frac{e q(b, \rho)}{\rho^2-1}\exp\bigg(\frac{e q(b, \rho)}{\rho^2-1}\bigg),\\
    &(\rho^2-1)\log((\rho^2-1)/e) = e q(b, \rho)
    = \rho^2+b\rho+1, \qtext{and}\\
    &b = \frac{(\rho^2-1)\log((\rho^2-1)/e) - (\rho^2+1)}{\rho}
    = \frac{(\rho^2-1)\log(\rho^2-1)}{\rho} - 2\rho.
    \end{align}
    Since $\rho\mapsto b_\rho \defeq \frac{(\rho^2-1)\log(\rho^2-1)}{\rho} - 2\rho$ is continuously differentiable  with range $(-2\sqrt{2},\infty)$ and $\dd{\rho}b_\rho = \frac{(\rho^2
    +1)\log(\rho^2-1)}{\rho^2} > 0$ for $\rho > \sqrt{2}$, the inverse function theorem \citep[Thm.~31.1]{Price1984} implies that there exists a continuously differentiable inverse function $b\mapsto\rho_b$ on $b> -2\sqrt{2}$ with 
    \begin{align}
    \dd{b}\rho_b
        = 
    \frac{1}{b'(\rho_b)}
        = 
    \frac{\rho_b^2}{(\rho_b^2
    +1)\log(\rho_b^2-1)}
        =
    \frac{\rho_b(\rho_b^2-1)}{(\rho_b^2
    +1)(b+2\rho_b)}
        =
    \frac{\rho_b}{2\rho_b+b}\paren{1-\frac{2}{\rho_b^2
    +1}}.
    \end{align}
    Furthermore, $\rho_0 > 0$, $\dd{b}\rho_b \vert_{b=0} >0$, and $\dd{b}\rho_b$ is increasing in $\rho_b$ since
    \begin{equation}
       \dd \rho \paren{\frac{\rho}{2\rho+b}\paren{1-\frac{2}{\rho^2
    +1}}} = \frac{b (\rho^4 + 4 \rho^2 - 1) + 8 \rho^3}{(\rho^2 + 1)^2 (b + 2 \rho)^2} > 0\,.
    \end{equation}
    Consequently, $\dd{b}\rho_b > 0$ for all $b\geq 0$. 
    This establishes the claim \cref{app:eq:ode-rescaling-keys-queries}. Finally, at the point of optimality we have from the first order condition \cref{app:eq:rescaling-first-order-condition}
    \begin{equation}
        l_b(\rho_b) = \frac{1}{\rho_b}\frac{q(b,\rho_b)}{\LambW{q(b,\rho_b)}} = \frac{1}{\rho_b}\paren{q(b,\rho_b) - \frac{1}{e}b\rho_b - \frac{2}{e}} = \frac{\rho_b^2 - 1}{e\rho_b}\,.
    \end{equation}
\end{proof}
A tight upper and lower bound of this differential equation can be solved in closed form:
\begin{corollary}[Bounds on optimal $\rho$]\label{cor:basis-change-bounds}
For $b > 0$, a solution $\rho_b$ of the ordinary differential equation \cref{app:eq:ode-rescaling-keys-queries} obeys the bounds
    \begin{equation}
        \max\left(\rho_0,\, \frac{4}{5}\frac{b}{2\LambW*{\frac{b}{2\rho_0}}}\right) \leq \rho_b \leq \frac{b}{2\LambW*{\frac{b}{2\rho_0}}}\,.
    \end{equation}
\end{corollary}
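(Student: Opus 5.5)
We work with the two descriptions of $\rho_b$ from the proof of \cref{lem:ode-rescaling-keys-queries}. The first is the ODE \cref{app:eq:ode-rescaling-keys-queries} with $\rho_b\vert_{b=0}=\rho_0$. The second is the implicit relation obtained there from the optimality condition:
\begin{equation}
b=\frac{(\rho_b^2-1)\log(\rho_b^2-1)}{\rho_b}-2\rho_b .
\end{equation}
We also name the candidate bound $u_b\defeq \frac{b}{2\LambW*{\frac{b}{2\rho_0}}}$. Write $w\defeq\LambW*{\frac{b}{2\rho_0}}$, so that $we^w=\frac{b}{2\rho_0}$. Then $u_b=\frac{b}{2w}=\rho_0e^{w}$, and hence $u_b$ satisfies the explicit relation $b=2u_b\log(u_b/\rho_0)$. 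It is increasing in $b$ and tends to $\rho_0$ as $b\to0$.

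\emph{Lower bound $\rho_0$.} This follows immediately, because \cref{lem:ode-rescaling-keys-queries} gives $\frac{\mathrm d}{\mathrm db}\rho_b>0$ for all $b\ge 0$.

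\emph{Upper bound.} Differentiate the relation $b=2u\log(u/\rho_0)$. This gives $\frac{\mathrm db}{\mathrm du}=2\log(u/\rho_0)+2=\frac{b+2u}{u}$. So $u_b$ solves the comparison ODE $\frac{\mathrm d u}{\mathrm db}=\frac{u}{2u+b}$ with the same initial value $\rho_0$. The right-hand side of \cref{app:eq:ode-rescaling-keys-queries} is $\frac{\rho}{2\rho+b}\bigl(1-\frac{2}{\rho^2+1}\bigr)\le\frac{\rho}{2\rho+b}$. A standard ODE comparison argument therefore yields $\rho_b\le u_b$. Concretely, we use $F(b,\rho)=\rho/(2\rho+b)$, which is locally Lipschitz on $\rho>0,b\ge0$, and the fact that $\rho_b$ is a subsolution of $\rho'=F(b,\rho)$.

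\emph{Lower bound $\tfrac45 u_b$.} Here a direct ODE comparison is awkward, because the factor $1-2/(\rho^2+1)$ is only about $0.82$ near $\rho_0$. That is not enough to compare with the $\tfrac45$-scaled solution for large $b$. We expect this to be the main obstacle, and we resolve it with the implicit relation instead. Using $(\rho^2-1)/\rho\le\rho$ and $\log(\rho^2-1)\le 2\log\rho$, we get
\begin{equation}
b\le 2\rho_b\log\rho_b-2\rho_b=f(\rho_b),\qquad f(x)\defeq 2x\log(x/e).
\end{equation}
The function $f$ is increasing for $x>1$. It therefore suffices to show $f(\tfrac45u_b)\le b=2u_b\log(u_b/\rho_0)$, since together with $b\le f(\rho_b)$ this gives $f(\tfrac45u_b)\le f(\rho_b)$ and hence $\rho_b\ge\tfrac45u_b$. (If $\tfrac45u_b\le 1$ the bound is trivial, since $\rho_b\ge\rho_0>1$.) With $L=\log u_b$, the required inequality reads $\tfrac45\bigl(L+\log\tfrac{4}{5e}\bigr)\le L-\log\rho_0$, which is equivalent to
\begin{equation}
L\ge 5\log\rho_0+4\log\tfrac{4}{5e}\approx 0.91 .
\end{equation}
This holds because $u_b\ge\rho_0\approx3.19$ gives $L\ge\log\rho_0\approx1.16$. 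Combining the three bounds yields the claim.
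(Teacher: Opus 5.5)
Your proof is correct. For the $\tfrac45$ lower bound it takes a genuinely different route from the paper, and that route is the more rigorous one.

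The bound $\rho_b\ge\rho_0$ and the upper bound follow the paper exactly. The first comes from monotonicity of $b\mapsto\rho_b$. The second is a comparison with $\tilde\rho_b=u_b$, which solves $u'=u/(2u+b)$ from $\rho_0$, using $1-2/(\rho^2+1)\le 1$.

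For the $\tfrac45$ bound, the paper only sandwiches the right-hand side, $\tfrac45\frac{\rho_b}{2\rho_b+b}\le\frac{\mathrm d\rho_b}{\mathrm db}\le\frac{\rho_b}{2\rho_b+b}$, via $2/(\rho_0^2+1)\le\tfrac15$, and then points to the closed form of $\tilde\rho_b$. That step does not by itself give $\rho_b\ge\tfrac45\tilde\rho_b$, because shrinking the right-hand side by $\tfrac45$ does not shrink the solution by $\tfrac45$:
\begin{itemize}
\item The solution of $v'=\tfrac45\frac{v}{2v+b}$ with $v(0)=\rho_0$ satisfies $v^{5/4}\propto 2v+b/5$. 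It therefore grows only like $b^{4/5}$. For example, $v\approx 18.3$ at $b=100$, whereas $\tfrac45\tilde\rho_{100}\approx 19.6$.
\item $\tfrac45\tilde\rho_b$ is a supersolution rather than a subsolution of that scaled equation, so no comparison argument applies.
\end{itemize}

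Your argument sidesteps this and closes the gap. It combines three ingredients:
\begin{itemize}
\item the optimality relation $b=(\rho_b^2-1)\log(\rho_b^2-1)/\rho_b-2\rho_b\le 2\rho_b\log(\rho_b/e)$, which uses $\log(\rho_b^2-1)>0$, valid since $\rho_b\ge\rho_0>\sqrt2$;
\item the exact relation $b=2u_b\log(u_b/\rho_0)$;
\item monotonicity of $x\mapsto 2x\log(x/e)$ on $x>1$.
\end{itemize}
The final numerical condition then reduces to $\rho_0\le 5e/4\approx 3.40$.

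One point is worth stating explicitly. The ODE solution coincides with the inverse-function curve $b\mapsto\rho_b$ from the proof of the lemma. This follows from uniqueness for the initial value problem, since $\rho\mapsto\rho^2/((\rho^2+1)\log(\rho^2-1))$ is locally Lipschitz on $\rho>\sqrt2$.
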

\begin{proof}
    
    Since $b\mapsto\dd{b}\rho_b > 0$ by \cref{lem:ode-rescaling-keys-queries} we have $\rho_b > \rho_0$ and therefore $2/(\rho_b^2 + 1) \leq 2/(\rho_0^2 + 1)$ for $b>0$. One can check numerically that $2/(\rho_0^2 + 1) \leq 1/5$. Hence,
    \begin{equation}
        \frac{4}{5}\frac{\rho_b}{2\rho_b + b} \leq \dd{b}\rho_b \leq \frac{\rho_b}{2\rho_b + b}\,.
    \end{equation}
    The simplified differential equation $\dd{b}\tilde{\rho}_b = \frac{\tilde{\rho}_b}{2\tilde{\rho}_b + b}$ with $\tilde{\rho}_0 = \rho_0$ now has the closed-form solution $\tilde{\rho}_b = b/(2 \LambW{b/(2\rho_0)})$. 
\end{proof}

\section{\pcref{thm:cmpd-attn-guarantees-non-aymptotic}}\label{subsection:proof:thm:cmpd-attn-guarantees}
Cauchy-Schwarz and the definitions of $\RQ = \rownorm{\Q}$ and $\RK = \rownorm{\K}$ imply that
\begin{align}
\min_{i\in [m], l\in [n]} \A_{il} = 
\exp\Big(\beta \min_{i\in [m], l\in [n]}\inner{\queries_i}{\keys_l}\Big)
\geq
\exp\Big(-\beta\max_{i\in [m], l\in [n]} \twonorm{\queries_i}\twonorm{\keys_l}\Big)
\geq
\exp(-\beta\RQ\RK).   
\end{align} 
Hence, to conclude that $\E\error[r] \leq 3\Vnorm \varepsilon$ for $\eps \defeq n^{-a}$, it suffices to show that
\begin{equation}\label{eq:rowwise-target-bound}
    \mathbb E\rownorm{\A - \hatA_\tau} \leq \varepsilon \sqrt{n}\exp(-\beta\RQ\RK)
\end{equation}
by \cref{prop:transformer-bound-via-A-approximation}.
Moreover, by \cref{lem:basis-change}, the rowwise bound \cref{eq:rowwise-target-bound} holds whenever the \rpnys rank parameter $r$ satisfies
    \begin{align}
        r&\geq \binom{s + d}{d}\paren{\log(\sspecnorm{\Hc}\exp(\beta\tau^2\RQ^2 + 2\beta\RQ\RK))/(\varepsilon \sqrt{n})^2}+ 1\, \qtext{for some}\label{eq:target-rank-bound}\\
        s &\geq \floor{e\beta\RQ\RK l_b(\rho)}
        \qtext{with}
        l_b(\rho) \defeq \frac{\frac{1}{e}(\rho^2 + b\rho + 1)}{\rho\LambW*{\frac{1}{e}(\rho^2 + b\rho + 1)}}\,, \quad 
        b \defeq \frac{\log\paren{\frac{1}{\varepsilon^2}}}{\beta\RQ\RK} + 2\,, \quad \text{and}\quad \rho \defeq \tau^2 \frac{\RQ}{\RK}\,.\label{eq:target-s-bound}
    \end{align}
    Thus, we will prove that our assumed constraint on $r$ \cref{eq:wildcat-r-bound} implies the rank bound \cref{eq:target-rank-bound}.

    We begin by identifying a relevant Taylor approximation order $s$ that satisfies the constraint \cref{eq:target-s-bound}.
    For each $b'\geq 0$, let $\rho_{b'}$ be the optimiser of $l_{b'}$ in \cref{lem:ode-rescaling-keys-queries}, and recall the definitions \cref{eq:kernel-temperature}
    \begin{align}
    \tau \defeq \sqrt{\frac{\RK}{\RQ}\frac{b_0}{2\LambW*{{b_0/}{(2\rho_0)}}}} \quad\text{with}\quad b_0\defeq \frac{\log(n)}{\beta\RQ\RK} + 2\,
    \end{align}
    which imply $\rho = \frac{b_0}{2\LambW*{{b_0/}{(2\rho_0)}}}$. 
    By \cref{cor:basis-change-bounds} we have $\rho_0 \leq \rho_{b_0} \leq \rho$. 
    By \cref{cor:basis-change-bounds,lambert-exponential}, we also have
    \begin{align}
    \rho_{b'} 
    \geq 
    \frac{4}{5} \frac{b'}{2\LambW*{{b'/}{(2\rho_0)}}} = \frac{4}{5}\rho_0\exp(\LambW*{{b'/}{(2\rho_0)}})
    \qtext{for all} b'\geq 0.
    \end{align}  Since $b'\mapsto\rho_{b'}$ is continuous for $b'\geq 0$ by \cref{lem:ode-rescaling-keys-queries} and
    $b'\mapsto \rho_0\exp(\LambW*{{b'/}{(2\rho_0)}})$ is coercive as $b'\to\infty$ by \cref{lambert-lower-bound}, 
    there exists a $b_\tau \geq b_0$ such that $\rho_{b_\tau} = \rho$. Fix any such $b_\tau$. 
    
    We now use the concavity of $x\mapsto x/\LambW{x}$ and the derivative  $\dd x (x/\LambW{x}) = (\LambW{x} +1)^{-1}$ (see \cref{cor:concavity_zdivLambWz}) to upper bound $l_b(\rho)$ by a linear approximation in $b$ with expansion point $b_\tau$:
    \begin{align}
        l_b(\rho) &\leq l_{b_\tau}(\rho) + \frac{\frac{1}{e}(b-b_\tau)\rho}{\rho\paren{\LambW*{\frac{1}{e}(\rho^2 + b_\tau\rho + 1)} + 1}} 
        \leq \frac{1}{e}\rho + \frac{\frac{1}{e}(b-b_\tau)}{\LambW*{\frac{1}{e}(\rho^2 + b_\tau\rho + 1)} + 1} \\\notag
        &= \frac{1}{e}\rho + \frac{1}{e}(b-b_\tau)\frac{\rho^2 -1}{2\rho^2 + b_\tau\rho}\,.
    \end{align}
    Above, the second inequality follows from \cref{lem:ode-rescaling-keys-queries} since $\rho_{b_\tau} = \rho$, and
    the equality follows from  the first order condition \cref{eq:LambW-at-optimality} characterising $\rho_{b_\tau}$.
    
    To further upper bound $l_b(\rho)$, we will consider two cases. 
    We first recall that $\rho \geq \rho_0 > 1$ and $b_0 \leq b_{\tau}$ and note that $b_0\le b$ since $\varepsilon \leq n^{-\frac{1}{2}}$. 
    Hence, if $b < b_\tau$, then
    \begin{align}
        \frac{1}{e}\rho + \frac{1}{e}(b-b_\tau)\frac{\rho^2 -1}{2\rho^2 + b_\tau\rho} \leq \frac{1}{e}\rho = \frac{1}{e} \frac{b_0}{2\LambW*{\frac{b_0}{2\rho_0}}}\leq \frac{1}{e} \frac{b}{2\LambW*{\frac{b_0}{2\rho_0}}}\,.
    \end{align}
    Alternatively, if $b \geq b_\tau$, we have 
    \begin{align}
        \frac{1}{e}\rho + \frac{1}{e}(b-b_\tau)\frac{\rho^2 -1}{2\rho^2 + b_\tau\rho} &= \frac{1}{e}\paren{\frac{b_0}{2\LambW*{\frac{b_0}{2\rho_0}}} + (b-b_\tau)\frac{\rho^2 -1}{2\rho^2 + b_\tau\rho}} \\
        &\leq \frac{1}{e}\paren{\frac{b_0}{2\LambW*{\frac{b_0}{2\rho_0}}} + (b-b_0)\frac{\rho}{b_0}} 
        = \frac{1}{e}\paren{\frac{b_0}{2\LambW*{\frac{b_0}{2\rho_0}}} + \frac{b-b_0}{2\LambW*{\frac{b}{2\rho_0}}}} 
        = \frac{1}{e} \frac{b}{2\LambW*{\frac{b_0}{2\rho_0}}}.
    \end{align}
    
    Hence, the following choice of $s$ satisfies the Taylor approximation order constraint \cref{eq:target-rank-bound}: 
    \begin{equation}\label{eq:wildcat-s}
    s 
        \defeq 
    \floor*{\beta \RQ\RK\frac{b}{2\LambW*{\frac{b_0}{2\rho_0}}}}
        = 
    \floor*{\frac{\log\paren{\frac{1}{\varepsilon^2}} + 2\beta\RQ\RK}{2\LambW*{\frac{\log(n)}{2\rho_0\beta\RQ\RK} + \frac{1}{\rho_0}}}}
        \geq
        \floor*{e\beta\RQ\RK l_b(\rho)}.
    \end{equation}
    Note, moreover, that our Taylor growth parameter $\sigma \geq s/\log(n)$.

    We will now prove that our assumed constraint on $r$ \cref{eq:wildcat-r-bound} implies the rank bound \cref{eq:target-rank-bound} with our particular choice of $s$ \cref{eq:wildcat-s}.
    By \cref{lem:binom}, we have 
    \begin{equation}
        \binom{s+d}{d} \leq \frac{1}{\sqrt{\pi}}\exp\paren{(s+d)\mathrm{Ent}\paren{\frac{s}{s+d}}} 
        \leq \frac{1}{\sqrt{\pi}}n^{(\sigma + \delta)\mathrm{Ent}\paren{\frac{\sigma}{\sigma+\delta}}}\,
    \end{equation}
    where we recall that $\delta=d/\log(n)$.
    Moreover, since 
    $\sspecnorm{\Hc} \leq n\maxnorm{\Hc} \leq n\exp(\beta \tau^{-2}\RK^2)$ by Cauchy-Schwarz, we have 
    \begin{equation}
    \log(\sspecnorm{\Hc}\exp(\beta\tau^2\RQ^2 + 2\beta\RQ\RK)/(\varepsilon\sqrt{n})^2)
        \leq
    \log\paren{\frac{1}{\varepsilon^2}}
        +
    \paren{\rho + \frac{1}{\rho} + 2}\beta\RQ\RK\,.
    \end{equation}
    In addition, $\rho^{-1} +2  \leq \rho_0^{-1} + 2 \leq 3$, and, since $\varepsilon \leq n^{-\frac{1}{2}}$, we have $\beta\RQ\RK\rho \leq \sigma \log(n)$. 
    Therefore, the runtime inflation factor is bounded as
    \begin{equation}
    \log(\sspecnorm{\Hc}\exp(\beta\tau^2\RQ^2 + 2\beta\RQ\RK)/(\varepsilon\sqrt{n})^2) 
        \leq 
        \log\paren{\frac{1}{\varepsilon^2}} + 3\beta\RQ\RK + \sigma \log(n)
        =
        \log(n^{2a + 3\gamma + \sigma}),
    \end{equation}
    confirming the sufficiency of our constraint \cref{eq:wildcat-r-bound} on $r$.
\section{\pcref{cor:guarantee-cmpd-attn-asymptotic}}\label{subsection:derived-asymptotic results}
In this section we provide general conditions on regimes in which we can provide fast attention approximation guarantees.
We note conditions under which the binomial coefficient, which is the dominating contribution to $r$, is near constant:
\begin{lemma}[Binomial coefficient growth]\label{prop:asymptotic-behaviour-binomial-coefficient}
    For $s,d\in\naturals$, we have $\binom{s+d}{d} \in n^{o(1)}$ whenever either of the following two conditions holds: 
    \begin{enumerate}
        \item $\frac{s}{\log(n)}\in o(1) \quad \text{and} \quad \frac{d}{\log(n)} \in n^{o(1/s)}$.
        \item $\frac{d}{\log(n)}\in o(1) \quad \text{and} \quad \frac{s}{\log(n)} \in n^{o(1/d)}$.
    \end{enumerate}
    In particular, $\exp\paren{(s + d)\mathrm{Ent}\paren{\frac{s}{s+d}}} \in n^{o(1)}$ under either condition.
\end{lemma}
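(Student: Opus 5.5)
The plan is to write the entropy exponent in a form where each condition makes it obviously $o(\log n)$. Set $\sigma \defeq s/\log(n)$ and $\delta \defeq d/\log(n)$. A direct computation (the same identity used in the proof of \cref{lem:binom}) gives
\begin{align}
(s+d)\mathrm{Ent}\paren{\tfrac{s}{s+d}} = s\log\paren{1+\tfrac{d}{s}} + d\log\paren{1+\tfrac{s}{d}} = \log(n)\Big[\sigma\log\paren{1+\tfrac{\delta}{\sigma}} + \delta\log\paren{1+\tfrac{\sigma}{\delta}}\Big].
\end{align}
It therefore suffices to show that the bracket is $o(1)$ under each condition. Once that is done, \cref{lem:binom} gives $\binom{s+d}{d} \leq \frac{1}{\sqrt{\pi}}\exp((s+d)\mathrm{Ent}(\cdot))$ (using $\sqrt{1/s+1/d}\le\sqrt2$), which is $n^{o(1)}$. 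This proves both the main claim and the ``in particular'' statement.

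\textbf{Condition 1} ($\sigma\in o(1)$, $\delta\in n^{o(1/s)}$). The two terms are handled separately.
\begin{itemize}
\item For the second term, use $\log(1+x)\le x$ to get $\delta\log(1+\sigma/\delta)\le\sigma\in o(1)$.
\item For the first term, use $\log(1+\delta/\sigma)\le \log(1+\delta)+\log(1+1/\sigma)$. Then $\sigma\log(1+1/\sigma)\to0$ because $\sigma\to0$ and $x\log(1+1/x)\to 0$ as $x\downarrow 0$.
\item The remaining piece is $\sigma\log(1+\delta)=\frac{s}{\log n}\log(1+\delta)$. The hypothesis $\delta\in n^{o(1/s)}$ means $\log\delta \le o(\log(n)/s)$, so $s\log(\delta)/\log n\in o(1)$. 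If $\delta\le 1$, then $\sigma\log(1+\delta)\le\sigma\log 2\to0$. If $\delta>1$, then $\log(1+\delta)\le \log 2+\log\delta$, and this is handled by the previous bound.
\end{itemize}

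\textbf{Condition 2} ($\delta\in o(1)$, $\sigma\in n^{o(1/d)}$). The bracket is symmetric under swapping $(\sigma,s)\leftrightarrow(\delta,d)$, so the identical argument applies with roles exchanged.

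The main obstacle is the bookkeeping for the case split between the small and large regimes of the second ratio, together with interpreting $n^{o(1/s)}$ correctly. I read it as $\log\delta\in o(\log(n)/s)$; when $s$ or $d$ is bounded this still correctly allows polynomial growth of the other parameter. A minor degenerate case is $s=0$ or $d=0$: there the binomial coefficient equals $1$ and the claim is trivial, so I will assume $s,d\ge1$ as in \cref{lem:binom}.
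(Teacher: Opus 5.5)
Your proposal is correct and takes essentially the same route as the paper. Like the paper, you reduce via \cref{lem:binom} to showing $s\log(1+d/s)+d\log(1+s/d)\in o(\log n)$, bound the second term by $s$, and control the first term with the hypothesis on $d/\log(n)$. Your split $\log(1+\delta/\sigma)\le\log(1+\delta)+\log(1+1/\sigma)$, together with $\sigma\log(1+1/\sigma)\to 0$, is a more explicit version of the paper's step $\exp(o(f))/f=\exp(o(f))$ for $f=\log(n)/s\in\omega(1)$.
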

\begin{proof}
    By \cref{lem:binom} we have
    \begin{equation}
\begin{aligned}
    \binom{s+d}{d} &
    \leq \frac{1}{\sqrt{ \pi}}\left(1 + \frac{s}{d}\right)^d \left(1 + \frac{d}{s}\right)^s
    =
    \frac{1}{\sqrt{ \pi}}\exp\paren{(s+d)\mathrm{Ent}\paren{\frac{s}{s+d}}}\,.
\end{aligned}
\end{equation}
     To conclude $\binom{s+d}{d} \in n^{o(1)}$ it therefore suffices to establish
    \begin{equation}
        -s \log\paren{\frac{s}{s+d}}  -d \log\paren{\frac{d}{s+d}} \in o(\log(n))\,.
    \end{equation}
    First, consider $s \in o(\log(n))$. Then,
    \begin{equation}
        \exp\paren{-d \log\paren{\frac{d}{s+d}}} = \paren{1 + \frac{s}{d}}^d \leq \exp(s) \in n^{o(1)}\,.
    \end{equation}
    Hence, it suffices to have $-s \log\paren{\frac{s}{s+d}} \in o(\log(n))$ to prove the claim. We have
    \begin{equation}
    \begin{aligned}
        -s \log\paren{\frac{s}{s+d}} \in o(\log(n))\quad &\Leftrightarrow\quad \frac{s+d}{s} \in \exp\paren{\frac{o(\log(n))}{s}} \\
        &\Leftrightarrow\quad \frac{d}{\log(n)} \in \frac{s}{\log(n)}\paren{\exp\paren{o(1)\frac{\log(n)}{s}}-1}\\
        &\Leftarrow \quad \frac{d}{\log(n)} \in \exp\paren{o(1)\frac{\log(n)}{s}}\,.
    \end{aligned}
    \end{equation}
    In the final equation it was used that $s/\log(n) \in o(1)$, and for any $f(n)\in \omega(1)$ it holds that 
    \begin{equation}
        \exp(o(f(n)))/f(n) = \exp\paren{f(n)\paren{o(1) + \frac{\log(1/f(n))}{f(n)}}} = \exp(o(f(n)))\,.
    \end{equation}
    The second case follows by a symmetric argument.
\end{proof}
\paragraph{\pcref{cor:guarantee-cmpd-attn-asymptotic}}
\begin{proof}
    For any $a(n)\in o\paren{\frac{\log(1/\gamma(n))}{\max\{\log(\delta(n)), 1\}}}\cap n^{o(1)}$ it holds in particular that $a(n) \in  o\paren{\log(1/\gamma(n))}$, since $\max\{\log(\delta(n)), 1\}\in \Omega(1)$ is asymptotically bounded from below. Since we further assumed that $\gamma(n) \in o(1)$, it holds by \cref{prop:asymptotic-behaviour-taylor-order-parameter} that
    \begin{equation}
        \sigma(n) \leq \frac{a(n) + \gamma(n)}{c_1\log\paren{1 + \frac{1}{2\rho_0\gamma(n)} + \frac{1}{\rho_0}}} \in \bigO\paren{\frac{a(n)}{\log(1/\gamma(n))}} \subseteq o(1)\,.
    \end{equation}
    Let now $a(n) \in o\paren{\frac{\log(1/\gamma(n))}{\max\{\log(\delta(n)), 1\}}}\cap n^{o(1)}$ be fixed. Then, there exists a sequence $\alpha(n) \in o(1)$ such that $a(n) \leq \alpha(n) \paren{\frac{\log(1/\gamma(n))}{\max\{\log(\delta(n)), 1\}}}$ and we have
    \begin{align}
        \delta(n) &\leq \exp(\max\{\log(\delta(n), 1\}) 
        \leq \exp\paren{\alpha(n)\frac{\log(1/\gamma(n))}{a(n)}} 
         \leq \exp\paren{\alpha(n)\frac{1}{c\sigma(n)}} \in \exp\paren{o(1)/\sigma(n)}\,.
    \end{align}
    Therefore, by \cref{prop:asymptotic-behaviour-binomial-coefficient},  $n^{(\sigma(n) + \delta(n))\mathrm{Ent}\paren{\frac{\sigma(n)}{\sigma(n) + \delta(n)}}} \in n^{o(1)} $. Finally, we know from $a(n) + \gamma(n) \in n^{o(1)}$ that the logarithmic runtime inflation term is near constant $\log(n^{2a + \sigma + \gamma}) \in n^{o(1)}$. Therefore, any
    \begin{equation}
        r \geq 1 + \frac{1}{\sqrt{\pi}}n^{(\sigma + \delta)\mathrm{Ent}\paren{\frac{\sigma}{\sigma + \delta}}}\log\paren{n^{2a + \sigma + 3\gamma}} \in n^{o(1)}
    \end{equation}
    suffices to achieve $\mathbb E\error \leq 3\Vnorm n^{-a(n)}$.

    Alternatively, assume that $\delta(n)\in o(1)$ and $\gamma(n) \in \Omega(1)\cap n^{o(1/d)}, a(n) \in n^{o(1/d)}$. In particular, $a(n) + \gamma(n) \in n^{o(1/d)}$. Since $\gamma(n) \in \Omega(1)$, it holds by \cref{prop:asymptotic-behaviour-taylor-order-parameter} that
    \begin{equation}
        \sigma(n) \in \bigO(a(n) + \gamma(n))\subseteq n^{o(1/d)}\,.
    \end{equation}
    Therefore, we conclude as before that there exists a runtime function $r \in n^{o(1)
    }$ that satisfies
    \begin{equation}
        r \geq 1 + \frac{1}{\sqrt{\pi}}n^{(\sigma + \delta)\mathrm{Ent}\paren{\frac{\sigma}{\sigma + \delta}}}\log\paren{n}n^{o(1/d)} \in n^{o(1)}\,.
    \end{equation}
\end{proof}
\begin{lemma}[Asymptotic behaviour of Taylor approximation order]\label{prop:asymptotic-behaviour-taylor-order-parameter}
    The order parameter of the Taylor approximation $\T^s$ has the following asymptotic behaviour:
    \begin{equation}
        \sigma(n) \defeq \frac{s(\varepsilon = n^{-a(n)})}{\log(n)} = \frac{a(n) + \gamma(n)}{\LambW*{\frac{1}{2\rho_0\gamma(n)} + \frac{1}{\rho_0}}} \in \begin{cases}
            \bigO(a(n)\log(1/\gamma(n))^{-1}) \quad \text{if}\quad \gamma(n) \in o(1) \\
            \bigO(a(n) + \gamma(n))\quad \text{if}\quad \gamma(n) \in \Omega(1)\,.
        \end{cases}
    \end{equation}
\end{lemma}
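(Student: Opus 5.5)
The identity $\sigma(n) = s/\log(n) = (a+\gamma)/\LambW*{\frac{1}{2\rho_0\gamma} + \frac{1}{\rho_0}}$ is just the definition \cref{eq:taylor-order-explicit}. It can also be read off from \cref{eq:wildcat-s} with $\varepsilon = n^{-a}$: the numerator $\log(n^{2a}) + 2\beta\RQ\RK$ divided by $2\log(n)$ gives $a+\gamma$, and the Lambert-W argument equals $\frac{\log n}{2\rho_0\beta\RQ\RK} + \frac{1}{\rho_0} = \frac{1}{2\rho_0\gamma} + \frac{1}{\rho_0}$. So the whole task reduces to bounding the denominator $W_0(x)$, with $x \defeq \frac{1}{2\rho_0\gamma} + \frac{1}{\rho_0}$, from below in the two regimes.

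\textbf{Case $\gamma(n)\in\Omega(1)$.} Since $\gamma>0$, we have $x > 1/\rho_0 > 0$, and $W_0$ is increasing. Hence $W_0(x) \geq W_0(1/\rho_0) \eqqcolon c_0 > 0$, an absolute constant. This gives $\sigma \leq (a+\gamma)/c_0 \in \bigO(a+\gamma)$. This case needs no asymptotics and in fact holds for all $\gamma$.

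\textbf{Case $\gamma(n)\in o(1)$.} Here $x \to \infty$, and I would use the standard lower bound $W_0(x) \geq c_1\log(1+x)$ for all $x\geq x_0>0$. For instance, $W_0(x)\geq \log x - \log\log x$ for $x\geq e$. Alternatively, for all $x>0$ one has $W_0(x)\geq c_1\log(1+x)$ with an absolute $c_1$: both sides are positive and continuous on $(0,\infty)$, behave like $x$ versus $x$ near $0$, and like $\log x$ versus $\log x$ at infinity, so their ratio is bounded below. This is presumably the content of the paper's \cref{lambert-lower-bound}, which I would invoke. Then $W_0(x)\geq c_1\log(1+\frac{1}{2\rho_0\gamma}) \geq c_1'\log(1/\gamma)$ once $\gamma$ is small, so
\[
\sigma \leq \frac{a+\gamma}{c_1'\log(1/\gamma)}.
\]
The term $\gamma/\log(1/\gamma)$ is $o(1)$. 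To reach the stated form $\bigO(a\log(1/\gamma)^{-1})$, I would use that $a\geq \tfrac12$ (the standing assumption $a\geq\frac12$ of \cref{thm:cmpd-attn-guarantees-non-aymptotic}). Then $\gamma\leq 1\leq 2a$ eventually, so $a+\gamma\leq 3a$, which yields the claim.

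The only genuinely delicate step is the uniform logarithmic lower bound on $W_0$. Everything else is monotonicity and bookkeeping, including absorbing $\gamma$ into $a$ via $a\geq \tfrac12$.
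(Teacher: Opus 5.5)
Your proposal is correct and follows the paper's own route. You read the identity off the definition of $\sigma$, then lower-bound the Lambert-W denominator via \cref{lambert-lower-bound} when $\gamma\in o(1)$ and by a positive constant when $\gamma\in\Omega(1)$. Your version is in fact slightly more careful in two places: you use the correct direction $W_0(x)\ge W_0(1/\rho_0)>0$ in the $\Omega(1)$ case, whereas the paper writes $W_0(\cdot)\in O(1)$, which is the wrong direction for an upper bound on $\sigma$; and you make explicit the absorption of $\gamma$ into $a$ via the standing assumption $a\ge\frac12$, which the paper leaves implicit.
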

\begin{proof}
    First, assume that $\gamma(n) \in o(1)$. Then, by definition of $\sigma(n)$ we have
    \begin{equation}
        \frac{s(\varepsilon = n^{-a})}{\log(n)} = \frac{a + \gamma(n)}{\LambW*{\frac{1}{2\rho_0\gamma(n)} + \frac{1}{\rho_0}}} \leq \frac{a + \gamma(n)}{c_1\log\paren{1 + \frac{1}{2\rho_0\gamma(n)} + \frac{1}{\rho_0}}} \in \bigO(a(n)\log(1/\gamma(n))^{-1})\,.
    \end{equation}
    because $\log\paren{1 + \frac{1}{2\rho_0\gamma(n)} + \frac{1}{\rho_0}} \in \omega(1)$. Here, we used that $\LambW{x} \geq c_1 \log(1 + x)$ with $c_1 = 0.6321$ \citep{orabona2019modern}. Conversely, assume that $\gamma(n) \in \Omega(1)$. In this case, 
    $\LambW*{\frac{1}{2\rho_0\gamma(n)} + \frac{1}{\rho_0}} \in \bigO(1)$ and therefore
    \begin{equation}
        \frac{s(\varepsilon = n^{-a})}{\log(n)} \in \bigO(a(n) + \gamma(n))\,.
    \end{equation}
\end{proof}

\section{Proof of \cref{tab:comparison-approximate-attention-guarantees}: Practical approximation guarantees}\label{proof-tab:comparison-approximate-attention-guarantees}
In this section we derive the guarantees of   \cref{tab:comparison-approximate-attention-guarantees} given $m=n$, 
bounded dimension $d \in \bigO(1)$, 
bounded entries \begin{align}\label{eq:bounded-entries}
\beta\RQ^2, \beta\RK^2 \leq R^2\in O(1),
\end{align} and $\bigO(dn^{1+t})$ runtime.
\subsection{\catt guarantee}
The stated result follows immediately from the more precise guarantee in \cref{cor:table-comparison-rates}.
\begin{corollary}[\cattnolink error as a function of runtime]\label{cor:table-comparison-rates}
    Suppose $m=n$, $t \in (0, 1)$, $d \in \bigO(1)$, and $\beta\RQ\RK\leq R\in O(1)$. Define $\kappa \defeq 2\rho_0 + 1$. Let $r = n^{t/2}$ so that the runtime of \catt lies in $O(n^{1 + t}+dn^{1+t/2})$. Then there exists a constant $C > 0$ such that 
    \begin{equation}
        \mathbb E \error \leq C \frac{\log(n)}{n^{0.14 t\paren{1 + \log\paren{1 + \log(n)/(\kappa R)}}}}\cdot\maxnorm{\V}\,.
    \end{equation}
\end{corollary}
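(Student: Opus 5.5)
The idea is to invert \cref{thm:cmpd-attn-guarantees-non-aymptotic}. Instead of fixing $a$ and solving for $r$, I fix $r=n^{t/2}$ and find the largest $a$ for which \cref{eq:wildcat-r-bound} still holds. The theorem then gives $\E\error\le 3\Vnorm n^{-a}$. The runtime claim is immediate: with $B=1$, \catt costs $O(nr^2+nrd)=O(n^{1+t}+dn^{1+t/2})$.

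\textbf{Step 1: bounding $\sigma$ and the entropy factor.} With $d\in O(1)$ we have $\delta=d/\log n\to 0$, and $\gamma=\beta\RQ\RK/\log n\le R/\log n$. Since $\mathrm{Ent}(p)\le p\log(e/p)$, we get
\[
(\sigma+\delta)\,\mathrm{Ent}\paren{\tfrac{\sigma}{\sigma+\delta}}\le \sigma\log\paren{e(1+\delta/\sigma)}.
\]
For $\sigma$ bounded below (which holds as long as $a\gtrsim 1$) this is at most $\sigma\,(1+o(1))$. The denominator of $\sigma$ satisfies
\[
\tfrac{1}{2\rho_0\gamma}+\tfrac{1}{\rho_0}\ \ge\ \tfrac{\log n}{2\rho_0 R}\ \ge\ \tfrac{\log n}{\kappa R}.
\]
Using the bound $W_0(x)\ge c_1\log(1+x)$ with $c_1=0.6321$ (already used in \cref{prop:asymptotic-behaviour-taylor-order-parameter}), I obtain
\[
\sigma\ \le\ \frac{a+\gamma}{c_1\log\paren{1+\log(n)/(\kappa R)}}.
\]
The form of the target exponent, $1+\log(1+\cdot)$, suggests a slightly sharper lower bound of the type $W_0(x)\ge c(1+\log(1+x))-1$ for $x$ bounded away from $0$. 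I would use whichever variant produces the stated constant $0.14$.

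\textbf{Step 2: choosing $a$.} The logarithmic factor in \cref{eq:wildcat-r-bound} is $\log(n^{2a+\sigma+3\gamma})=O(a\log n)$. It therefore suffices to require
\[
\sigma\,(1+o(1))\log n + \log\bigl(O(a\log n)\bigr)\ \le\ \tfrac{t}{2}\log n .
\]
Take
\[
a\ \defeq\ 0.14\,t\,\Bigl(1+\log\bigl(1+\log(n)/(\kappa R)\bigr)\Bigr)-\frac{\log\log n}{\log n}\cdot O(1).
\]
With this choice, $\sigma\lesssim (0.14/c_1)\,t\,(1+o(1))<t/2$ for large $n$, because $0.14/0.6321\approx 0.22<0.5$. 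The slack left over absorbs the $\log(a\log n)$ term and the $\gamma$ term. Converting the subtracted $\log\log n/\log n$ correction into a multiplicative factor gives $n^{-a}\le \log(n)\, n^{-0.14t(\cdots)}$, which is where the $\log n$ prefactor in the statement comes from. For small $n$, or for $a$ below $1/2$ (the theorem needs $a\ge\frac12$), the trivial bound $\error\le 2\Vnorm$ from \cref{prop:transformer-bound-via-A-approximation} holds, and it is absorbed into the constant $C$.

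\textbf{Main obstacle.} The hard part is the bookkeeping with the entropy factor when $\delta/\sigma$ is not small, i.e.\ when $\sigma$ is tiny because $t$ is small. In that regime $\sigma\log(e(1+\delta/\sigma))$ picks up an extra $\log(1/\sigma)$ term. I would handle this by bounding $\binom{s+d}{d}\le (s+d)^d/d!\in O((\log n)^d)$ directly, which is polylogarithmic since $d\in O(1)$ and so fits inside the $\log n$ prefactor and $C$. Alongside this, the Lambert-W lower bound has to be pinned down precisely enough to yield the numeric constant $0.14$.
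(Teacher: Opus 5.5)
Your argument is correct, but it runs in the opposite direction from the paper's.

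\textbf{The paper's route.} It fixes the budget first. It bounds the entropy exponent by $\sigma\log(e(1+\delta/\sigma))$, exactly as you do, and sets this equal to $t/2-\lambda(n)$. It then inverts that relation exactly, via the $W_{-1}$-based function $g$ of its secondary-branch lemma. Convexity of $g$ gives $\sigma\ge g(1)(t-\lambda)/2$ with $g(1)\approx 0.466$. Only then does it read off $a=W_0(x)\sigma-\gamma$ and lower-bound $W_0$, so the constant $0.14$ arises as $0.6321\,g(1)/2$.

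\textbf{Your route.} You fix $a=0.14t(1+L)$ with $L=\log(1+\log(n)/(\kappa R))$. You then upper-bound $\sigma$ using $W_0(x)\ge c_1\log(1+x)$ and $\gamma\le R/\log n$, which gives $\sigma\le\frac{0.14t(1+L)+\gamma}{c_1L}\to 0.2215\,t<\tfrac t2$. This leaves a constant fraction of $t\log n$ as slack. Every finite-$n$ issue, including $a\ge\tfrac12$, goes into $C$ via the trivial bound $2\|\mathbf V\|_{\max}$.

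\textbf{Comparison.} Your route is shorter, needs no $W_{-1}$ inversion, and handles the ``$+1$'' in the exponent for free, since $(1+L)/L\to1$. So the sharper Lambert-W bound you hedge about is unnecessary. Your treatment is actually sounder than the paper's derivation of that $+1$, which passes through $\log(1+1/\rho_0)\ge1$; this is false, since $\log(1+1/\rho_0)\approx 0.27$. In exchange, the paper's route has a more explicit structure that tracks $\gamma$ and $\lambda$ into the prefactor.

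\textbf{Simplifications.}
\begin{enumerate}
\item Your ``main obstacle'' is not one. Since $\sigma\log(1+\delta/\sigma)\le\delta$, you get $(\sigma+\delta)\mathrm{Ent}(\sigma/(\sigma+\delta))\le\sigma+\delta$, hence $n^{(\sigma+\delta)\mathrm{Ent}(\sigma/(\sigma+\delta))}\le e^{d}n^{\sigma}$ uniformly in $\sigma$. This also makes your claim that $\sigma$ is bounded below once $a\gtrsim1$ irrelevant. That claim is in fact inaccurate: it needs $a\gtrsim W_0(x)$, which is at least of order $\log\log n$.
\item Your fallback works on its own and is stronger. With $s=\sigma\log n$ one has
\[
n^{(\sigma+\delta)\mathrm{Ent}(\sigma/(\sigma+\delta))}=(1+s/d)^d(1+d/s)^s\le e^d(1+s/d)^d .
\]
Your choice of $a$ forces $\sigma=O(1)$, so the required rank is only $O((\log n)^{d+1}\log\log n)$, far below $n^{t/2}$. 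This shows the stated exponent is quite loose when $d=O(1)$.
\item The $\frac{\log\log n}{\log n}$ correction to $a$ is unnecessary. The constant slack already absorbs both $\log(a\log n)$ and $3\gamma\log n\le 3R$. For large $n$ you get the bound even without the $\log n$ prefactor.
\end{enumerate}
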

\begin{proof} 
    Matching the exponents in \cref{thm:cmpd-attn-guarantees-non-aymptotic}, our goal is finding a function $\sigma(n) = s/\log(n)$ such that
    \begin{equation}\label{eq:exponent-comparison}
        (\sigma + \delta)\mathrm{Ent}\paren{\frac{\sigma}{\sigma + \delta}} \leq t/2 - \lambda(n)\,
    \end{equation}
    with
    \begin{equation}
        \lambda(n) = \frac{\log((2a + \sigma + 3\gamma)\log(n))}{\log(n)}\,.
    \end{equation}
    Since $\gamma\in o(1)$ we have from \cref{prop:asymptotic-behaviour-taylor-order-parameter} that $\sigma \in \bigO(a/\log(1/\gamma)) \subseteq o(a)$. Furthermore, since $t < 1$ we have for our claimed error rate $a < 0.29 \log(e +\log(n)/(\kappa R)) \in o(\log(n))$. Therefore, for $n$ large enough
    \begin{equation}
    \lambda(n) \leq \frac{\log(2a(1 + o(1))\log(n))}{\log(n)} \leq \frac{\log(\log(n)^2)}{\log(n)} \in o(1)\,.
    \end{equation}
    We therefore know that $\lambda$ remains small relative to the other exponents in \cref{eq:exponent-comparison}. We proceed with simplifying the left-hand side as in \cref{prop:asymptotic-behaviour-binomial-coefficient} as
    \begin{equation}
        (\sigma + \delta)\mathrm{Ent}\paren{\frac{\sigma}{\sigma + \delta}} \leq \sigma \log\paren{{\frac{\sigma + \delta}{\sigma}}} + \sigma \,.
    \end{equation}
    Setting the right-hand side equal to $t/2 - \lambda(n)$ we find
    \begin{equation}
        \frac{\sigma}{\delta} \log\paren{{e\frac{\sigma + \delta}{\sigma}}} = \frac{t - \lambda}{2\delta}\,.
    \end{equation}
    With \cref{app:lem:inverse-function-pair-LambertW_-1} we can invert this relationship and find the following expression for the order parameter $\sigma$ as a function of $t$:
    \begin{equation}
        \sigma = \delta g\paren{\frac{t - \lambda}{2\delta}}
    \end{equation}
    where $g(y) = \frac{y}{y + \LambWm*{-\exp(-y-1)y}}$. Next, we relate $\sigma$ to the decay rate $n^{-a}$. Re-arranging \cref{eq:taylor-order-explicit} we have
    \begin{equation}
        a = \LambW*{\frac{1}{2\rho_0\gamma} + \frac{1}{\rho_0}}\sigma - \gamma\,.
    \end{equation}
    Next, we use $\LambW*{x} \geq 0.6321\log(1 + x)$ \citep[Theorem C.3]{orabona2019modern} to state
    \begin{align}
        \LambW*{\frac{1}{2\rho_0\gamma} + \frac{1}{\rho_0}} &\geq 0.6321\log\paren{1 + \frac{1}{2\rho_0\gamma} + \frac{1}{\rho_0}} \\
        &= 0.6321\paren{\log\paren{1 + \frac{1}{(2\rho_0 + 1)\gamma}} + \log\paren{1 + \frac{1}{\rho_0}}}\\
        &\geq 0.6321\paren{\log\paren{1 + \frac{1}{(2\rho_0 + 1)\gamma}} + 1}.
    \end{align}
    We therefore obtain the following lower bound on the decay rate in $n^{1 + t}$ time:
    \begin{equation}
        a \geq 0.6321\paren{\log\paren{1 + \frac{1}{(2\rho_0 + 1)\gamma}} + 1}\delta g\paren{\frac{t - \lambda}{2\delta}} - \gamma\,.
    \end{equation}
    Since $(t - \lambda)\log(n)/2d \in \omega(1)$, we have for $n$ large enough that $(t - \lambda)\log(n)/2d \geq 1$. 
    We now use the convexity of $g$ to obtain the following
    \begin{align}
      \delta g\paren{\frac{t - \lambda}{2\delta}} &= \frac{t-\lambda}{2}\frac{2\delta}{t-\lambda} g\paren{\frac{t - \lambda}{2\delta}} \\
      &= \frac{t-\lambda}{2}\paren{\frac{2\delta}{t-\lambda} - 0} \paren{g\paren{\frac{t - \lambda}{2\delta}} - g(0)} \\
      &\geq \frac{t-\lambda}{2}(1 - 0) (g(1) - g(0))  = \frac{t-\lambda}{2} g(1)\,.
    \end{align}
    With $0.6321 *g(1)/2 \geq 0.14$ we obtain the final closed form lower-bound on the decay rate
    \begin{align}
        a &\geq 0.6321\paren{\log\paren{1 + \frac{1}{(2\rho_0 + 1)\gamma}} + 1}\frac{t-\lambda}{2} g(1) - \gamma \\
        & \geq 0.14 \paren{1 + \log\paren{1 + \frac{1}{(2\rho_0 + 1)\gamma}}}t - \lambda - \gamma\,.
    \end{align}
    Finally, the decay rate simplifies to
    \begin{equation}
        n^{-a} \leq \frac{n^{\gamma + \lambda}
        }{n^{0.14 \paren{1 + \log\paren{1 + \frac{1}{(2\rho_0 + 1)\gamma}}}t}} \leq \frac{\exp(2R)\log(n)}{n^{0.14 \paren{1 + \log\paren{1 + \frac{1}{(2\rho_0 + 1)\gamma}}}t}}\,.
    \end{equation}
\end{proof}
\subsection{Thinformer guarantee}
The runtime analysis of \citet[Sec.~4.1]{carrell2025low} allows for a maximum coreset of size $\nout = \Theta(n^{t})$ in time $O(dn^{1+t})$.
Plugging this choice into the error analysis of \citet[Thm.~2]{carrell2025low} yields the following constant probability bound on $\error$ (up to constants): 
\begin{align}
\frac{\sqrt{d\log(R^2\maxnorm{V})}\exp(2R^2)\rownorm{\V}\log(n)}{n^t}.
\end{align}

\subsection{BalanceKV guarantee}
The analysis of \citet[Thm.~3.1]{han2025streaming} guarantees that BalanceKV with batch size $B$ provides a high probability bound of order 
\begin{align}\label{eq:balancekv-error}
\frac{\sqrt{d}\log(d n)\log_2(n/B) \exp(2R^2) \fronorm{V}}{B}
\end{align}
on $\error$ in order $d n B \log_2(n/B)$ time.
Plugging $B=n^t/\log_2(n)$ into \cref{eq:balancekv-error} to achieve $dn^{1+t}$ runtime yields the result.

\subsection{KDEformer guarantee}
The bounded entries assumption \cref{eq:bounded-entries} implies that, for all $i,j\in[n]$, 
\begin{align}
\A_{ij} 
    &\in 
[e^{-R^2},e^{R^2}], 
    \qquad
1
    \leq
|(\D^{-1}\A)_{ij}|  
    \leq
\frac{e^{R^2}}{e^{R^2}+(n-1)e^{-R^2}}
    =
\frac{e^{2R^2}}{e^{2R^2}+n-1}, \\ \label{eq:DinvAnorms}
\rownorm{\D^{-1}\A}^2
    &\leq
\frac{e^{4R^2}n}{(e^{2R^2}+n-1)^2} 
    \in 
O\paren{\frac{1}{n}},
\qtext{and}
1
    \leq
\opnorm{\D^{-1}\A}^2
    \leq
\fronorm{\D^{-1}\A}^2
    \leq
n\rownorm{\D^{-1}\A}^2
    \in
O(1).
\end{align}
In addition, the bounded dimension assumption implies that
\begin{align}
\opnorm{\V}^2
    \leq
\fronorm{\V}^2
    \leq
d\rownorm{\V^\top}^2
    \leq 
d\opnorm{\V}^2
    \in
O(\opnorm{\V}^2).
\end{align} 
Hence, the runtime analysis of \citet[Thm.~3.5]{zandieh2023kdeformer} guarantees that KDEformer provides
a high probability bound of order
\begin{align}\label{eq:kdeformer-error}
\eps \opnorm{\D^{-1}\A}\opnorm{\V}
    \in
\Theta(\eps\opnorm{\V})
\end{align}
on $\error$ 
in order 
\begin{align}
\frac{d\big(n^{1+\xi}+n\log(n)(\frac{\fronorm{\D^{-1}\A}^2}{\opnorm{\D^{-1}\A}^2} + \frac{\fronorm{\V}^2}{\opnorm{\V}^2})\big)}{\eps^2}
    \in
\Theta\bigg(\frac{dn^{1+\xi}}{\eps^2} \bigg)
\end{align}
time.
Plugging $\eps=n^{\xi/2}/n^{t/2}$ into \cref{eq:kdeformer-error} to achieve order $dn^{1+t}$ runtime yields the result.
\subsection{HyperAttention guarantee}
The bounded entries assumption \cref{eq:bounded-entries} implies that
\begin{align}\label{eq:kappa}
1
    \leq
\kappa
    \defeq
\frac{\max_{i\in[n]}\sum_{j\in[n]}\A_{ij}}{\min_{i\in[n]}\sum_{j\in[n]}\A_{ij}}
    \leq
\frac{ne^{R^2}}{ne^{-R^2}}
    =
e^{2R^2}.
\end{align}
Using \cref{eq:DinvAnorms,eq:kappa}, we find that the Hyperattention without masking analysis of \citet[Thm.~1, Lem.~1, and Lem.~2]{han2024hyperattention} requires order
\begin{align}
dn\bigg(\frac{\kappa^7 n\rownorm{\D^{-1}\A}^2\log(n)}{\eps^6}+\frac{d \kappa^2 \fronorm{\D^{-1}\A}^2/\opnorm{\D^{-1}\A}^2}{\eps^2}\bigg)
    \in
\Theta(dn\log(n)/\eps^6)
\end{align}
time to guarantee an order
\begin{align}\label{eq:hyperattention-error}
\eps \opnorm{\D^{-1}\A}\opnorm{\V}
    \in
\Theta(\eps\opnorm{\V})
\end{align}
high-probability bound on $\error$.
Plugging $\eps=(\log n)^{1/6}/n^{t/6}$ into \cref{eq:hyperattention-error} to achieve order $dn^{1+t}$ runtime yields the result.

\section{\pcref{prop:kernel-core-inverse-update}}\label{app:proof:prop:kernel-core-inverse-update}
\begin{proposition}[Recursive update of kernel inverse]\label{prop:kernel-core-inverse-update}
    Let $h(\coreset, \coreset)$ be invertible, let $\keys \in \mathbb R^d$ with 
    \begin{equation}
        \hres(\keys, \keys) \defeq h(\keys, \keys) - h(\keys, \coreset)h(\coreset, \coreset)^{-1}h(\coreset, \keys) > 0\,.
    \end{equation} 
    Define the vector
    \begin{equation}
        \g \defeq \frac{1}{\sqrt{\hres(\keys, \keys)}}(h(\coreset, \coreset)^{-1}h(\coreset, \keys), -1)^\top
    \end{equation}
    Then, $\begin{pmatrix}
            h(\coreset, \coreset) & h(\coreset, \keys) \\
            h(\keys, \coreset) & h(\keys, \keys)
        \end{pmatrix}$ is invertible and
    \begin{equation}
        \begin{pmatrix}
            h(\coreset, \coreset) & h(\coreset, \keys) \\
            h(\keys, \coreset) & h(\keys, \keys)
        \end{pmatrix}^{-1}
        =
        \begin{pmatrix}
            h(\coreset, \coreset)^{-1} & \boldzero_r \\
            \boldzero_r^\top & 0
        \end{pmatrix}
        + \g \g^\top
    \end{equation}
\end{proposition}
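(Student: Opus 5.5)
This is a block-matrix (Schur complement) inversion. I would prove it by direct verification: multiply the bordered matrix by the proposed inverse and check that the product is the identity. The product being the identity already gives invertibility, since the matrices are square.

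Write $\Hc_{\mathcal S}\defeq h(\coreset,\coreset)$, $\mathbf c\defeq h(\coreset,\keys)$, $c\defeq h(\keys,\keys)$, $\w\defeq \Hc_{\mathcal S}^{-1}\mathbf c$ and $\eta\defeq \hres(\keys,\keys)=c-\mathbf c^\top\w>0$. With this notation $\g=\eta^{-1/2}(\w,-1)^\top$ and
\begin{equation}
\g\g^\top=\frac{1}{\eta}\begin{pmatrix}\w\w^\top & -\w\\ -\w^\top & 1\end{pmatrix}.
\end{equation}
Let $\mathbf M$ denote the bordered matrix. The plan has three short steps.

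First, compute $\mathbf M$ times the block-diagonal term. This gives $\begin{pmatrix}\mathbf I_r & \boldzero_r\\ \w^\top & 0\end{pmatrix}$, using the symmetry of $\Hc_{\mathcal S}$ so that $\mathbf c^\top\Hc_{\mathcal S}^{-1}=\w^\top$.

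Second, compute $\mathbf M(\w,-1)^\top$. The top block is $\Hc_{\mathcal S}\w-\mathbf c=\boldzero_r$ and the bottom entry is $\mathbf c^\top\w-c=-\eta$. Therefore $\mathbf M\g\g^\top=\eta^{-1}(\boldzero_r,-\eta)^\top(\w,-1)^\top{}^\top$, which equals $\begin{pmatrix}\boldzero & \boldzero_r\\ -\w^\top & 1\end{pmatrix}$.

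Third, add the two pieces to obtain $\mathbf I_{r+1}$.

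No step is a real obstacle. The only points needing care are the sign conventions in $\g$ and the use of symmetry of $\Hc_{\mathcal S}$; the positivity $\eta>0$ is needed only so that $\g$ is well defined.

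Alternatively, one could cite the standard Schur-complement formula for the inverse of a $2\times 2$ block matrix. In that formula the Schur complement of $\Hc_{\mathcal S}$ is exactly $\eta$, and the correction term is the rank-one matrix $\eta^{-1}(\w,-1)^\top(\w,-1)$, which is $\g\g^\top$. I would present the direct multiplication, as it is self-contained.
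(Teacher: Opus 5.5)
Your proposal is correct. The block products you compute are right and sum to $\mathbf I_{r+1}$. Since $\mathbf M$ is square, a right inverse is automatically two-sided, so invertibility comes for free.

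The paper takes a constructive route instead. It runs Gauss--Jordan elimination on the augmented matrix $[\mathbf M \mid \mathbf I]$:
\begin{itemize}
\item it multiplies the top block row by $h(\coreset,\coreset)^{-1}$;
\item it eliminates the bottom-left block, which exposes the pivot $\hres(\keys,\keys)$;
\item it normalises by that pivot and back-substitutes;
\item it then reads off the inverse and matches it entrywise against the expansion of $\g\g^\top$.
\end{itemize}

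The elimination derives the formula rather than checking a supplied guess. It also makes visible that a vanishing Schur complement is the only obstruction to invertibility, which is essentially the content of your alternative remark.

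Your direct multiplication is shorter and less prone to bookkeeping errors; the paper's elimination steps contain a few sign and notation slips. It uses only the stated hypotheses together with the symmetry of $h$. Both arguments need that symmetry to identify $h(\keys,\coreset)h(\coreset,\coreset)^{-1}$ with $\w^\top$.
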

\begin{proof}
    We derive the recursion using Gaussian elimination. We start with
    \begin{equation}
        \paren{\begin{array}{cc|cc}
        h(\coreset, \coreset) & h(\coreset, \keys) & \boldone_{r\times r} & \boldzero_r \\
            h(\keys, \coreset) & h(\keys, \keys) & \boldzero_r^\top & 1
        \end{array}}
    \end{equation}
    Multiplying the upper $r$ rows with $h(\coreset, \coreset)^{-1}$ from the left yields
    \begin{equation}
        \paren{\begin{array}{cc|cc}
        \boldone_{r\times r} & h(\coreset, \coreset)^{-1}h(\coreset, \keys) & h(\coreset, \coreset)^{-1} & \boldzero_r \\
            h(\keys, \coreset) & h(\keys, \keys) & \boldzero_r^\top & 1
        \end{array}}
    \end{equation}
    Subtracting $-h(\keys, \coreset)$ times the first $r$ rows from the last row produces the following matrix:
    \begin{equation}
        \paren{\begin{array}{cc|cc}
        \boldone_{r\times r} & h(\coreset, \coreset)^{-1}h(\coreset, \keys) & h(\coreset, \coreset)^{-1} & \boldzero_r \\
            \boldzero_r^\top & \hres(\keys, \keys) & -h(\keys, \coreset)h(\coreset, \coreset)^{-1} & 1
        \end{array}}
    \end{equation}
    Next, divide the last row by $\hres(\keys, \keys)$ and subtract $h(\coreset, \coreset)^{-1}h(\coreset, \keys)$ of the last row from the first block to obtain
    \begin{equation}
        \paren{\begin{array}{cc|cc}
        \boldone_{r\times r} & \boldzero_r & h(\coreset, \coreset)^{-1} + \frac{h(\coreset, \coreset)^{-1}h(\coreset, \keys)h(\keys, \coreset)h(\coreset, \coreset)^{-1}}{\hres(\keys, \keys)} & \frac{-h(\coreset, \coreset)^{-1}h(\coreset, \keys)}{\hres(\keys, \keys)} \\
            \boldzero_r^\top & 1 & \frac{-h(\keys, \coreset)h(\coreset, \coreset)^{-1}}{\hres(\keys, \keys)} & \frac{1}{\hres(\keys, \keys)}\,.
        \end{array}}
    \end{equation}
    The right hand side is the inverse of $h(\coreset\cup \{\keys\}, \coreset\cup \{\keys\})$. Furthermore, we find that indeed
    \begin{equation}
        \g\g^\top = \frac{1}{\hres(\keys, \keys)}
        \begin{pmatrix}
            h(\coreset, \coreset)^{-1}h(\coreset, \keys)h(\keys, \coreset)h(\coreset, \coreset)^{-1} & -h(\coreset, \coreset)^{-1}h(\coreset, \keys) \\
            -h(\keys, \coreset)h(\coreset, \coreset)^{-1} & 1\,.
        \end{pmatrix}
    \end{equation}
\end{proof}
\section{Properties of the Lambert W Function}\label{sec:lambert-w-identities}
A useful function for our statements is the Lambert W function, also known as the product logarithm:
\begin{definition}[Lambert W function]
    The principal branch of the Lambert W function $w = \LambW{z}$ is the unique solution $w \in (-1, \infty)$ to the equation $w\exp(w) = z$ for $z > -\frac{1}{e}$.
\end{definition}
From the definition one immediately gets the following identity:
\begin{lemma}[Lambert W exponential]\label{lambert-exponential}
The Lambert W function satisfies $\exp(\LambW{z}) = \frac{z}{\LambW{z}}$ for all $z \neq 0$, and $\exp(\LambW{0}) = 1$.
\end{lemma}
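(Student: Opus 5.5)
The statement to prove is \cref{lambert-exponential}: $\exp(\LambW{z}) = z/\LambW{z}$ for $z\neq 0$, and $\exp(\LambW{0})=1$. The plan is to read both claims directly off the defining relation $w e^{w} = z$ for $w=\LambW{z}$, together with the uniqueness of the principal-branch solution on $(-1,\infty)$.

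\emph{Case $z \neq 0$.} Set $w \defeq \LambW{z}$, so $w\exp(w) = z$ by definition. Since $\exp(w)>0$, the product $w\exp(w)$ vanishes exactly when $w=0$. Hence $z\neq 0$ forces $w\neq 0$, and dividing both sides of $w\exp(w)=z$ by $w$ gives $\exp(\LambW{z}) = z/\LambW{z}$.

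\emph{Case $z=0$.} We need $\LambW{0}=0$. The value $w=0$ lies in $(-1,\infty)$ and satisfies $0\cdot e^{0}=0$. By the uniqueness built into the definition of the principal branch, $\LambW{0}=0$, and therefore $\exp(\LambW{0})=e^0=1$.

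The only point requiring any care is to use uniqueness of the principal-branch solution to identify $\LambW{0}=0$, and, for $z\neq0$, to rule out $w=0$ before dividing. Neither step poses a real obstacle.
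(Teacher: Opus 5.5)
Your proof is correct and follows the same route as the paper, which gives no separate proof and says only that the identity follows immediately from the defining relation $w e^{w} = z$. You make the paper's two implicit steps explicit: ruling out $w = 0$ before dividing when $z \neq 0$, and using uniqueness of the principal-branch solution on $(-1,\infty)$ to get $W_0(0) = 0$.
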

By implicit differentiation one further finds the following:
\begin{lemma}[Lambert W derivative]
    The Lambert W function is increasing on $(-\frac{1}{e}, \infty)$ and has the derivative\begin{equation}\label{eq:derivative-lambW}
        \dd z \LambW{z} = \begin{cases}
            \frac{\LambW{z}}{z(1+ \LambW{z})} \quad &z \neq 0\\
            1 &z = 0
        \end{cases}
    \end{equation}
    In addition, the ordinary differential equation $\dd t x(t) = \frac{x}{x + t}$ with $x(0) = x_0$ has the solution
    \begin{equation}
        x(t) = \frac{t}{\LambW{\frac{t}{x_0}}}\,.
    \end{equation}
\end{lemma}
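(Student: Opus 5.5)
The statement has three parts. First, $W_0$ is increasing. Second, the derivative formula holds. Third, the ODE $\dot x = x/(x+t)$ with $x(0)=x_0$ is solved by $x(t)=t/W_0(t/x_0)$. I would derive all three from the defining identity $W_0(z)e^{W_0(z)}=z$ on $(-1/e,\infty)$, together with \cref{lambert-exponential}.

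\textbf{Monotonicity and the derivative.} The map $f(w)=we^w$ has $f'(w)=(1+w)e^w>0$ for $w>-1$. So $f$ is a smooth strictly increasing bijection from $(-1,\infty)$ onto $(-1/e,\infty)$. By the inverse function theorem, $W_0=f^{-1}$ is smooth and increasing, with
\begin{equation}
W_0'(z)=\frac{1}{f'(W_0(z))}=\frac{1}{e^{W_0(z)}(1+W_0(z))}.
\end{equation}
For $z\neq 0$, \cref{lambert-exponential} gives $e^{W_0(z)}=z/W_0(z)$, and substituting yields $W_0(z)/(z(1+W_0(z)))$. At $z=0$ we have $W_0(0)=0$, so the formula reads $1/(e^0\cdot 1)=1$.

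\textbf{The ODE.} Write $w(t)=W_0(t/x_0)$, assuming $x_0>0$ so that the argument stays in the domain for $t\geq 0$. By \cref{lambert-exponential}, $x(t)=t/w(t)=x_0e^{w(t)}$ for $t\neq0$. This form also extends continuously to $t=0$ with $x(0)=x_0e^0=x_0$, which settles the initial condition.

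Differentiate using the chain rule and the derivative formula:
\begin{equation}
\dot x = x_0e^{w}\,\frac{w}{(t/x_0)(1+w)}\cdot\frac{1}{x_0}=\frac{x\,w}{t(1+w)}.
\end{equation}
Since $w=t/x$, the right-hand side becomes $\frac{x\,(t/x)}{t(1+t/x)}=\frac{x}{x+t}$, as required. At $t=0$, the direct computation gives $\dot x(0)=x_0W_0'(0)/x_0=1=x_0/(x_0+0)$, consistent with the ODE.

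The only delicate point is the $t=0$ (equivalently $z=0$) case, where the expressions $t/W_0$ and $W_0(z)/z$ are indeterminate. The representation $x=x_0e^{W_0(t/x_0)}$ removes this singularity, so I would work with that form throughout. Uniqueness of the solution, if needed, follows from Picard–Lindelöf, since $x/(x+t)$ is locally Lipschitz near $(0,x_0)$ when $x_0>0$.
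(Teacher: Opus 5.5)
Your proposal is correct and follows essentially the same route as the paper: invert $w\mapsto we^w$ to get $W_0'(z)=1/(e^{W_0(z)}(1+W_0(z)))$, substitute $e^{W_0(z)}=z/W_0(z)$ for $z\neq 0$, and verify the ODE via the representation $x(t)=x_0e^{W_0(t/x_0)}$. The only difference is cosmetic. The paper inserts the $e^{W_0}$-form of the derivative directly into the chain rule, getting $\dot x = 1/(1+W_0(t/x_0)) = x/(x+t)$ for all $t$ at once, so it needs no separate $t=0$ check. On the other hand, your inverse-function-theorem justification of differentiability and your explicit $x_0>0$ domain condition are small gains in rigour over the paper's version.
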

\begin{proof}
    $z\mapsto \LambW{z}$ is increasing as the inverse of the increasing function $w \mapsto w\exp(w)$. By the definition of the Lambert W function we have for all $z \in (-\frac{1}{e}, \infty)$
    \begin{equation}
        \dd z \paren{\LambW{z} \exp(\LambW{z})} = (1+ \LambW{z})\dd z(\LambW{z})\exp(\LambW{z}) = 1\,.
    \end{equation}
    Rearranging for $\dd z(\LambW{z})$ gives
    \begin{equation}
        \dd z(\LambW{z}) = \frac{1}{\exp(\LambW{z}) (1+ \LambW{z})}\,.
    \end{equation}
    We find $\dd z(\LambW{z})\vert_{z=0} = 1$ by direct evaluation. Invoking $\exp(\LambW{z}) = \frac{z}{\LambW{z}}$ for $z\neq 0$ gives \cref{eq:derivative-lambW}. Next, applying the derivative formula to $x(t)$ yields
    \begin{align}
        \dd t x(t) &= \dd t x_0 \exp\paren{\LambW*{\frac{t}{x_0}}} 
        = x_0 \exp\paren{\LambW*{\frac{t}{x_0}}} \frac{1}{\exp\paren{\LambW*{\frac{t}{x_0}}}\paren{1+ \LambW*{\frac{t}{x_0}}}}\frac{1}{x_0}\\
        &= \frac{1}{1+ \LambW*{\frac{t}{x_0}}} 
        = \frac{x(t)}{x(t) + t}\,.
    \end{align}
    Furthermore, $x(0) = x_0 \exp(\LambW{0}) = x_0$.
\end{proof}
\begin{lemma}[Derivative of the Lambert W exponential]\label{cor:concavity_zdivLambWz}
    For $z > 0$, the function $\frac{z}{\LambW{z}}$ is concave and has derivative $\dd z \paren{\frac{z}{\LambW{z}}} = ( \LambW{z} + 1)^{-1}$.
\end{lemma}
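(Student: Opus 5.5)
The plan is to write $f(z) \defeq z/\LambW{z}$ and use \cref{lambert-exponential} to identify $f(z) = \exp(\LambW{z})$ for $z>0$. This turns both claims into statements about a composition of smooth monotone functions.

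For the derivative, apply the chain rule together with the derivative formula \cref{eq:derivative-lambW}:
\begin{equation}
\dd z f(z) = \exp(\LambW{z})\,\dd z \LambW{z} = \frac{z}{\LambW{z}}\cdot\frac{\LambW{z}}{z(1+\LambW{z})} = \frac{1}{1+\LambW{z}},
\end{equation}
which holds for every $z>0$. As a check, one can instead use the intermediate form $\dd z \LambW{z} = \big(\exp(\LambW{z})(1+\LambW{z})\big)^{-1}$ from the proof of the derivative lemma; the exponential factors then cancel directly, with no need to divide by $z$.

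For concavity, note that the derivative $(1+\LambW{z})^{-1}$ is a composition of the increasing map $z\mapsto \LambW{z}$ (increasing by the derivative lemma) with the decreasing map $w\mapsto (1+w)^{-1}$ on $w>-1$. Hence $f'$ is nonincreasing on $(0,\infty)$, and a differentiable function with nonincreasing derivative is concave. To make this explicit, differentiate once more:
\begin{equation}
f''(z) = -\frac{\LambW{z}}{z(1+\LambW{z})^3}.
\end{equation}
This is negative for $z>0$, because $\LambW{z}>0$ there; that in turn follows since $w e^w = z > 0$ forces $w>0$. So $f$ is in fact strictly concave.

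There is no serious obstacle here. The only care needed is the domain: division by $z$ and $\LambW{z}$ requires $z\neq 0$, which is why the statement is restricted to $z>0$. If one wants the claim on the closed half-line $[0,\infty)$, the removable singularity at $0$ can be filled in by $f(0)=1$, using $\exp(\LambW{0})=1$.
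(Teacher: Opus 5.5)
Your proposal is correct and follows the paper's proof essentially verbatim: you identify $z/W_0(z)=\exp(W_0(z))$, differentiate via the chain rule and the Lambert-W derivative formula to get $(1+W_0(z))^{-1}$, and conclude concavity because this derivative decreases in $z$. Your explicit second derivative $-W_0(z)/\bigl(z(1+W_0(z))^3\bigr)<0$ is a correct way to make the paper's monotonicity step explicit, and it also gives strict concavity.
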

\begin{proof}
    Using \cref{eq:derivative-lambW} and $\frac{z}{\LambW{z}} = \exp(\LambW{z})$, we find
    \begin{equation}
        \dd z \paren{\frac{z}{\LambW{z}}} = \dd z \exp(\LambW{z}) = \frac{\LambW{z}}{z(\LambW{z}+1)}\exp(\LambW{z}) = \frac{1}{\LambW{z} + 1}\,.
    \end{equation}
    In addition, $z\mapsto (1 + \LambW{z})^{-1}$ is decreasing for $z > 0$, and consequently $\frac{z}{\LambW{z}}$ is concave. 
\end{proof}
We use the following logarithmic lower bound on the Lambert W function, proved by \citet[Thm.~C.3]{orabona2019modern}:
\begin{lemma}[Lambert W lower bound {\citep[Thm.~C.3]{orabona2019modern}}]\label{lambert-lower-bound}
    It holds for $z\geq 0$ that $\LambW{z} \geq 0.6321 \log(1 + z)$. 
\end{lemma}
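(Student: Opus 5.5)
The statement to prove is the Lambert W lower bound, $\LambW{z}\ge c\log(1+z)$ with $c=0.6321$ (this is $1-e^{-1}\approx 0.63212$). The plan is to reduce it to a one-variable inequality via the substitution $w=\LambW{z}$. Since $W_0$ is an increasing bijection from $[0,\infty)$ onto $[0,\infty)$ with inverse $w\mapsto we^w$, the claim is equivalent to
\begin{equation}
f(w)\defeq \frac{w}{\log(1+we^w)}\ \ge\ c \qquad\text{for all } w>0,
\end{equation}
with equality trivially at $w=0$.

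\textbf{Behaviour at the ends.} As $w\to 0^+$, $\log(1+we^w)=w+w^2/2+O(w^3)$, so $f(w)\to 1$. As $w\to\infty$, $\log(1+we^w)=w+\log w+o(1)$, so again $f(w)\to1$. The infimum of $f$ is therefore attained at an interior point, and the task is to show that this minimum exceeds $c$.

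\textbf{Interior minimum.} For this I would use the equivalent form $1+we^w\le e^{w/c}$. Taking $c=1-e^{-1}$ gives $1/c=e/(e-1)\approx1.582$, so I need
\begin{equation}
g(w)\defeq e^{w/c}-1-we^w\ \ge 0 .
\end{equation}
Write $\alpha=1/c-1=1/(e-1)$ and factor $e^{w}$ out of the two leading terms:
\begin{equation}
e^{w/c}-we^w=e^w\paren{e^{\alpha w}-w}.
\end{equation}
The function $e^{\alpha w}-w$ is minimised at $w=\log(1/\alpha)/\alpha=(e-1)\log(e-1)\approx0.929$, where its value is $(1/\alpha)\paren{1-\log(1/\alpha)}\approx0.79$. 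Since $e^w\ge 1$, this already shows $e^w\paren{e^{\alpha w}-w}\ge 0.79$, but that is not enough to dominate the $-1$ in $g$.

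The fix is to keep the $e^w$ factor in the estimate:
\begin{equation}
g(w)\ge e^w\,m(w)-1,\qquad m(w)\defeq e^{\alpha w}-w .
\end{equation}
\begin{itemize}
\item For small $w$, I would instead Taylor-expand $g$ directly. This gives $g'(0)=1/c-1>0$, so $g>0$ near zero.
\item Elsewhere, note that $e^wm(w)$ has derivative $e^w\paren{(1+\alpha)e^{\alpha w}-w-1}$. The bracket is positive for all $w$, because $(1+\alpha)e^{\alpha w}\ge(1+\alpha)(1+\alpha w)\ge 1+w$ whenever $(1+\alpha)\alpha\ge1$. Since $\alpha\approx0.582$, we get $(1+\alpha)\alpha\approx0.92<1$, which falls just short of making this work on its own.
\end{itemize}

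\textbf{Expected obstacle.} The main difficulty is that the constant $0.6321$ is essentially tight: it should correspond to the exact minimum of $f$, so crude convexity bounds will fail by a few percent. I would therefore close the argument with a rigorous numerical certificate. The steps are:
\begin{enumerate}
\item Locate the unique critical point of $f$ by solving $f'(w)=0$, i.e.
\begin{equation}
(1+we^w)\log(1+we^w)=w(1+w)e^w .
\end{equation}
\item Bracket this critical point by interval arithmetic.
\item Bound $f$ from below on a finite grid, using a Lipschitz estimate for $f$ on a compact interval.
\item Handle the two tails with the asymptotic expansions above.
\end{enumerate}

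Since the paper attributes the lemma to \citet[Thm.~C.3]{orabona2019modern}, the cleanest route is simply to cite that result. The plan above serves as a self-contained verification sketch.
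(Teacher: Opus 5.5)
Citing Orabona's Thm.~C.3, which you call the cleanest route, is exactly what the paper does; it gives no proof of its own. Your self-contained sketch, however, rests on a false premise, and it abandons an argument that already works.

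The constant is far from tight. The function $f(w)=w/\log(1+we^w)$ attains its minimum near $w\approx 2.25$ (i.e.\ $z\approx 21$), with value about $0.724$. That is roughly $15\%$ above $0.6321$. No interval-arithmetic certificate is needed. As written, that certificate would not be rigorous anyway, because you handle the tails only through $o(1)$ asymptotics without explicit remainder bounds.

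In your last bullet you showed only that the linearisation $e^{\alpha w}\ge 1+\alpha w$ is too crude to prove $k(w)\defeq(1+\alpha)e^{\alpha w}-1-w>0$. The inequality itself holds:
\begin{itemize}
\item $k$ is convex, so its minimiser $w_0$ satisfies $(1+\alpha)\alpha e^{\alpha w_0}=1$.
\item With $\alpha=1/(e-1)$ this gives $w_0=(e-1)\paren{2\log(e-1)-1}\approx 0.142$, and
\[
\min_{w\ge 0}k(w)=\tfrac{1}{\alpha}-1-w_0=e-2-w_0\approx 0.576>0 .
\]
\item Hence $G(w)\defeq e^{w/c}-we^w$ has $G(0)=1$ and $G'(w)=e^w k(w)>0$.
\item So $g=G-1\ge 0$ on $[0,\infty)$, which means $W_0(z)\ge(1-e^{-1})\log(1+z)\ge 0.6321\log(1+z)$. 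No separate small-$w$ case is needed.
\end{itemize}

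Your first idea also closes if you keep a lower bound on $e^w$ rather than using $e^w\ge 1$:
\begin{itemize}
\item For $w\ge\tfrac14$: $e^w m(w)\ge e^{1/4}\cdot 0.788>1$.
\item For $w\le\tfrac14$: $we^w\le e^{1/4}w<w/c\le e^{w/c}-1$.
\end{itemize}
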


For numerical simulations we want a stable estimate of the Lambert W function and we find that standard implementations in Python packages like SciPy may be insufficient and not parallelised. A better estimate is obtained via the iterations proposed by \citet{loczi2022guaranteed}.
\begin{theorem}[Fast Lambert W calculation {\citep[Thms.~2.4 and 2.9]{loczi2022guaranteed}}]
    For $z > 0$, let
    \begin{equation}
        \beta_0 = \begin{cases}
            \log(z) - \log\log(z) \quad \text{for}\quad z>e\\
            \exp(\log(z) - 1) \quad \text{for}\quad z<e
        \end{cases}
    \end{equation}
    and define the iteration
    \begin{equation}
        \beta_{n+1} = \frac{\beta_n}{1 + \beta_n}(1 + \log(z) - \log(\beta_n))\,.
    \end{equation}
    Then
    \begin{equation}
        0 < \beta_n - \LambW{z} < \max\left(0.32^{(2^n)}, \frac{1}{3}0.633^{(2^n)}\right)\,.
    \end{equation}
\end{theorem}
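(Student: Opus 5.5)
Since the statement is quoted from \citet[Thms.~2.4 and 2.9]{loczi2022guaranteed}, my primary plan is to reduce it to those two theorems. I would treat $z>e$ (Thm.~2.4, initial value $\beta_0=\log z-\log\log z$) and $0<z<e$ (Thm.~2.9, initial value $\beta_0=\exp(\log z-1)=z/e$) separately. In each case I would match the recursion $\beta_{n+1}=\frac{\beta_n}{1+\beta_n}(1+\log z-\log\beta_n)$ and the two constants $0.32$ and $\tfrac13 0.633$ to the source, so that the displayed two-sided bound is just the maximum of the two case bounds.

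To make the argument self-contained, I would rederive the estimates from the one-step map $f(\beta)\defeq\frac{\beta}{1+\beta}(1+\log z-\log\beta)$. Three facts drive everything.
\begin{enumerate}
\item Fixed point. Using $\log z-\log W_0(z)=W_0(z)$ from \cref{lambert-exponential}, one gets $f(W_0(z))=W_0(z)$.
\item Quadratic convergence. The numerator of $f'(\beta)$ simplifies to $\log z-\log\beta-\beta$, which vanishes at $W_0(z)$. Hence $W_0(z)$ is a critical point, and a second-order Taylor expansion gives $|f(\beta)-W_0(z)|\le C_z(\beta-W_0(z))^2$ near the fixed point.
\item Induction. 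Setting $e_n\defeq|\beta_n-W_0(z)|$, I would prove by induction that $C_z e_{n+1}\le (C_z e_n)^2$, and hence $e_n\le C_z^{-1}(C_z e_0)^{2^n}$.
\end{enumerate}
It would then remain to bound $C_z e_0$ uniformly. The aim is $C_ze_0\le 0.32$ for $z>e$, using the classical asymptotics of $\log z-\log\log z$. For $z<e$ the target is $0.633$, with prefactor $\tfrac13$ coming from $C_z^{-1}$.

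The main obstacle is the sign $0<\beta_n-W_0(z)$. The computation above shows $f'>0$ for $\beta<W_0(z)$ and $f'<0$ for $\beta>W_0(z)$. So $W_0(z)$ is a \emph{maximum} of $f$, and every iterate with $n\ge1$ satisfies $\beta_n\le W_0(z)$, which appears to contradict the stated direction. My first step would therefore be to check Lóczi's exact formulation, namely which side his iterates approach from and whether his theorems bound $W_0(z)-\beta_n$ or $\beta_n-W_0(z)$. The two-sided bound holds as written only if the source's convention matches. Otherwise the magnitude estimate $|\beta_n-W_0(z)|<\max(0.32^{2^n},\tfrac13 0.633^{2^n})$ is what the argument above delivers, and the sign claim would need to be reversed.
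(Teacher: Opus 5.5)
Deferring to L\'oczi's Theorems~2.4 and~2.9 is all the paper does; it gives no argument beyond the citation. Your sign objection is correct, and it does not depend on any convention in the source, because your own computation already settles it. Write $w=W_0(z)$ and $f$ for the one-step map. Then $f(\beta)-\beta=\frac{\beta}{1+\beta}(\log z-\log\beta-\beta)>0$ for $0<\beta<w$, and $f(\beta)<w$ for every $\beta\neq w$. So once $\beta_0<w$, the iterates increase strictly and stay below $w$. Both starting values do lie below $w$. Using $\log z=w+\log w$, one gets $w-\beta_0=\log(1+\log w/w)\in(0,\log(1+1/e)]$ for $z>e$ (where $w>1$), and $w-\beta_0=w(1-e^{w-1})>0$ for $0<z<e$ (where $0<w<1$). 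Hence $\beta_n-W_0(z)<0$ for \emph{every} $n\ge 0$, including $n=0$. The displayed lower bound is false as printed. The true statement, and presumably the source's, is $0<W_0(z)-\beta_n<\max(0.32^{2^n},\tfrac13 0.633^{2^n})$. You should state this correction outright rather than leave it conditional on what the source says.

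Your self-contained derivation of the constants does not work as packaged. Take $C_z$ to be the Lagrange constant $\frac12\sup|f''|$ over $[\beta_0,w]$. For $z>e$ this gives $C_z<1$ (e.g.\ $C_z\to\frac14$ as $z\to e$). Then $C_ze_0\le0.32$ does not imply $C_z^{-1}(C_ze_0)^{2^n}\le0.32^{2^n}$. As $z\to0^+$, one has $\beta_0\approx w/e$, $|f''(\beta_0)|\approx e/w$ and $e_0\approx(1-1/e)w$. So $C_ze_0\to(e-1)/2\approx0.86>0.633$, and the stated base is out of reach. Nor is $C_z^{-1}$ uniformly at most $\tfrac13$. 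An exact error identity avoids all of this. With $u=\beta/w$,
\[
w-f(\beta)=\frac{w\,(1-u+u\log u)}{1+wu},\qquad 0\le 1-u+u\log u\le(1-u)^2,
\]
where the last inequality holds because $(1-u)^2-(1-u+u\log u)=u(u-1-\log u)\ge0$. Thus the relative errors $\epsilon_n=1-\beta_n/w$ satisfy $\epsilon_{n+1}\le\epsilon_n^2$, and so $0<w-\beta_n\le(w\epsilon_0)\,\epsilon_0^{2^n-1}$.
\begin{itemize}
\item For $z>e$: $\epsilon_0\le w\epsilon_0=w-\beta_0\le\log(1+1/e)<0.32$, which gives the bound $0.32^{2^n}$.
\item For $0<z<e$: $\epsilon_0=1-e^{w-1}<1-1/e<0.633$, and $\sup_{0<w<1}w(1-e^{w-1})\approx0.1994<\tfrac13(1-1/e)\approx0.2107$. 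This gives $\tfrac13 0.633^{2^n}$, which is where the factor $\tfrac13$ actually comes from.
\end{itemize}
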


We will also use the following identity and bounds concerning the secondary branch $W_{-1}$ of the Lambda W function.
\begin{lemma}[Lambda W secondary branch properties]\label{app:lem:inverse-function-pair-LambertW_-1}
    Define
    \begin{equation}
        b: \mathbb R_{>0} \to \mathbb R_{>0}, \quad z\mapsto z\log\paren{e\frac{z+1}{z}}\,
    \end{equation}
    and
    \begin{equation}
        g: \mathbb R_{>0} \to \mathbb R_{>0}, \quad y\mapsto \defeq -\frac{y}{y + \LambWm*{-\exp(-y-1)y}}\,.
    \end{equation}
    where $\LambWm{\bullet}$ is the secondary branch of the Lambert-W function with input range $(-1/e, 0)$.
    Then, $b \circ g(y) = y$ and $g(y) \leq y$. 
\end{lemma}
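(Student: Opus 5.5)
The plan is a direct computation: set $w \defeq \LambWm*{-\exp(-y-1)y}$ and rewrite $b(g(y))$ using only the defining relation $w e^{w} = -y e^{-y-1}$. Once $b\circ g = \mathrm{id}$ is established, the inequality $g(y)\le y$ will follow almost for free.

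\emph{Well-definedness.} First I will check that $g$ is well defined and positive on $\mathbb R_{>0}$.

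\begin{itemize}
\item For $y>0$ we have $y e^{-y}\le 1/e$. Hence $-y e^{-y-1}\in[-e^{-2},0)\subset(-1/e,0)$, so $w$ exists and $w\le -1$.
\item Positivity of $g(y) = -y/(y+w)$ requires $y+w<0$.
\item If $y<1$, then $w\le -1<-y$, which gives $y+w<0$.
\item If $y\ge 1$, I use that $u\mapsto ue^u$ is strictly decreasing on $(-\infty,-1]$. Since $-y\le -1$, it suffices to compare values: $w e^w = -ye^{-y-1} > -ye^{-y} = (-y)e^{-y}$. The value at $w$ is the larger one, so $w<-y$.
\end{itemize}

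Thus $z\defeq g(y)>0$. This branch comparison is the only step needing care, so I expect it to be the main (minor) obstacle.

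\emph{The identity $b(g(y))=y$.} Algebra gives
\[
z+1 = \frac{w}{y+w}
\qtext{and}
\frac{z+1}{z} = -\frac{w}{y}>0.
\]
Taking logarithms in $we^{w}=-ye^{-y-1}$ (both sides negative) yields $\log(-w)+w=\log y - y - 1$. Hence
\[
1+\log\Big(\frac{z+1}{z}\Big) = 1+\log(-w/y) = -(w+y).
\]
Therefore
\[
b(z)=z\Big(1+\log\tfrac{z+1}{z}\Big) = \frac{-y}{y+w}\cdot\big(-(y+w)\big) = y.
\]

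\emph{The bound $g(y)\le y$.} For every $z>0$ we have $\log\big(e\tfrac{z+1}{z}\big)=1+\log(1+1/z)\ge 1$, so $b(z)\ge z$. Applying this at $z=g(y)$ and using the identity just proved gives $y=b(g(y))\ge g(y)$.
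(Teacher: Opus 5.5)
Your proposal is correct and follows the paper's proof. You use the same substitution $w = W_{-1}(-e^{-y-1}y)$, the identity $(g(y)+1)/g(y) = -w/y$, and the logarithm of $w e^{w} = -y e^{-y-1}$ to obtain $b(g(y)) = y$, and you additionally check $y+w<0$ (so $g$ really maps into $\mathbb R_{>0}$) and deduce $g(y)\le y$ from $b(z)\ge z$, both of which the paper's proof leaves unstated.
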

\begin{proof}
    Let $w \defeq \LambWm*{-\exp(-y-1)y}$. First, computing the argument of the logarithm we find:
    \begin{align}
        \frac{g(y)+1}{g(y)} = \frac{-y/(y+w) + 1}{-y/(y+w)} = \frac{-y + y+ w}{-y} = \frac{-w}{y}\,.
    \end{align}
    Using the definition of the Lambert-W function $\LambWm{x}\exp(\LambWm{x}) = x$ we find
    \begin{equation}
       -w = -\LambWm*{-\exp(-y-1)y} = \exp(-y-1)y\exp(-w)\,.
    \end{equation}
    Therefore,
    \begin{equation}
        \log(-w)  = \log(y) - y - 1 - w \quad\implies \quad \log\paren{\frac{g(y)+1}{g(y)}} = \log\paren{\frac{-w}{y}} = -y - 1 - w\,.
    \end{equation}
    Consequently,
    \begin{equation}
        b(g(y)) = \frac{-y}{y + w} \paren{\log\paren{\frac{g(y)+1}{g(y)}} + 1} = \frac{-y}{y + w}\paren{-y -1 -w + 1} = y\,.
    \end{equation}
\end{proof}
\section{Supplementary Experiment Details}\label{app:experiment-details}
All experiments were run using Python 3.12.12 on an Ubuntu 22.04.5 LTS server with a single NVIDIA A100 GPU (80 GB memory, CUDA 13.0, driver version 580.126.09), two $48$-core AMD EPYC 7V13 processors, and 220 GB RAM.

\subsection{Supplementary details for \cref{sec:biggan}}
The \cref{tab:biggan} experiment was run using PyTorch 2.10.0.dev20251019+cu129. 
We used CUDA events to time the forward pass through each (approximate) \texttt{attention-matrix} layer after initializing the GPU with $20$ warm-up batches. The implementations and settings for all methods other than \catt were taken from \url{https://github.com/microsoft/thinformer}, and our experiment builds on this open-source repository.
\subsection{Supplementary details for \cref{sec:t2t}}
The \cref{tab:t2t} experiment was run using PyTorch 2.10.0.dev20251019+cu129. 
Timings were based on the first 50 batches of the ImageNet 2012 validation set. We used CUDA events to time the forward pass through each (approximate) \texttt{attention\_layer} after initializing the GPU with $10$ warm-up batches. The implementations and settings for all methods other than \catt were taken from \url{https://github.com/microsoft/thinformer}, and our experiment builds on this open-source repository.
\subsection{Supplementary details for \cref{sec:kvcache}}
The \cref{tab:longbenche} experiment was run using PyTorch 2.8.0+cu128 and \texttt{kvpress} version 0.3.0. The implementations and settings for BalanceKV and Uniform were taken from \url{https://github.com/ksheth96/BalanceKV}. The implementations and default settings of all other methods save \catt were taken from \url{https://github.com/NVIDIA/kvpress}, and our experiment builds on this open-source repository.

Our KV cache compression experiments focus on the memory reduction benefits of compression. When memory is the primary bottleneck (as is often the case on resource-constrained devices or for especially large contexts), one is typically willing to incur additional runtime costs for improved memory efficiency. Fortunately, we find that the \compresskv overhead is small relative to leading alternatives. For example, to process $32$k tokens with $75\%$ compression, the prefill time with SnapKV vs.\ \compresskv is $3.38$s vs.\ $3.43$s (2\% overhead).
\subsection{Supplementary details for \cref{flash}}\label{supp_flash}
The \cref{fig:flash} experiment was run using PyTorch 2.12.0+cu130. 
For each $n$, we reported the median runtime and mean approximation error over $100$ replicates of the experiment. 
We repeat this experiment with varying rank and bin count parameters $r \in \{64, 128, 256, 512\}$ and $B \in \{2, 16, 64\}$ and display the runtime vs.\ accuracy curves in \cref{fig:ablation-r-B-selection}. %
\begin{figure}[h!]
    \centering
    \includegraphics[width=0.8\linewidth]{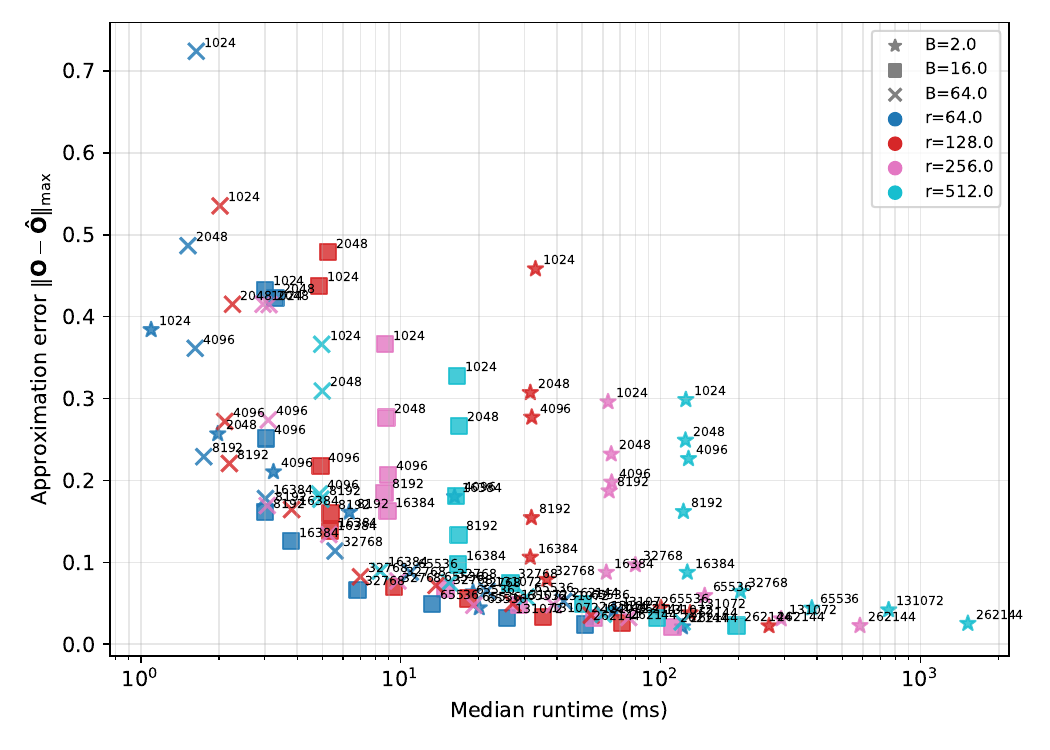}
    \vspace{-0.4cm}
    \caption{Time-accuracy tradeoff curves for \catt with varying rank and bin count parameters $(r,B)$.}
    \label{fig:ablation-r-B-selection}
\end{figure}

\end{document}

